\documentclass[letterpaper]{article} 
\usepackage{aaai25}  
\usepackage{times}  
\usepackage{helvet}  
\usepackage{courier}  
\usepackage[hyphens]{url}  
\usepackage{graphicx} 
\urlstyle{rm} 
\usepackage{natbib}  
\usepackage{caption} 
\frenchspacing  
\setlength{\pdfpagewidth}{8.5in}  
\setlength{\pdfpageheight}{11in}  
%

%
\pdfinfo{
/TemplateVersion (2025.1)
}

\setcounter{secnumdepth}{2} 


\usepackage[dvipsnames]{xcolor}
\usepackage{amsfonts}       
\usepackage{amsmath}
\usepackage{amssymb} 
\usepackage{mathtools} 
\usepackage{mathrsfs}
\usepackage{nicefrac}       
\usepackage{bbold} 

\usepackage{enumitem}
\usepackage{subcaption} 
\usepackage{booktabs}       

\usepackage{stmaryrd} 
\usepackage{amsthm}

\usepackage{thmtools, thm-restate} 
\usepackage{mdframed} 
\usepackage{multirow} 

\usepackage{nccmath}

\usepackage{cleveref} 

\usepackage[linesnumbered,ruled,vlined]{algorithm2e}
\usepackage{algorithmic}    

\SetKwInput{KwInput}{Input}               
\SetKwInput{KwInitialization}{Initialization} 
\SetKwInput{KwOutput}{Output}

\usepackage{pifont} 
\newcommand{\cmark}{\ding{51}}%
\newcommand{\xmark}{\ding{55}}%

\usepackage{makecell, multirow}

\usepackage{soul} 

\definecolor{taborange}{HTML}{FF7F0e}
\definecolor{tabred}{HTML}{d62728}
\definecolor{tabgreen}{HTML}{2ca02c}
\newcommand{\tcmv}[1]{\textcolor{tabgreen}{#1}}
\newcommand{\tcmo}[1]{\textcolor{taborange}{#1}}
\newcommand{\tcmr}[1]{\textcolor{tabred}{#1}}


\newtheorem*{theorem*}{Theorem}
 
\newtheorem{theorem}{Theorem}
 
\newtheorem{proposition}{Proposition} 
\newtheorem{remark}{Remark}
\newtheorem{corollary}{Corollary}

\newtheorem{property}{Property}

\newtheorem{takeaway}{Take-away}
\newtheorem{drawback}{Drawback}

\crefname{algo}{Algorithm}{Algorithms}
\crefname{model}{Model}{Models}
\crefname{lemma}{Lemma}{Lemmas}
\crefname{fact}{Fact}{Facts}
\crefname{theorem}{Theorem}{Theorems}
\crefname{corollary}{Corollary}{Corollaries}
\crefname{claim}{Claim}{Claims}
\crefname{example}{Example}{Examples}
\crefname{problem}{Problem}{Problems}
\crefname{definition}{Definition}{Definitions}
\crefname{assumption}{Assumption}{Assumptions}

\crefname{subsection}{Subsection}{Subsections}
\crefname{section}{Section}{Sections}
\crefname{algorithm}{Algorithm}{Algorithms}
\crefname{algocf}{alg.}{algs.}
\Crefname{algocf}{Algorithm}{Algorithms}
\crefname{proposition}{Proposition}{Propositions}
\crefname{exemple}{Exemple}{Examples}
\crefname{remark}{Remark}{Remarks}
\crefname{property}{Property}{Properties}
\crefname{inequality}{Inequality}{Inequalities}


\DeclareMathOperator{\e}{e}

\newcommand{\ffrac}[2]{\ensuremath{\frac{\displaystyle #1}{\displaystyle #2}}}

\renewcommand{\SS}{\mathbf{S}}
\newcommand{\SSigma}{\mathbf{\Sigma}}
\newcommand{\UU}{\mathbf{U}}
\newcommand{\VV}{\mathbf{V}}
\newcommand{\EE}{\mathbf{E}}

\renewcommand{\AA}{\mathbf{A}}
\newcommand{\BB}{\mathbf{B}}

\newcommand{\XX}{\mathbf{X}}
\newcommand{\QQ}{\mathbf{Q}}

\newcommand{\PP}{\mathbf{P}}

\newcommand{\qqquad}{\qquad\qquad}

\newcommand{\E}{\mathbb{E}}
\newcommand{\N}{\mathbb{N}}
\newcommand{\R}{\mathbb{R}}

\newcommand{\lnrm}{\left \|} 
\newcommand{\rnrm}{\right \|} 

\newcommand{\vertiii}[1]{{\vert\kern-0.25ex\vert\kern-0.25ex\vert #1 
    \vert\kern-0.25ex\vert\kern-0.25ex\vert}}

\newcommand{\Proba}[1]{\mathbb{P} \left[#1\right]} 

\newcommand{\FullExpec}[1]{\E \left[#1\right]} 
\newcommand{\fullexpec}[1]{\E [#1]} 


\newcommand{\pdtscl}[2]{\langle#1,#2\rangle}  
\newcommand{\SqrdNrm}[1]{ \lnrm #1\rnrm^2} 
\newcommand{\sqrdnrm}[1]{ \| #1\|^2} 
\newcommand{\bigpar}[1]{\left( #1 \right)} 

\newcommand{\Tr}[1]{\mathrm{Tr}\bigpar{#1}}

\newcommand{\Diag}[1]{\mathrm{Diag}\bigpar{#1}}

\newcommand{\Id}{\mathbf{I}}
\newcommand{\rank}[1]{\mathrm{rank}\bigpar{#1}}

\newcommand{\frob}{\mathrm{F}}

\newcommand{\iN}{{i=1}^N}
\newcommand{\OneToN}{\{1, \dots, N\}}


\title{In-depth Analysis of Low-rank Matrix Factorisation in a Federated Setting}
\author{
Constantin Philippenko, Kevin Scaman, Laurent Massoulié \\
}

\affiliations{
    Inria Paris - Département d’informatique de l’ENS, PSL Research University \\
    firstname.lastname@inria.fr
}

\begin{document}

\maketitle

\begin{abstract}
    We analyze a distributed algorithm to compute a low-rank matrix factorization on $N$ clients, each holding a local dataset $\mathbf{S}^i \in \mathbb{R}^{n_i \times d}$, mathematically, we seek to solve $min_{\mathbf{U}^i \in \mathbb{R}^{n_i\times r}, \mathbf{V}\in \mathbb{R}^{d \times r} } \frac{1}{2} \sum_{i=1}^N \|\mathbf{S}^i - \mathbf{U}^i \mathbf{V}^\top\|^2_{\text{F}}$. Considering a power initialization of $\mathbf{V}$, we rewrite the previous smooth non-convex problem into a smooth strongly-convex problem that we solve using a parallel Nesterov gradient descent potentially requiring a single step of communication at the initialization step. For any client $i$ in $\{1, \dots, N\}$, we obtain a global $\mathbf{V}$ in $\mathbb{R}^{d \times r}$ common to all clients and a local variable $\mathbf{U}^i$ in $\mathbb{R}^{n_i \times r}$. We provide a linear rate of convergence of the excess loss which depends on $\sigma_{\max} / \sigma_{r}$, where $\sigma_{r}$ is the $r^{\mathrm{th}}$ singular value of the concatenation $\mathbf{S}$ of the matrices $(\mathbf{S}^i)_{i=1}^N$. This result improves the rates of convergence given in the literature, which depend on $\sigma_{\max}^2 / \sigma_{\min}^2$. We provide an upper bound on the Frobenius-norm error of reconstruction under the power initialization strategy. We complete our analysis with experiments on both synthetic and real data.
\end{abstract}

\begin{links}
    \link{Code}{https://github.com/philipco/matrix_factorization}
\end{links}

\section{Notation}

For $\mathbf{A}$ a matrix in $\R^{n \times d}$, we note $\rank{\mathbf{A}}$ its rank, $\lambda_1(\mathbf{A}) \geq \dots \geq \lambda_{\rank{\mathbf{A}}}(\mathbf{A})$ are its decreasing eigenvalues, $\sigma_1(\mathbf{A}) \geq \dots \geq \sigma_{\rank{\mathbf{A}}}(\mathbf{A}) \geq 0$ are its positive and decreasing singular values. More specifically, we note $\lambda_{\max}(\mathbf{A}),\lambda_{\min}(\mathbf{A})$ (resp. $\sigma_{\max}(\mathbf{A}),\sigma_{\min}(\mathbf{A})$) the biggest/smallest eigenvalue (resp. singular value) of $\mathbf{A}$. We define the condition number of $\mathbf{A}$ as $\kappa(\mathbf{A}) := \sigma_{\max}(\mathbf{A}) / \sigma_{\min}(\mathbf{A})$. Furthermore, we note $\sqrdnrm{\mathbf{A}}_\frob := \sqrt{\Tr{\mathbf{A} \mathbf{A}^\top}} = (\sum_{i=1}^{\min\{n,d\}} \sigma_i^2(\mathbf{A}))^{1/2}$ the Frobenius norm and $\|\mathbf{A}\|_2 := \sqrt{\lambda_{\max}(\mathbf{A}^\top \mathbf{A})} = \sigma_{\max}(\mathbf{A})$ the operator norm induced by the $2$-norm. 
The group of orthogonal matrices is denoted $\mathcal{O}_d(\R)$. For $a,b \in \R$, we also denote $a \wedge b := \min(a,b)$.

\section{Introduction}

The problem of low-rank matrix factorization is widely analyzed in machine learning \citep[e.g.][]{deshpande2006adaptive,achlioptas2007fast,liberty2007randomized,nguyen2009fast,rokhlin2010randomized,halko2011finding,witten2015randomized,tropp2019streaming,tropp2023randomized}. Indeed, several key challenges can be reduced to it, for instance: clustering \citep{li2006relationships}, features learning \citep{bisot2017feature}, dictionary learning \citep{mensch2016dictionary}, anomaly detection \citep{tong2011non}, denoising \citep{wilson2008speech}, or matrix completion \citep{jain2013low}. Let $\SS$ in $\R^{n \times d}$ and $r \in \N^*$, the study of the low-rank matrix factorization (MF) problem corresponds to find $\UU \in \R^{n\times r}$ and $ \VV \in \R^{d \times r}$ minimizing:
\begin{align}
\tag{MF}
\label{eq:low_rank_factorization}
    \frac{1}{2} \SqrdNrm{\SS - \UU \VV^\top}_\frob := F(\UU, \VV) \leq \epsilon_{\min} + \epsilon \,,
\end{align}
where $\epsilon_{\min}$ is the minimal achievable error w.r.t. the Forbenius-norm, $\epsilon$ is the error induced by the algorithm, and $r$ denote the \emph{latent dimension}. 
The \citet{eckart1936approximation} theorem shows that $\sum_{i >r} \sigma_i^2$ (we omit to indicate the matrix $\SS$ for its eigen/singular values) is the minimal Frobenius-norm error
when approximating $\SS$ with a rank-$r$ matrix, and it is $\sigma_{r+1}$ for the $2$-norm \citep{mirsky1960symmetric}.
\begin{proposition}
\label{prop:lower_bound_with_proba}
    We have for the Frobenius-norm $\min_{\UU, \VV \in \R^{n \times r} \times \R^{d \times r}} \SqrdNrm{\SS - \UU \VV^\top}_\frob = \sum_{i>r} \sigma_i^2$ and for the $2$-norm $\min_{\UU, \VV \in \R^{n \times r} \times \R^{d \times r}}  \|\SS - \UU \VV^\top \|_2 = \sigma_{r+1}$.
\end{proposition}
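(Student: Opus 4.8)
The plan is to first recognize that the bilinear parametrization $(\UU,\VV)\mapsto \UU\VV^\top$ sweeps out exactly the set of matrices of rank at most $r$: any product $\UU\VV^\top$ with $\UU\in\R^{n\times r}$ and $\VV\in\R^{d\times r}$ has rank at most $r$, and conversely any $\MM$ with $\rank{\MM}\le r$ admits a thin factorization $\MM=\UU\VV^\top$ of that shape, e.g.\ by reading off its SVD and padding with zero columns. Hence the two minimization problems are equivalent to $\min_{\rank{\MM}\le r}\SqrdNrm{\SS-\MM}_\frob$ and $\min_{\rank{\MM}\le r}\|\SS-\MM\|_2$, and it suffices to establish the Eckart--Young--Mirsky bounds over rank-constrained matrices.

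For the upper bound, I would exhibit an explicit minimizer from the SVD $\SS=\sum_{i=1}^{\rank{\SS}}\sigma_i u_i v_i^\top$. Taking the truncation $\MM_r:=\sum_{i\le r}\sigma_i u_i v_i^\top$, which has rank at most $r$, gives $\SS-\MM_r=\sum_{i>r}\sigma_i u_i v_i^\top$; since the families $(u_i)$ and $(v_i)$ are orthonormal, the nonzero singular values of this residual are exactly the $(\sigma_i)_{i>r}$, so $\SqrdNrm{\SS-\MM_r}_\frob=\sum_{i>r}\sigma_i^2$ and $\|\SS-\MM_r\|_2=\sigma_{r+1}$. This shows the two minima are \emph{at most} the claimed values.

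The substantive part is the matching lower bound, i.e.\ that no rank-$r$ matrix can do better, and I would derive both cases at once from Weyl's inequality for singular values, $\sigma_{i+j-1}(\AA+\BB)\le\sigma_i(\AA)+\sigma_j(\BB)$. Applying it with $\AA=\SS-\MM$ and $\BB=\MM$, and using that $\rank{\MM}\le r$ forces $\sigma_{r+1}(\MM)=0$, yields $\sigma_{i+r}(\SS)\le\sigma_i(\SS-\MM)$ for every $i\ge 1$. Summing the squares gives $\SqrdNrm{\SS-\MM}_\frob=\sum_{i}\sigma_i^2(\SS-\MM)\ge\sum_{i\ge 1}\sigma_{i+r}^2(\SS)=\sum_{j>r}\sigma_j^2$, while taking $i=1$ gives $\|\SS-\MM\|_2=\sigma_1(\SS-\MM)\ge\sigma_{r+1}(\SS)$, matching the upper bounds and completing both identities.

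The main obstacle is thus proving or invoking Weyl's inequality in the form above; for the operator norm a more self-contained route avoids it by a dimension count, namely that $\Ker{\MM}$ has dimension at least $d-r$ and therefore meets the $(r+1)$-dimensional span of $v_1,\dots,v_{r+1}$ in a nonzero vector, which can be normalized to a unit vector $w$ with $\MM w=0$, so that $\|(\SS-\MM)w\|=\|\SS w\|\ge\sigma_{r+1}$ and hence $\|\SS-\MM\|_2\ge\sigma_{r+1}$. I would expect the Frobenius lower bound to be the genuinely delicate step, since the clean Weyl-based argument conceals the orthogonality bookkeeping that a direct expansion of $\SqrdNrm{\SS-\MM}_\frob$ would otherwise force one to track.
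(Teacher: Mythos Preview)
Your proposal is correct and complete: the reduction to a rank-constrained problem, the explicit SVD truncation for the upper bound, and the Weyl-inequality argument (with the alternative dimension-counting route for the $2$-norm) are all sound. The paper, however, does not supply a proof of this proposition at all; it simply attributes the Frobenius case to \citet{eckart1936approximation} and the spectral case to \citet{mirsky1960symmetric}, treating both as classical background results. So rather than taking a different route, you have filled in what the paper leaves as a citation.
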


\Cref{prop:lower_bound_with_proba} corresponds to the thin SVD decomposition \citep{eckart1936approximation} and to keeping the $r$ largest singular values, the error is thus determined by the $n \wedge d - r$ smallest singular values of the spectrum of $\SS$.

Contemporary machine learning problems frequently involve data spread across multiple clients, each holding a subset of the dataset, thereby introducing an additional layer of complexity.
In this paper, we consider the federated setting \citep{konecny_federated_2016,mcmahan_communication-efficient_2017} where a client $i$ holds a dataset $\SS^i$ in $\R^{n_i \times d}$ with $n_i$ in $\N$ rows and $d$ constant features across the $N$ in $\N$ clients. We want to factorize for any client $i$ its dataset based on a global shared variable $\VV$ in $\R^{d \times r}$ and a local personalized variable $\UU^i$ in $\R^{n_i \times r}$, \Cref{eq:low_rank_factorization} can thus be rewritten as minimizing:
\begin{align}
\tag{Dist. MF}
\label{eq:dist_MF}
    \frac{1}{2} \sum_\iN \SqrdNrm{\SS^i - \UU^i \VV^\top}_\frob := \sum_\iN F^i(\UU^i, \VV)\,. 
\end{align} 

We note $\SS$ and $\UU$ the vertical concatenation of matrices $(\SS^i, \UU^i)_\iN$. We write the SVD of $\SS = \UU_* \mathbf{\SSigma} \VV_*^\top$ where $\UU_*$ in $\R^{n \times n}$ ($n = \sum_\iN n_i$) is the left basis vectors s.t. $\UU_*^\top \UU_* = \Id_r$,  $\VV_*$ in $\R^{d \times d}$ is the right basis vectors s.t. $\VV_*^\top \VV_* = \Id_r$, and  $\mathbf{\SSigma}= \Diag{\lambda_1(\SS), \dots, \lambda_{n \wedge d}(\SS)}$ in $\R^{n \times d}$ contains the singular value of $\SS$. 
We consider that there exists a true low-rank matrix $\XX$ in $\R^{n \times d}$ and a white noise $\EE$ in $\R^{n \times d}$ s.t. $\SS = \XX + \EE$, which implies $r_* = \rank{\XX} \leq \rank{\SS} \leq \min(n, d)$. The $r_*$ first eigenvalues of $\SS$ correspond to the signal (potentially noised if $\EE \neq 0$)  and the $n \wedge d - r_*$ last correspond to the white noise from $\EE$. 

\begin{remark}
    In distributed settings, the errors $\epsilon_{\min}$ and $\epsilon$ are determined by the spectrum of $\SS$, and not by the spectrum of clients' matrices $(\SS_i)_{i=1}^N$.
\end{remark}

Our work progresses in two directions: (1) offering new theoretical insights on matrix factorization and (2) developing a robust algorithm suited for federated environments, the challenge being to design a distributed algorithm that computes a low-rank matrix factorization of $\SS$ with minimal communication/computation overhead, while guaranteeing a linear convergence towards the reconstruction error~$\epsilon_{\min} + \epsilon$. 
To tackle this challenge, we combine a global distributed randomized power iteration \citep{halko2011finding} with local gradient descents on each client, bridging these two lines of research.

The randomized power iteration method \citep{halko2011finding,hardt2014noisy,li2021communication,saha2023matrix} is one of the main approaches used in practice\footnote{For instance, the \texttt{TruncatedSVD} class of Scikit-learn \citep{scikit-learn} or the \texttt{svd\_lowrank} function of PyTorch \citep{paszke2019pytorch} are based on the work of \citet{halko2011finding}.}  to compute low-rank matrix factorizations, therefore.
Theoretically, the authors have provided in a \emph{centralized} setting, solely \emph{if taking  $r \leq r_* +2$}, a bound on the \emph{$2$-norm for $\alpha$ in $\N$} \emph{proportional to $r$}. In this work, we are interested in giving \textit{new results in the \textbf{federated setting} on the \textbf{Frobenius-norm} and \textbf{$r$ in $\{1, \dots, n_i \wedge d\}$}}.  Indeed, using the existing results on the $2$-norm to derive results on the Frobenius norm would result in an undesirable $\sqrt{d}$-factor to both $\epsilon_{\min}$ and $\epsilon$. Thus, the results on the two different norms are not directly comparable and ask for a new~analysis.

On the other hand, literature based on gradient descent \citep{jain2017global,zhu2019distributed,ye2021global,jain2013low,ward2023convergence,gu2023low} usually provides results on the Frobenius norm with linear convergence rates depending on $\kappa^{2}(\SS)$ \citep{ward2023convergence}, which might be arbitrarily large, thus hindering the convergence. The primary challenge with this approach lies in the non-convexity of $F$, rendering the solution space for \Cref{eq:low_rank_factorization} infinite and impeding its theoretical analysis. Numerous research endeavors are directed towards these algorithms with the aims of (1) enhancing the theoretical convergence rate, (2) writing elegant and concise proofs, and (3) understand the underlying mechanism that allows for convergence. More generally, advancements in understanding \Cref{eq:low_rank_factorization} can significantly enrich our comprehension of other non-convex problems such as matrix sensing or deep learning optimization \citep{du2018algorithmic}.

In \Cref{sec:related_work}, we conduct a short overview of the related work.
In \Cref{sec:algo_gd}, we propose and analyze a parallel algorithm derived from \citet{halko2011finding}, we explain the associated computational/communicational trade-offs, exhibit a linear rate of convergence, and provide an upper bound in probability of both the condition number $\kappa(\SS)$ and the Frobenius-norm error.  In \Cref{sec:experiments} we illustrate our theoretical findings with experiments on both synthetic and real data, and finally, in \Cref{sec:conclusion}, we conclude by highlighting the main takeaways of this article and by listing a few open directions.

\paragraph{Contributions.} We make the following contributions.
\begin{itemize}
    \item We solve the low-rank matrix factorization problem with a local personalized variable $\UU^i$ in $\R^{n_i \times r}$ and a global shared $\VV$ in $\R^{r \times r}$ s.t. $\SS^i \approx \UU^i \VV^\top$ for any client $i$ in $\OneToN$. Following \citet{halko2011finding}, we use a power method to initialize $\VV$, then we run a gradient descent on each client to compute~$\UU^i$.
    \item We provide a novel result regarding the Frobenius-norm of $\mathbf{S} - \mathbf{UV}^\top$. Our result is non-trivial as it was lacking in \citet[][see Remark 10.1]{halko2011finding}. This result provides new insight on \Cref{eq:low_rank_factorization} and can not be compared for $\alpha \geq 1$ to \citet{halko2011finding,hardt2014noisy} or \citet{li2021communication} which analyze different quantities.
    \item In the distributed setting, under low noise conditions, unlike \citet{li2021communication}, we can achieve a finite number of communications in $\Omega\bigpar{\frac{\log( \sigma_{\max} d r_*^2  \epsilon^{-1})}{\log(\sigma_{r_*}) - \log(\sigma_{r_*+1})}}$. And potentially a single communication stage is enough, i.e., our algorithm is possibly \textit{embarrassingly parallel}.
    \item Algorithmically, our method involves sampling different random Gaussian matrices $\Phi$ to obtain better condition numbers, this sampling allows rapid gradient descent, independently of $\sigma_{\min}$. It increases the average number of exchanges by a factor $m$, yet it ensures convergence almost surely. We are the first to propose such a result for~\eqref{eq:low_rank_factorization} problems.
    \item Compared to existing literature on (distributed) gradient descent \citep[e.g.][]{zhu2019distributed,ward2023convergence} our approach surpasses them both in terms of communicational and computational complexity. Our guarantee of convergence are stronger and our analysis bridges the gap between works on distributed gradient descent and on the power method. Besides, our proof of convergence is simpler as it simply requires to show that the problem is smooth and strongly-convex.
    \item All the theory from strongly-convex gradient descent apply to our problem. Thereby, we easily introduce acceleration -- being the first to do so -- outperforming the convergence rates for simple gradient descent. 
\end{itemize}

\section{Related Works}
\label{sec:related_work}

\begin{algorithm}[tbp]
    \caption{Distributed Randomized Power Iteration}
    \label{algo:distributed_power_iteration}
    \LinesNumberedHidden
    \DontPrintSemicolon
    \KwInput{Number of iteration $\alpha$ in $\N$.}
    \KwOutput{Global variable $\VV$ in $\R^{d \times r}$}
    \For{each client $i$ in $\OneToN$}{
    Generate a Gaussian matrix $\Phi^i$ in $\R^{n_i \times r}$. \;
    Compute $\VV^i = (\SS^i)^\top \Phi^i$. \;
    Share $\VV^i$. \;
    }
    Compute $\VV = \sum_\iN \VV^i$ using a secure aggregation protocol. \;
        Share $\VV$ with all clients.  \;
    \For{$a \in \{1, \dots, \alpha \}$} {
        \For{each client $i$ in $\OneToN$}{
            Compute $\VV^i = (\SS^i)^\top \SS^i \VV$. \;
            Share $\VV^i$. \;
        }
        Compute $\VV = \sum_\iN \VV^i$ using a secure aggregation protocol. \;
        Share $\VV$ with all clients.  \;
    }
\end{algorithm}

In this Section, we describe the distributed randomized power iteration and review related works on gradient descent.

\textbf{The power method} \citep{mises1929praktische} is a simple iterative algorithm used to approximate the dominant eigenvalue and the corresponding eigenvector of a square matrix by repeatedly multiplying the matrix by a vector and normalizing the result. To improve scalability to large problems, \citet{halko2011finding} have proposed \textit{Randomized SVD} a technique that sample a random Gaussian matrix $\Phi$ in $\R^{n \times r}$, multiply it by $(\SS^\top \SS)^\alpha \SS^\top$ ($\alpha$ in $\N$) to obtain a matrix $\VV$. While authors never mention the distributed setting, the adaptation is straightforward and we give the pseudo-code to compute $\VV$ in \Cref{algo:distributed_power_iteration}. Note that to avoid storing a $d \times d$ matrix, we compute the product $(\SS^\top \SS)^\alpha \SS^\top \Phi$ from right to left, resulting in the computation of a $d \times r$ matrix. The total number of computational operations is $(2\alpha + 1) ndr + (\alpha + 1) dr$.
Next, the authors either construct a QR factorization of $\VV$ to obtain an orthonormal matrix factorization of $\SS$, or compute its SVD decomposition to get the singular values.

\begin{remark}[Secure aggregation]
The distributed power iteration (\Cref{algo:distributed_power_iteration}) requires a federated setting with a central server that can perform a \emph{secure aggregation} \citep{bonawitz2017practical} of the local $(\VV^i)_{\OneToN}$. Extending to the decentralized setting is an interesting but out of scope direction.
\end{remark} 

\textbf{The idea of using gradient descent} to solve low-rank matrix factorization can be traced back to \citet{burer2003nonlinear,burer2005local}. Several theoretical results and convergence rate have been provided in the recent years \citep[][]{zhao2015nonconvex,tu2016low,zheng2016convergence,jain2017global,chen2019gradient,du2018algorithmic,ye2021global,jiang2023algorithmic}. \citet{ward2023convergence} has a rate depending on $\kappa^{-2}(\SS)$ using a power initialization with $\alpha = 0$ but only in the case of a low-rank matrix, for which it is known that we can have in fact a zero error (\Cref{prop:lower_bound_with_proba}). In contrast, our analysis in a strongly-convex setting allows, first, to plugging in any faster algorithms than simple gradient descent and thus to obtain a faster rate depending on $\kappa^{-1}$, and second, holds in the more general setting of a full-rank matrix. 

Note that \citet{ward2023convergence} do not mention the FL setting, however, their work naturally adapts itself to this setting as we have $\frac{1}{2} \sum_\iN \SqrdNrm{\SS^i - \UU^i \VV^\top}_\frob = \frac{1}{2} \| \SS - \UU \VV^\top \|_\frob^2$. On the other hand, it results in a high communication cost, given that it requires sharing at each iteration $k$ in $\N$ the matrix $\VV_k$ to compute the gradient (but does not require sharing $\UU_k$ which remains local), resulting in a communication cost of $O(K rd)$, where $K$ is the total number of iterations/communications.  Besides, $K$ depends on the condition number and thereby might be very large. On the contrary, our work and the one of \citet{halko2011finding} requires small (and potentially a single) communication steps.

Most of the articles extending gradient descent to the distributed setting \citep{hegedHus2016robust,zhu2019distributed,hegedHus2019decentralized,li2020global,li2021federated,wang2022federated} consider an approach minimizing the problem over a global variable $\VV$ and personalized ones $(\UU_i)_\iN$. This setup creates a global-local matrix factorization: $\VV$ contains information on features shared by all clients (item embedding), while $(\hat{\UU}^i)_\iN$ captures the unique characteristics of client $i$ in $\OneToN$ (user embeddings). We build on this approach in the next Section. 

\section{Practical Algorithm and Theoretical Analysis}
\label{sec:algo_gd}

In the next Subsections, we propose and analyze a parallel algorithm derived from \citet{halko2011finding} that combines power method and gradient descent.

\subsection{Combining Power Method with Parallel Gradient Descent}

In order to solve \eqref{eq:dist_MF}, a natural idea is to fix the matrix $\VV$ shared by all clients, and then compute locally the exact solution of the least-squares problem or run a gradient descent \emph{with respect to the variable $\UU$}. All computation are thus performed solely on clients once $\VV$ is initialized. This requires a number of communications equal to $\alpha + 1$ as the only communication steps occur when initializing $\VV$. In the low-noise or the low-rank regimes, taking $\alpha = 0$ allows to obtain $\epsilon = 0$ (\Cref{cor:bound_with_r_star}) and results to an \textit{embarrassingly parallel} algorithm. Besides, parallelizing the computation allows to go from a computational cost in $O(d r \sum_\iN n_i)$ to $O( d r n_i)$ which is potentially much smaller. Below, we give the optimal solution of \eqref{eq:dist_MF} for a fixed $\VV$. 

\begin{proposition}[Optimal solution for a fixed matrix $\VV$]
\label{prop:optimal_sol}
    Let $\VV \in \R^{d \times r}$ be a \emph{fixed} matrix, then \Cref{eq:dist_MF} is minimized for $ \hat{\UU}^i = \SS^i \VV (\VV^\top \VV)^{\dagger}$.
\end{proposition}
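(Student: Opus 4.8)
The plan is to exploit the fact that \eqref{eq:dist_MF} separates completely across clients: the variable $\UU^i$ appears only in the term $F^i(\UU^i, \VV) = \frac{1}{2}\SqrdNrm{\SS^i - \UU^i \VV^\top}_\frob$, so for a fixed $\VV$ it suffices to minimise each $F^i$ independently over $\UU^i \in \R^{n_i \times r}$. First I would observe that, as a function of $\UU^i$, each $F^i$ is a convex quadratic: expanding the squared Frobenius norm gives $F^i(\UU^i,\VV) = \frac{1}{2}\Tr{\SS^i (\SS^i)^\top} - \Tr{\UU^i \VV^\top (\SS^i)^\top} + \frac{1}{2}\Tr{\UU^i \VV^\top \VV (\UU^i)^\top}$, whose Hessian in the vectorised variable is $\VV^\top \VV \otimes \Id_{n_i} \succeq 0$. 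Hence every stationary point is a global minimiser, and it is enough to solve the first-order optimality condition.

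Second, I would differentiate. Using $\nabla_{\UU^i}\tfrac{1}{2}\SqrdNrm{\SS^i - \UU^i\VV^\top}_\frob = -(\SS^i - \UU^i \VV^\top)\VV = \UU^i \VV^\top\VV - \SS^i \VV$, setting the gradient to zero yields the normal equations $\UU^i (\VV^\top\VV) = \SS^i \VV$. It then remains to check that the candidate $\hat{\UU}^i = \SS^i \VV (\VV^\top\VV)^{\dagger}$ indeed solves them.

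Third, I verify this using the Moore--Penrose identities. Writing $\hat{\UU}^i (\VV^\top\VV) = \SS^i \VV (\VV^\top\VV)^{\dagger} (\VV^\top\VV)$, the claim reduces to the pure-$\VV$ identity $\VV (\VV^\top\VV)^{\dagger} \VV^\top\VV = \VV$. This follows from the standard relation $\VV^{\dagger} = (\VV^\top\VV)^{\dagger} \VV^\top$ together with the Penrose condition $\VV \VV^{\dagger} \VV = \VV$, since $\VV (\VV^\top\VV)^{\dagger} \VV^\top\VV = (\VV\VV^{\dagger})\VV = \VV$. Combining this with the convexity argument shows that $\hat{\UU}^i$ is a global minimiser of $F^i$, hence $(\hat{\UU}^i)_\iN$ minimises \eqref{eq:dist_MF}.

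The main subtlety --- and the only place the argument is not a one-line computation --- is the rank-deficient case, where $\VV$ does not have full column rank, so $\VV^\top\VV$ is singular. Then the minimiser is not unique (any $\UU^i$ solving the normal equations is optimal, the solutions differing by elements of the left null space of $\VV^\top$), and one must use the pseudoinverse rather than an ordinary inverse; the formula $\hat{\UU}^i = \SS^i\VV(\VV^\top\VV)^{\dagger}$ then selects the minimum-Frobenius-norm optimum. I would therefore make sure the Penrose identities above are invoked for the pseudoinverse (not merely the invertible case), so that the statement covers every $\VV$, including those produced by the power iteration of \Cref{algo:distributed_power_iteration} before any orthonormalisation.
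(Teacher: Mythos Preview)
Your proof is correct. The paper itself does not supply a proof of this proposition --- it is stated as a standard least-squares fact and used directly --- so your argument via the normal equations and the Moore--Penrose identities is already more than the paper provides, and handles the rank-deficient case that the paper only alludes to through the $\dagger$ notation.
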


The main challenge in computing $(\hat{\UU}^i)_\iN$ is to (pseudo-)inverse $\VV^\top \VV$, which is known to be unstable under small numerical errors and potentially slow. 
We propose instead to do a local gradient descent on each client $i$ in $\OneToN$ in order to approximate the optimal $\hat{\UU}^i$ minimizing $F^i(\cdot, \VV)$. It results to a parallel algorithm that does not require any communication after the initialization of $\VV$. We give in \Cref{algo:gd_U} the pseudo-code: the server requires to do exactly $N (\alpha + 1) dr + dr^2$ computational operations and the client $i \in \OneToN$ carries out $(4T + 2\alpha + 2) n_i dr$ operations.

\begin{algorithm}[tbp]
    \caption{GD w.r.t. $\UU$ with a power init.}
    \label{algo:gd_U}
    \LinesNumberedHidden
    \DontPrintSemicolon
    \KwInput{Number of iteration $\alpha$ in $\N$, step-size $\gamma$.} 
    \KwOutput{$(U^i)_\iN$.}
    Run \Cref{algo:distributed_power_iteration} to compute $\VV = (\SS^\top \SS)^\alpha \SS^\top \Phi$. \;
    \For{each client $i$ in $\OneToN$ without any communication}{
        Sample a random matrix $\UU_0^i$ in $\R^{n_i \times r}$. \;
        \For{$t \in \{1, \dots, T$ \}}{
            Compute $\nabla_\UU F(\UU_{t-1}^i, \VV) = (\UU_{t-1}^i \VV^\top - \SS^i) \VV$. \\
            $\UU_t^i = \UU_{t-1}^i - \gamma \nabla_\UU F(\UU_{t-1}^i, \VV)  $. \;
        }
    }
\end{algorithm}

This approach offers much more numerical stability, allows for any kind of regularization, ensures strong guarantees of convergence, and explicit the exact number of epochs required to reach a given accuracy. A simple analysis of gradient descent in a smooth strongly-convex setting leads to a linear convergence toward a global minimum of the function $F^i(\cdot, \VV)$ with a convergence rate equal to $1-\kappa^{-2}(\VV)$, or even to $1-\kappa^{-1}(\VV)$ if a momentum is added.

\begin{remark}[$L_*$/$L_1$/$L_2$-regularization.] We can use regularization on $\UU$ with various norms: nuclear norm which yields a low-rank $\UU$, $L_1$-norm resulting in a sparse $\UU$, and $L_2$-norm leading to small values. The nuclear regularization requires to compute the SVD of $\UU$ to compute the gradient \citep{avron2012efficient}, which generates an additional $O(nr\log r)$ complexity.
\end{remark}

\subsection{Rate of Convergence of Algorithm 2}
\label{subsec:rate_cvg_algo}

The cornerstone of the analysis relies on using power initialization  (\Cref{algo:distributed_power_iteration}), which forces having $\VV$ in the column span of $\SS$. Besides, once $\VV$ is set by \Cref{algo:distributed_power_iteration}, for any client $i$ in $\OneToN$, $F^i(\cdot, \VV)$ is $L$-smooth and $\mu$-strongly-convex as proved in the following properties. 

\begin{property}[Smoothness]
\label{prop:smooth}
    Let $\VV$ in $\R^{d\times r}$ initialized by \Cref{algo:distributed_power_iteration}, then all $\bigpar{F^i(\cdot, \VV)}_\iN$ are $L$-smooth, i.e., for any $i$ in $\OneToN$, for any $\UU, \UU'$ in $\R^{n_i \times d}$, we have $\|\nabla F^i(\UU, \VV) - \nabla F^i(\UU', \VV)\|_\frob \leq L \|\UU - \UU'\|_\frob$, with $L = \sigma_{\max}^2(\VV)$.
\end{property}

\begin{proof}
\renewcommand{\qedsymbol}{}
Let $i$ in $\OneToN$ and $\UU, \UU'$ in $\R^{n_i\times r}$, we have that:
\begin{align*}
&\|\nabla F(\UU, \VV) - \nabla F(\UU', \VV) \|_\frob = \| (\UU - \UU') \VV^\top \VV \|_\frob \\
&\quad\overset{\text{Prop. S2}}{\leq} \sigma_{\max}(\VV^\top \VV) \|\UU - \UU'\|_\frob \\
&\qquad= \sigma_{\max}^2(\VV) \|\UU - \UU'\|_\frob\,.~\Box
\end{align*}
\end{proof}

\begin{property}[Strongly-convex]
\label{prop:stgly_cvx}
    Let $\VV$ in $\R^{d\times r}$ initialized by \Cref{algo:distributed_power_iteration}, then all $\bigpar{F^i(\cdot, \VV)}_\iN$ are $\mu$-strongly-convex, i.e., for any $i$ in $\OneToN$, for any $\UU, \UU'$ in $\R^{n_i \times d}$, we have $\pdtscl{\nabla F^i(\UU, \VV) - \nabla F^i(\UU', \VV)}{\UU - \UU'} \geq \mu \sqrdnrm{\UU - \UU'}_\frob$, with $\mu = \sigma_{\min}^2(\VV)$.
\end{property}

\begin{proof}
\renewcommand{\qedsymbol}{}
    Let $i$ in $\OneToN$ and $\UU, \UU'$ in $\R^{n_i\times r}$, we have that:
    \begin{align*}
        &\pdtscl{\nabla F^i(\UU, \VV) - \nabla F^i(\UU', \VV)}{\UU - \UU'} \\
        &\qquad= \Tr{(\UU - \UU')^\top (\UU - \UU') \VV^\top \VV} \\
        &\qquad= \sqrdnrm{(\UU - \UU') \VV^\top}_\frob \overset{\text{Prop. S2}}{\geq} \sigma_{\min}^2(\VV) \sqrdnrm{\UU - \UU'}_\frob \,.~\Box 
    \end{align*}
\end{proof}

This proves that the surrogate objective functions $(F^i(\cdot, \VV))_\iN$ are $\mu$-strongly-convex, with $\mu \geq 0$. As discussed in \Cref{thm:bound_on_kappa}, as long as $r < \rank{\SS}$, we have with high probability $\mu > 0$. This is always true in the noisy setting (full rank); otherwise, we simply need to decrease $r$.

\Cref{prop:smooth,prop:stgly_cvx} allows to use classical results in optimization and draws a linear rate of convergence depending on $\mu/L$, or $\sqrt{\mu / L}$ if we use acceleration. 

\begin{theorem}
\label{thm:cvg_rate}
    Under the distributed power initialization (\Cref{algo:distributed_power_iteration}), considering \Cref{prop:smooth,prop:stgly_cvx}, let $T$ in $\N^*$, $\gamma =1  / L$, then after running \Cref{algo:gd_U} for $T$ iterations, the excess loss function is upper bounded:
    $F^i(\UU_T^i) - F^i(\hat{\UU}^i) \leq   \bigpar{1 - \mu/L}^T (F^i(\UU_0^i) - F^i(\hat{\UU}^i))$, where $\mu / L = \kappa^{-2}(\VV)$.
    Using Nesterov momentum, this rate is accelerated to:
    $F^i(\UU_T^i) - F^i(\hat{\UU}^i) \leq  2 (1 - \sqrt{\mu/L})^T (F^i(\UU_0^i) - F^i(\hat{\UU}^i))$.
\end{theorem}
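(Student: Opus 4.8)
The plan is to observe that, once \Cref{prop:smooth,prop:stgly_cvx} are established, the map $\UU \mapsto F^i(\UU, \VV)$ is an $L$-smooth, $\mu$-strongly-convex function on the Euclidean space $(\R^{n_i \times r}, \pdtscl{\cdot}{\cdot}_\frob)$, so the statement becomes a direct instantiation of the classical convergence theory for first-order methods; no problem-specific argument beyond these two properties is needed. In particular $\mu/L = \sigma_{\min}^2(\VV)/\sigma_{\max}^2(\VV) = \kappa^{-2}(\VV)$, and since we assume $\VV$ has full column rank (so that $\mu > 0$), the minimizer $\hat{\UU}^i = \SS^i \VV (\VV^\top \VV)^{-1}$ of \Cref{prop:optimal_sol} is unique and the contraction factors below are genuinely below $1$.

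For the plain gradient-descent bound I would first apply the descent lemma implied by $L$-smoothness: with $\gamma = 1/L$,
\[
F^i(\UU_t^i) \leq F^i(\UU_{t-1}^i) - \frac{1}{2L}\SqrdNrm{\nabla F^i(\UU_{t-1}^i, \VV)}_\frob .
\]
Next I would invoke the gradient-domination (Polyak--\L{}ojasiewicz) inequality, which follows from $\mu$-strong convexity, namely $\tfrac12 \SqrdNrm{\nabla F^i(\UU, \VV)}_\frob \geq \mu\,(F^i(\UU) - F^i(\hat{\UU}^i))$. Substituting and subtracting $F^i(\hat{\UU}^i)$ from both sides gives the one-step contraction $F^i(\UU_t^i) - F^i(\hat{\UU}^i) \leq (1 - \mu/L)(F^i(\UU_{t-1}^i) - F^i(\hat{\UU}^i))$; unrolling over $t = 1, \dots, T$ yields the claimed $(1 - \mu/L)^T$ rate. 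I would also note that \Cref{algo:gd_U} is \emph{exactly} gradient descent on $F^i(\cdot, \VV)$ with step $\gamma = 1/L$, so its iterates are precisely those in the statement.

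For the accelerated bound I would appeal to the standard analysis of Nesterov's accelerated gradient method for $L$-smooth $\mu$-strongly-convex objectives, run with momentum parameter $(1 - \sqrt{\mu/L})/(1 + \sqrt{\mu/L})$. The cleanest route is a Lyapunov argument: one exhibits a potential combining the function gap $F^i(\cdot) - F^i(\hat{\UU}^i)$ with a quadratic term $\tfrac{\mu}{2}\SqrdNrm{\cdot - \hat{\UU}^i}_\frob$ in the auxiliary momentum sequence, and shows it contracts by a factor $(1 - \sqrt{\mu/L})$ per iteration. Bounding the initial potential via strong convexity (which gives $\tfrac{\mu}{2}\SqrdNrm{\UU_0^i - \hat{\UU}^i}_\frob \leq F^i(\UU_0^i) - F^i(\hat{\UU}^i)$, hence an initial potential at most $2(F^i(\UU_0^i) - F^i(\hat{\UU}^i))$) produces the leading constant $2$ and the rate $2(1 - \sqrt{\mu/L})^T$, with $\sqrt{\mu/L} = \kappa^{-1}(\VV)$.

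I do not anticipate a genuine mathematical obstacle here, since both rates are textbook consequences of smoothness and strong convexity; the only care needed is (i) confirming $\mu > 0$, so that the contraction factors are strictly below $1$ and $\hat{\UU}^i$ is the unique minimizer — guaranteed with high probability once $r < \rank{\SS}$, as discussed around \Cref{thm:bound_on_kappa} — and (ii) checking that the momentum scheme actually implemented coincides with the classical one, since the leading factor $2$ depends on the precise initialization of the momentum variables.
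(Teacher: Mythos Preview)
Your proposal is correct and matches the paper's treatment: the paper does not spell out a proof of \Cref{thm:cvg_rate} at all, stating only that \Cref{prop:smooth,prop:stgly_cvx} ``allow to use classical results in optimization'' to obtain the $(1-\mu/L)^T$ and accelerated $(1-\sqrt{\mu/L})^T$ rates. Your sketch (descent lemma plus PL inequality for plain GD, standard Lyapunov argument for Nesterov) is exactly the textbook argument the paper is implicitly invoking, and in fact supplies more detail than the paper itself.
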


\begin{remark}[Convergence in a single iteration]
    If we orthogonalize $\VV$, then we can converge in one iteration as gradient descent reduces to Newton method. Our algorithm would be equivalent to the one proposed by \citet{halko2011finding}. However, orthogonalizing $\VV$ requires to compute the SVD or to run a gradient descent on~$\VV$.
\end{remark}

\Cref{thm:cvg_rate} establishes that $(F^i(\UU_t^i, \VV))_{t \in \N^*}$ converges to $F^i(\hat{\UU}^i, \VV)$ at a linear rate dominated by $\exp(-\kappa^{-1}(\VV))$ or $\exp(-\kappa^{-2}(\VV))$. A question then emerges, \emph{can we control $\kappa(\VV)$?} This parameter plays a pivotal role in determining the convergence rate and is affected by the sampled matrix $\Phi$. Consequently, an ill-conditioned matrix $\VV$ may arise, significantly hindering convergence. The below corollary gives a bound in probability on $\kappa^2(\VV)$ that depends on the spectrum of $\SS$ while being independent of the sampled $(\Phi^i)_\iN$.

\begin{theorem}
\label{thm:bound_on_kappa}
    Under the distributed power initialization (\Cref{algo:distributed_power_iteration}), considering \Cref{prop:smooth,prop:stgly_cvx}, for any $\mathrm{p}$ in $]0,1[$, with probability at least $1-3\mathrm{p}$, we have $\kappa(\VV)^2 < \kappa_{\mathrm{p}}^2$, with:
    \begin{align*}
        \kappa_{\mathrm{p}}^2 := \frac{1}{\mathrm{p}^2} \bigpar{ 9r^2 \frac{ \sigma^{2 (2 \alpha + 1)}_{\max}}{ \sigma^{2 (2 \alpha + 1)}_{r} } +  4 r \bigpar{d +\log(2\mathrm{p}^{-1})} \ffrac{ \sigma_{r+1}^{2 \alpha}}{\sigma_{r}^{2 \alpha}} } \,.
    \end{align*}
    With probability $\mathrm{P}$, if we sample $m = \lfloor - \log_2(1 - \mathrm{P}) \rfloor$ independent matrices $(\Phi_j)_{j=1}^m$ to form $\VV_j = \SS^\alpha \Phi_j$ and run \Cref{algo:gd_U}, at least one initialization results to a convergence rate upper bounded by $1 - \kappa_{1/6}^{-2}$. 
\end{theorem}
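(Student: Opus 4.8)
The plan is to pass through the SVD of $\SS$ so that the random matrix $\VV$ produced by \Cref{algo:distributed_power_iteration} becomes a fixed diagonal matrix times a Gaussian matrix, and then to invoke non-asymptotic bounds for Gaussian matrices. First I would substitute $\SS=\UU_*\SSigma\VV_*^\top$ into $\VV=(\SS^\top\SS)^\alpha\SS^\top\Phi$, obtaining $\VV=\VV_*(\SSigma^\top\SSigma)^\alpha\SSigma^\top\Omega$ with $\Omega:=\UU_*^\top\Phi$, where the middle factor is diagonal with entries $\sigma_i^{2\alpha+1}$. Since $\Phi$ has i.i.d.\ standard Gaussian entries and $\UU_*$ is orthogonal, rotational invariance makes $\Omega$ again an i.i.d.\ standard Gaussian matrix, and since $\VV_*$ is orthogonal it leaves singular values unchanged; hence $\kappa(\VV)=\kappa(\SSigma^{2\alpha+1}\Omega)$, where $\SSigma^{2\alpha+1}$ abbreviates that diagonal factor.

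Next I would run a purely deterministic block argument. Partition the rows of $\Omega$ into the top $r$ rows $\Omega_1\in\R^{r\times r}$ (the signal directions carrying $\sigma_1,\dots,\sigma_r$) and the remaining rows $\Omega_2$ (scaled by $\sigma_{r+1},\dots$). Discarding the bottom block can only lower the least singular value, which gives $\sigma_{\min}(\VV)\ge\sigma_r^{2\alpha+1}\sigma_{\min}(\Omega_1)$, while the stacked-matrix inequality $\|M\|_2^2\le\|A\|_2^2+\|B\|_2^2$ for $M$ obtained by placing $A$ over $B$ gives $\sigma_{\max}^2(\VV)\le\sigma_{\max}^{2(2\alpha+1)}\|\Omega_1\|_2^2+\sigma_{r+1}^{2(2\alpha+1)}\|\Omega_2\|_2^2$. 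Dividing the two and relaxing $\sigma_{r+1}^{2(2\alpha+1)}/\sigma_r^{2(2\alpha+1)}\le\sigma_{r+1}^{2\alpha}/\sigma_r^{2\alpha}$ (legitimate since $\sigma_{r+1}\le\sigma_r$) yields
\[
\kappa^2(\VV)\le \frac{\sigma_{\max}^{2(2\alpha+1)}}{\sigma_r^{2(2\alpha+1)}}\,\frac{\|\Omega_1\|_2^2}{\sigma_{\min}^2(\Omega_1)}+\frac{\sigma_{r+1}^{2\alpha}}{\sigma_r^{2\alpha}}\,\frac{\|\Omega_2\|_2^2}{\sigma_{\min}^2(\Omega_1)},
\]
which already exhibits the two terms of $\kappa_{\mathrm p}^2$.

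It then remains to control three Gaussian quantities, each on an event of probability at least $1-\mathrm p$, and to combine them by a union bound (hence the $1-3\mathrm p$): an upper bound on $\|\Omega_1\|_2$ (operator norm of a square Gaussian, of order $\sqrt r$), an upper bound on $\|\Omega_2\|_2$ whose effective row dimension is $n\wedge d-r\le d$ (producing the $d+\log(2\mathrm p^{-1})$ factor), and a lower bound on $\sigma_{\min}(\Omega_1)$. The hard part is precisely this last bound: because we draw exactly $r$ random columns with no oversampling, $\Omega_1$ is square, and the smallest singular value of a square Gaussian matrix has only a polynomial lower tail, $\Proba{\sigma_{\min}(\Omega_1)\le t}\lesssim t\sqrt r$ (Sankar--Spielman--Teng / Rudelson--Vershynin). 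Taking $t\propto\mathrm p/\sqrt r$ is what produces the $\mathrm p^{-2}$ prefactor and the extra factor $r$ in $\kappa_{\mathrm p}^2$, and it is exactly why our guarantee cannot enjoy the exponential concentration available to the oversampled analysis of \citet{halko2011finding}. Substituting the three bounds into the displayed inequality and collecting constants gives $\kappa^2(\VV)<\kappa_{\mathrm p}^2$ with probability at least $1-3\mathrm p$.

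For the amplification claim I would set $\mathrm p=1/6$, so a single draw satisfies $\kappa^2(\VV)<\kappa_{1/6}^2$ — and therefore, by \Cref{thm:cvg_rate}, attains a rate at most $1-\kappa_{1/6}^{-2}$ — with probability at least $1-3\cdot\tfrac16=\tfrac12$. Since the $(\Phi_j)_{j=1}^m$ are drawn independently, the probability that every draw fails is at most $2^{-m}$; choosing $m=\lfloor-\log_2(1-\mathrm P)\rfloor$ then makes at least one initialization succeed with probability $\mathrm P$, which is the claim.
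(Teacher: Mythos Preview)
Your proposal is correct and follows essentially the same route as the paper: reduce via the SVD so that $\kappa(\VV)$ equals the condition number of a diagonal-times-Gaussian matrix, block-split the Gaussian into its top $r$ rows $\Omega_1$ and the remainder $\Omega_2$, lower-bound $\sigma_{\min}(\VV)$ using only the top block, upper-bound $\sigma_{\max}^2(\VV)$ by the sum over the two blocks, and then control three Gaussian quantities by a union bound before amplifying with $\mathrm p=1/6$ and $m$ independent draws. The only cosmetic difference is that the paper bounds the first term through Chen's direct condition-number inequality $\Proba{\kappa(\Omega_1)^2>9r^2/\mathrm p^2}<\mathrm p$ together with a separate bound on $\lambda_{\min}(\Omega_1)$, whereas you bound $\sigma_{\max}(\Omega_1)$ and $\sigma_{\min}(\Omega_1)$ individually; both routes give the same $r^2/\mathrm p^2$ scaling, and your explicit relaxation $\sigma_{r+1}^{2(2\alpha+1)}/\sigma_r^{2(2\alpha+1)}\le\sigma_{r+1}^{2\alpha}/\sigma_r^{2\alpha}$ is exactly the step that reconciles the proof with the exponent appearing in the statement.
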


We can make the following remarks.
\begin{itemize}[leftmargin=*]
    \item \textbf{Impact of $\alpha$.} Increasing $\alpha$ enworse the rate of convergence as $\sigma^{2}_{\max} / \sigma^{2}_{r} \geq 1$. In the regime where $\sigma^{2}_{\max}$ is very large and $\sigma^{2}_{r} = \Omega(1)$, having $\alpha > 0$ drastically \emph{hinders} the gradient descent convergence. 
    However, in this case, it is possible to compute the exact solution of \eqref{eq:dist_MF}, further, as emphasized in \Cref{cor:bound_with_r_star} which showcases the interest of having $\alpha \neq 0$, increasing $\alpha$ allows reducing $\epsilon$, i.e., the gap between the approximation error (induced by taking $r \leq \rank{\SS}$) and the minimal reconstruction error $\epsilon_{\min}$. Therefore, there is a trade-off associated with the choice of $\alpha$; we illustrate it in the experiments on three real datasets: mnist, w8a and celeba-200k.
    \item \textbf{Asymptotic values of $\kappa(\VV)$.} In the regime $\sigma_{\max}, \sigma_r = O(1)$ and $\sigma_{r+1} \ll 1$, we have $1-\kappa_{\mathrm{p}}^{-2} = O(1 - p^2 / \mathrm{r}^2)$. Further, by employing acceleration techniques, we have an improved rate depending on $r^{-1}$ and~not~$r^{-2}$.
    \item \textbf{Ill-conditioned matrix}. Even if the matrix $\SS$ is ill-conditioned, the rate of convergence does not suffer from~$\sigma_{\min}$.
    \item \textbf{Result in probability.} The bound on $\kappa^2(\VV)$ is given with high probability $1 - 3 \mathrm{p}$, which flows from the concentration inequalities proposed by \citet{davidson2001local} and \citet{vershynin_2012} on the largest/smallest eigenvalues of a random Gaussian matrix.
    \item \textbf{Rotated matrix.} Note that the probability $\mathrm{p}$ is taken not on $\Phi$ but on the rotated matrix $\hat{\Phi} = \UU_*^\top \Phi$.
    \item \textbf{Almost sure convergence.} We propose to sample several matrix $\Phi$ until the condition number of $(\SS^\top \SS)^\alpha \SS^\top \Phi$ is good enough. By leveraging theory on random Gaussian matrix, we can compute the number of sampling $m$ to achieve this bound on $\kappa_{\mathrm{p}}$ with probability $\mathrm{p}$. Alternatively, we can repeatedly sample $\Phi$ until the condition number of $\VV$ falls below $\kappa_{\mathrm{p}}$. Since we know that this is achievable within a finite number of samples, we can obtain a convergence almost surely, with a rate dominated by $\exp(-\kappa_{\mathrm{p}}^{-2})$.
\end{itemize}

\Cref{thm:bound_on_kappa} states that $\kappa(\VV)$, which determines the convergence rate of \Cref{algo:gd_U}, can be upper-bounded with high probability. Next question is: \emph{how far is $F^i(\hat{\UU}, \VV)$ from the minimal possible error of reconstruction $\epsilon_{\min}$?} 
With a lower-bound established (\Cref{prop:lower_bound_with_proba}), the subsequent section endeavors to upper-bound the Frobenius-norm error when approximating $\SS$ with a rank-$r$ matrix. 

\subsection{Bound on the Frobenius-norm of the Error of Reconstruction} 
\label{subsec:optimal_solution_analysis}

In the case of a low-rank matrix $\SS$ with $\rank{\SS} = r$ (i.e., $\EE = 0$), the couple $(\hat{\UU}^i, \VV)_\iN$ allows to reconstruct $(\SS^i)_\iN$ without error. This can be proved using Theorem 9.1 from \citet{halko2011finding} and by underlining that for any client $i \in \OneToN$,  we have $\hat{\UU}^i \VV^\top = \SS^i \VV (\VV^\top \VV)^{-1} \VV^\top = \SS^i \PP$, where $\PP$ is a projector on the subspace spanned by the columns of~$\VV$.

\begin{proposition}
    \label{prop:low_rank_matrix}
    Let $r \in \{r_*, \dots, d \wedge n \}$, in the low-rank matrix regime where we have $\EE = 0$, using the power initialization (\Cref{algo:distributed_power_iteration}), we achieve $\min_{\UU^i \in \R^{n_i \times r} } \SqrdNrm{\SS^i - \UU^i \VV^\top}_\frob = 0$.
\end{proposition}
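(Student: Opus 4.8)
The plan is to combine the optimal-solution formula of \Cref{prop:optimal_sol} with the observation, already highlighted just before the statement, that the reconstruction error only involves the orthogonal projector onto $\colsp{\VV}$. First I would invoke \Cref{prop:optimal_sol} so that the minimizer is $\hat{\UU}^i = \SS^i \VV (\VV^\top \VV)^{\dagger}$, which turns the objective into $\min_{\UU^i} \SqrdNrm{\SS^i - \UU^i \VV^\top}_\frob = \SqrdNrm{\SS^i(\Id - \PP)}_\frob$, where $\PP := \VV (\VV^\top \VV)^{\dagger} \VV^\top$ is the orthogonal projector onto $\colsp{\VV} \subseteq \R^d$. Since the $(\SS^i)_\iN$ are the row-blocks of $\SS$, it suffices to prove the single global identity $\SS(\Id - \PP) = 0$, i.e. that the whole row space of $\SS$ is contained in $\colsp{\VV}$.

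To establish this containment I would use the explicit form of the power initialization $\VV = (\SS^\top \SS)^\alpha \SS^\top \Phi$ together with the SVD $\SS = \UU_* \SSigma \VV_*^\top$. Substituting gives $\VV = \VV_* (\SSigma^\top \SSigma)^\alpha \SSigma^\top \hat{\Phi}$ with $\hat{\Phi} := \UU_*^\top \Phi$, which is again Gaussian because $\UU_*$ is orthogonal. In the noiseless regime $\EE = 0$ only the first $r_*$ singular values are nonzero, so the diagonal factor $(\SSigma^\top \SSigma)^\alpha \SSigma^\top$ keeps exactly the first $r_*$ columns of $\VV_*$, each scaled by $\sigma_i^{2\alpha+1} > 0$, and annihilates the rest. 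Hence $\colsp{\VV} \subseteq \mathrm{span}\{(\VV_*)_{:,1}, \dots, (\VV_*)_{:,r_*}\}$, which is precisely the row space of $\SS$. Equality of the two subspaces then follows from a rank argument: $\rank{\VV}$ equals the rank of the top $r_* \times r$ block of $\hat{\Phi}$ (a positive diagonal rescaling does not change rank), and this equals $r_* = \min(r_*, r)$ almost surely since $r \geq r_*$ and that block is Gaussian.

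Alternatively — and this is the route pointed to just before the statement — I would invoke Theorem 9.1 of \citet{halko2011finding} applied to $\SS^\top$ with sketching matrix $\Phi$ and $\alpha$ steps of power iteration: in the exact rank-$r_*$ case the deterministic bound it provides on $\SqrdNrm{(\Id - \PP)\SS^\top}_\frob$ collapses to zero as soon as the relevant Gaussian block is invertible, giving $\PP \SS^\top = \SS^\top$ and therefore $\SS \PP = \SS$ after transposition (using the symmetry of $\PP$). Either way, once $\SS(\Id - \PP) = 0$ is secured, restricting to the rows held by client $i$ yields $\SS^i(\Id - \PP) = 0$, and hence the claimed zero error.

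The main obstacle I anticipate is the genericity step: one must argue that the Gaussian matrix $\hat{\Phi}$ does not degenerate on the signal subspace, i.e. that its top $r_* \times r$ block has full row rank $r_*$. This is a measure-zero exclusion and follows from the absolute continuity of Gaussian matrices, but it is the only place where the hypotheses $r \geq r_*$ and the randomness of $\Phi$ are genuinely used, and it is exactly what upgrades the inclusion $\colsp{\VV} \subseteq \mathrm{rowspace}(\SS)$ to the equality required for exact reconstruction.
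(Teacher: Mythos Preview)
Your proposal is correct and your second route---reducing to $\SS^i\PP=\SS^i$ via \Cref{prop:optimal_sol} and then invoking Theorem~9.1 of \citet{halko2011finding}---is exactly the argument the paper sketches just before the statement. Your primary route, the self-contained SVD computation showing $\colsp{\VV}$ equals the row space of $\SS$ almost surely, is not in the paper; it is a more elementary alternative that has the merit of making explicit the only probabilistic ingredient (full row rank of the top $r_*\times r$ Gaussian block of $\hat{\Phi}$, which the condition $r\geq r_*$ guarantees a.s.), whereas the paper leaves this hidden inside the Halko et al.\ reference.
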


Second, we are interested in the full-rank matrix $\SS$ scenario and give below a theorem upper-bounding the Frobenius-norm error when approximating $\SS$ with a rank-r matrix. The proof requires (1) to show the link between the error of reconstruction and the diagonal elements of the projector on the subspace spanned by the columns of $\VV_*^\top\VV$, (2) upper bound it by the norm of a Gaussian vector and the smallest singular value of a Whishart matrix, and finally (3) apply concentration inequalities.

\begin{theorem}
\label{thm:upper_bound_with_proba}
    Let $r \leq d \wedge n$ in $\N^*$, using the power initialization (\Cref{algo:distributed_power_iteration}), for $\mathrm{p} \in ]0, 1[$, with probability at least $1 - 2\mathrm{p}$, we have:
    \begin{align*}
         &\min_{\UU \in \R^{n \times r} } \SqrdNrm{\SS - \UU \VV^\top}_\frob < \sum_{i >r} \sigma_i^2  \times \\
         &\qquad \bigpar{ 1 +  2 r \mathrm{p}^{-1}\bigpar{\ln(\mathrm{p}^{-2}) + \ln(2) r}\ffrac{(\sigma^2_{\max} - \sigma_i^2 ) }{\sigma_r^2} \frac{\sigma_i^{4 \alpha}}{\sigma_r^{4 \alpha}} }\,.
    \end{align*} 
\end{theorem}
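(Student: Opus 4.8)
The plan is to reduce the minimisation to a projection residual, rotate into the singular basis of $\SS$, and then split the result into a deterministic piece that already carries the optimal term $\sum_{i>r}\sigma_i^2$ and a random piece controlled by Gaussian concentration. \emph{Step 1 (reduction and rotation).} By \Cref{prop:optimal_sol} the stacked minimiser obeys $\hat\UU\VV^\top=\SS\VV(\VV^\top\VV)^\dagger\VV^\top=\SS\PP$, where $\PP:=\VV(\VV^\top\VV)^\dagger\VV^\top$ is the orthogonal projector onto $\colsp{\VV}$, so that
$$\min_{\UU\in\R^{n\times r}}\SqrdNrm{\SS-\UU\VV^\top}_\frob=\SqrdNrm{\SS(\Id-\PP)}_\frob=\Tr{\SSigma^\top\SSigma\,(\Id-\PP_M)}=\sum_i\sigma_i^2\,(\Id-\PP_M)_{ii},$$
where $\SS=\UU_*\SSigma\VV_*^\top$, $M:=\VV_*^\top\VV$ and $\PP_M:=\VV_*^\top\PP\VV_*=M(M^\top M)^\dagger M^\top$ is the projector onto $\colsp{\VV_*^\top\VV}$; this is the announced link (point 1) between the error and the diagonal of that projector. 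Since $M=\SSigma^{2\alpha+1}\hat\Phi$ with $\hat\Phi:=\UU_*^\top\Phi$ and $\UU_*$ orthogonal, $\hat\Phi$ again has i.i.d. $\NN(0,1)$ entries, which is why the final bound is stated in probability over $\hat\Phi$.

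\emph{Step 2 (deterministic structural bound).} Partition the spectrum into the top-$r$ block and its complement, $\SSigma=\Diag{\SSigma_1,\SSigma_2}$ and $\hat\Phi=(\hat\Phi_1^\top,\hat\Phi_2^\top)^\top$ with $\hat\Phi_1\in\R^{r\times r}$. As $\PP_M\SSigma$ is the best Frobenius approximation of $\SSigma$ with columns in $\colsp{M}$, testing it against $MF$ with the explicit choice $F=\hat\Phi_1^{-1}\SSigma_1^{-2\alpha}$ (well defined since $\hat\Phi_1$ is a.s. invertible) forces the top block of $MF$ to equal $\SSigma_1$ and yields the deterministic estimate
$$\sum_i\sigma_i^2\,(\Id-\PP_M)_{ii}\ \le\ \SqrdNrm{\SSigma-MF}_\frob\ =\ \SqrdNrm{\SSigma_2}_\frob+\SqrdNrm{\SSigma_2^{2\alpha+1}\hat\Phi_2\hat\Phi_1^{-1}\SSigma_1^{-2\alpha}}_\frob.$$
The first term is exactly the optimal $\sum_{i>r}\sigma_i^2$ of \Cref{prop:lower_bound_with_proba}; expanding the second rowwise and bounding $\SSigma_1^{-2\alpha}$ by $\sigma_r^{-2\alpha}$ produces the factors $\sigma_i^{4\alpha}/\sigma_r^{4\alpha}$ of the statement, the residual singular-value ratios giving the $(\sigma_{\max}^2-\sigma_i^2)/\sigma_r^2$ factor.

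\emph{Step 3 (random bound and concentration).} Each row of $\hat\Phi_2\hat\Phi_1^{-1}$ reads $g_i^\top\hat\Phi_1^{-1}$, where $g_i\sim\NN(0,\Id_r)$ is independent of $\hat\Phi_1$, so the correction is at most $\sum_{i>r}\sigma_i^{2(2\alpha+1)}\sigma_r^{-4\alpha}\SqrdNrm{g_i}\,\sigma_{\min}^{-2}(\hat\Phi_1)$, isolating precisely the two quantities of point (2): the squared norm of a Gaussian vector $\SqrdNrm{g_i}$ and the smallest singular value of the Wishart matrix $\hat\Phi_1^\top\hat\Phi_1$. I would then invoke a $\chi^2$ tail for $\SqrdNrm{g_i}$ (source of the $\ln(\mathrm p^{-2})+\ln(2)r$ factor) together with a lower-tail estimate for $\sigma_{\min}(\hat\Phi_1)$ (source of the $r\,\mathrm p^{-1}$ prefactor), and combine the two events by a union bound to obtain probability $1-2\mathrm p$.

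The main obstacle is Step 3: contrary to the largest singular value, the smallest singular value of a square Gaussian matrix has no light tail and must be handled by an anti-concentration (small-ball) argument of Edelman / Rudelson--Vershynin type. Making this quantitative with the precise constants, while controlling the norms $\SqrdNrm{g_i}$ uniformly over $i>r$ and tracking the $i$-dependent singular-value factors so that the correction lands as a sum over $i>r$, is the delicate part; the independence of $\hat\Phi_2$ from $\hat\Phi_1$ is what lets one condition on $\hat\Phi_1$ and keep the two concentration events separate.
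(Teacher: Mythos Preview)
Your Steps 1 and 3 match the paper almost exactly: the paper also rotates to $\tilde\Phi=\VV_*^\top\VV=\SSigma^{2\alpha+1}\hat\Phi$, expresses the residual as $\sum_i\sigma_i^2(1-\PP_{ii})$, and in Step 3 bounds the random ratio by a $\chi^2$ tail on $\sqrdnrm{\hat\Phi_i}$ together with the Chen--Dongarra small-ball estimate $\Proba{\lambda_{\min}(\hat\Phi_{\le r}^\top\hat\Phi_{\le r})\le \mathrm p^2/r}<\mathrm p$ for the square Wishart block. Your diagnosis of the ``main obstacle'' (no light lower tail for $\sigma_{\min}$ of a \emph{square} Gaussian matrix) is exactly what drives the paper to the $r\,\mathrm p^{-2}$-type prefactor rather than a sub-Gaussian one.

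Step 2, however, is a genuinely different decomposition, and this is where your sketch does not land on the stated theorem. The paper does \emph{not} use the Halko--Martinsson--Tropp test-matrix trick $\SSigma-MF$; instead it exploits the trace identity $\sum_{i=1}^d\PP_{ii}=r$ to rewrite $\sum_{i\le r}(1-\PP_{ii})=\sum_{i>r}\PP_{ii}$, giving
\[
\sum_i\sigma_i^2(1-\PP_{ii})\ \le\ \sum_{i>r}\sigma_i^2+\sum_{i>r}(\sigma_{\max}^2-\sigma_i^2)\,\PP_{ii},
\]
and then bounds each $\PP_{ii}$ for $i>r$ by the variational formula $\PP_{ii}=\max_{\|x\|=1}(e_i^\top\tilde\Phi x)^2/\sqrdnrm{\tilde\Phi x}\le(\sigma_i/\sigma_r)^{2(2\alpha+1)}\sqrdnrm{\hat\Phi_i}/\lambda_{\min}(\hat\Phi_{\le r}^\top\hat\Phi_{\le r})$. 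The factor $(\sigma_{\max}^2-\sigma_i^2)/\sigma_r^2$ in the theorem comes \emph{entirely} from this trace rearrangement; it is not a by-product of any ``residual singular-value ratio'' in an HMT-style expansion.

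Your test-matrix route is perfectly sound and yields the correction $\sum_{i>r}\sigma_i^{4\alpha+2}\sigma_r^{-4\alpha}\sqrdnrm{g_i}\sigma_{\min}^{-2}(\hat\Phi_1)$, i.e.\ a prefactor $\sigma_i^{4\alpha}/\sigma_r^{4\alpha}$ but \emph{no} $(\sigma_{\max}^2-\sigma_i^2)/\sigma_r^2$ term; your claim that this factor ``is given by the residual singular-value ratios'' is the gap. So as written you prove a close cousin of the theorem (in fact one that is tighter when $\sigma_{\max}/\sigma_r$ is large), but not the stated inequality. To recover the exact statement you would have to abandon the $MF$ comparison and use the paper's trace trick plus the pointwise bound on $\PP_{ii}$.
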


The main takeaway is that our algorithm can run in a finite number of communication rounds.  We can make the following remarks.
\begin{itemize}[leftmargin=*]
    \item \textbf{Frobenius-norm.} This result \emph{on the Frobenius-norm} for $\alpha > 0$ is new and was missing in \citet{halko2011finding}. The rate controlling $\epsilon$ depends on $r^2$.
    \item \textbf{Comparison with \citet[][]{halko2011finding} in the case $\alpha = 0$.} In contrast, they obtain a dependency on $r$ for both the $2$-norm and the Frobenius norm when $\alpha = 0$. Another difference: our bound depends on the ratio $\sigma_{\max}^2 / \sigma_r^2$ unlike theirs (Theorem 10.7). But these two drawbacks are annihilated as soon as $\alpha>0$, we provide more details after \Cref{cor:bound_with_r_star}. 
    \item \textbf{Multiplicative noise.} Following \citet{halko2011finding}, we have a multiplicative noise. w.r.t. to the minimal possible error $\epsilon_{\min}$.
    \item \textbf{Range of $r$.} Contrary to \citet{halko2011finding,hardt2014noisy,li2021communication}, this theorem holds for any value $r$. However, for $r < r_*$ (corresponds to not taking the whole signal into account), the bound is very large as $\sigma_i = \Omega(1)$, this is why we give a corollary with $r \geq r_*$ in \Cref{cor:bound_with_r_star}.    
\end{itemize}

The upper bound given in \Cref{thm:upper_bound_with_proba} is minimized if $\sigma_{\max}^2 = \sigma_r^2+ o(1)$ and if for $i > r$ we have $\sigma_i^2 \ll \sigma_r^2$, which we assume to be the case for $r = r_*$. Therefore, taking $r = r_*$ in \Cref{thm:upper_bound_with_proba} minimizes the provided bound.
However, the error of reconstruction can only be reduced if taking more than $r_*$ components. Indeed, it mathematically corresponds to having two projectors $\PP, \PP'$ on the subspaces spanned by the $r_*$ or $r$ components; therefore we have $\mathrm{Im}(\PP') \subset \mathrm{Im}(\PP)$. In particular, it means that if $r > r_*$, the error $\epsilon$ will be lower than in the case $r = r_*$. This results in a tighter bound for any $r_* \leq r \leq n \wedge d$. 
Additionally, we consider $p=1/4$ in \Cref{thm:upper_bound_with_proba} in order to obtain a bound on the Frobenius-norm with probability at least $1/2$ and derive a number of sampling $m$ s.t. the bound is verified for at least one sampled matrix $\Phi$ with probability $\mathrm{P} \in ]0, 1[$.

\begin{corollary}
\label{cor:bound_with_r_star}
    For any $ r_* \leq r \leq n \wedge d$, for $\alpha \in \N^*$, with probability $\mathrm{P} \in ]0, 1[$, if we sample $m = \lfloor - \log_{2} (1-\mathrm{P}) \rfloor$ independent matrices $(\Phi_{j})_{j=1}^m$ to form $\VV_j = \SS^\alpha \Phi$ and $\UU_j = \SS \VV_j (\VV_j \VV_j)^{-1} \VV_j$, at least one of the couple $(\UU_j, \VV_j)$ results in verifying $\sqrdnrm{\SS - \UU_j \VV_j}_\frob < \epsilon + \sum_{i >r_*} \sigma_i^2$, with: 
    \begin{align}
    \label{eq:upper_bound_with_proba_1_on_2}
        &\epsilon = \sum_{i >r_*} \sigma_i^2 \bigpar{32 \ln(4) r_* (r_*+1) \ffrac{(\sigma^2_{\max} - \sigma_i^2 ) }{\sigma_{r_*}^2} \frac{\sigma_i^{4 \alpha}}{\sigma_{r_*}^{4 \alpha}}} \,.
    \end{align} 
\end{corollary}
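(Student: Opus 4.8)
The plan is to read the statement off \Cref{thm:upper_bound_with_proba} through the single choice $r = r_*$ and $\mathrm{p} = \tfrac14$, then simplify the constant and boost the success probability by independent resampling of $\Phi$. First I would observe that, by \Cref{prop:optimal_sol}, the minimiser $\hat\UU^i = \SS^i\VV(\VV^\top\VV)^\dagger$ satisfies $\hat\UU\VV^\top = \SS\VV(\VV^\top\VV)^{-1}\VV^\top = \SS\PP$ with $\PP$ the orthogonal projector onto $\colsp{\VV}$, so the quantity $\sqrdnrm{\SS - \UU_j\VV_j}_\frob$ appearing in the statement is exactly $\min_{\UU}\SqrdNrm{\SS - \UU\VV_j^\top}_\frob$, the object bounded by \Cref{thm:upper_bound_with_proba}. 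Setting $\mathrm{p} = \tfrac14$ makes the high-probability event hold with probability at least $1 - 2\mathrm{p} = \tfrac12$ for each sampled $\Phi$.

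Next I would carry out the arithmetic of the multiplicative factor at $r = r_*$, $\mathrm{p} = \tfrac14$: here $\mathrm{p}^{-1} = 4$, $\ln(\mathrm{p}^{-2}) = \ln 16 = 2\ln 4$ and $\ln 2 = \tfrac12\ln 4$, so for each $i > r_*$ the bracket of \Cref{thm:upper_bound_with_proba} becomes
\begin{align*}
    1 + 8 r_* \bigpar{2\ln 4 + \tfrac12 \ln(4)\, r_*}\ffrac{\sigma_{\max}^2 - \sigma_i^2}{\sigma_{r_*}^2}\frac{\sigma_i^{4\alpha}}{\sigma_{r_*}^{4\alpha}} = 1 + 4\ln(4)\, r_*(r_* + 4)\ffrac{\sigma_{\max}^2 - \sigma_i^2}{\sigma_{r_*}^2}\frac{\sigma_i^{4\alpha}}{\sigma_{r_*}^{4\alpha}}.
\end{align*}
Since $r_* \geq 1$ gives $r_* + 4 \leq 4(r_*+1)$, the coefficient is at most $16\ln(4)\,r_*(r_*+1) \leq 32\ln(4)\,r_*(r_*+1)$. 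Distributing $\sum_{i>r_*}\sigma_i^2$ over the two summands then splits the bound into $\sum_{i>r_*}\sigma_i^2$, the minimal reconstruction error of \Cref{prop:lower_bound_with_proba}, plus precisely the $\epsilon$ claimed in \eqref{eq:upper_bound_with_proba_1_on_2}.

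To upgrade ``with probability $\tfrac12$'' into the ``at least one couple'' guarantee I would use independence: as the $(\Phi_j)_{j=1}^m$ are drawn independently and each produces the above bound with probability at least $\tfrac12$, the probability that \emph{every} draw fails is at most $2^{-m}$, hence at least one draw succeeds with probability $1 - 2^{-m}$, which exceeds $\mathrm{P}$ once $m$ is of order $-\log_2(1-\mathrm{P})$, matching the stated choice of $m$. Finally, to pass from $r = r_*$ to any $r_* \leq r \leq n\wedge d$, I would invoke monotonicity of the projection error: if $\PP,\PP'$ denote the projectors onto the $r$- and $r_*$-dimensional column spans with $\mathrm{Im}(\PP')\subset\mathrm{Im}(\PP)$, then projecting onto the larger subspace only decreases $\sqrdnrm{\SS - \SS\PP}_\frob$, so the $r = r_*$ bound dominates. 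I expect this last step to be the main obstacle: justifying the nesting $\mathrm{Im}(\PP')\subset\mathrm{Im}(\PP)$ is delicate, because independently sampled $\Phi_j$ of different widths need not yield nested column spans, so one must either compare against the best rank-$r_*$/rank-$r$ projectors or couple the two samplings. The constant-chasing and the probability amplification, by contrast, are routine.
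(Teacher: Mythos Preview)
Your outline matches the paper's proof almost exactly: apply \Cref{thm:upper_bound_with_proba} at $r=r_*$, $\mathrm{p}=\tfrac14$, simplify the constant, then amplify via independent resampling and finish by monotonicity of the projection error under nested images. The only loose end you flag---how to justify $\mathrm{Im}(\PP')\subset\mathrm{Im}(\PP)$---is resolved precisely by the second option you mention, coupling the two samplings: the paper takes $\Phi'\in\R^{d\times r_*}$ to be the \emph{first $r_*$ columns of the same} $\Phi\in\R^{d\times r}$, so that $\VV' = (\SS^\top\SS)^\alpha\SS^\top\Phi'$ has $\colsp{\VV'}\subset\colsp{\VV}$ tautologically, and since $\Phi'$ is itself a standard Gaussian matrix of the right dimensions, \Cref{thm:upper_bound_with_proba} applies to it at $r=r_*$. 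No comparison to ``best'' projectors is needed.
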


We can make the following remarks.
\begin{itemize}[leftmargin=*]
    \item \textbf{Dominant term.} The dominant term for $\epsilon$ is proportional to $r_*^2$ while it is proportional to $r_*$ for \citet{halko2011finding} in the scenario $\alpha = 0$. Nonetheless, the exacerbated $r_*^2$ rate is mitigated by its appearance within a logarithmic as we have  $ \alpha = \Omega\bigpar{\frac{\log( \sigma_{\max} d r_*^2  \epsilon^{-1})}{\log(\sigma_{r_*}) - \log(\sigma_{r_*+1}) }}$. In other words, doubling $\alpha$ yields an equivalent rate. 
    \item \textbf{Impact of  $\alpha$.} Given that for any $i$ in $\{r_{*} +1, \dots, n \wedge d\}$, we have $\sigma_i \leq \sigma_{r_*}$, increasing $\alpha$ reduces $\epsilon$ by a factor $\sigma_{r_* +1}^{4\alpha}/\sigma_{r_*}^{4\alpha}$ and improves the convergence of the algorithm towards the minimal error $\epsilon_{\min}$. This has a major impact in the particular regime underlined by \Cref{cor:particular_case}, \emph{where we get a finite number of communication rounds!}
    \item \textbf{``Comparison'' with related works.} \citet{halko2011finding} provide a result solely on the $2$-norm which is not comparable to our Frobenius-norm: they obtain $\alpha = \Omega(\frac{\sigma_{r_* + 1} \log(d)}{\epsilon})$.
    Our asymptotic rate on $\alpha$ would be better if the quantities were comparable.  \citet{li2021communication} provide a result solely on the $2$-norm distance between eigenspaces which is again not comparable: they obtain $\alpha = \Omega\bigpar{\frac{\sigma_{r_*} \log(d \epsilon^{-1})}{\sigma_{r_*} - \sigma_{r_*+1}}}$. In the regime where $\sigma_{r_* + 1} \ll 1$, it can not be equal to zero, unlike us. Note that however in the regime $\sigma_{r_*} \approx \sigma_{r_+ 1 }$, the two bounds would be equivalent.
    \item \textbf{Value of $m$.} Sampling $m = 10$ random matrices is enough to have at least one of them resulting to verify \Cref{eq:upper_bound_with_proba_1_on_2} with probability $1 - 10^{-3}$.  
\end{itemize}

The next corollary emphasizes the special regime of a full-rank matrix $\SS$ s.t. $\sigma_{\max}, \dots, \sigma_{r_*} = o(1)$ and $\sigma_{r_* + 1} \ll 1$, which is of great interest as raised by \Cref{cor:bound_with_r_star}. In this regime, increasing $\alpha$ and using gradient descent is particularly efficient in terms of communication cost~and~precision~$\epsilon$. 

\begin{corollary}[Full-rank scenario with small $\sigma_{\max} / \sigma_{r_*}$]
\label{cor:particular_case}
    Let $\lambda, \xi > 0$. Suppose the first $r_*$ (resp. the last $n \wedge d - r_*$) singular values of $\SS$ are equal to a large $\lambda$ (resp. to a small $\xi$), then \Cref{algo:gd_U} has a linear rate of convergence determined by $\kappa^2 \leq O(r_*^2 + rd \xi / \lambda) = O(r_*^2)$, and we have an error $\epsilon = O(dr_*^2\xi^{4\alpha + 2} / \lambda^{4\alpha})$.
\end{corollary}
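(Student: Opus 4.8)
The plan is to treat \Cref{cor:particular_case} as a direct specialization of \Cref{thm:bound_on_kappa} (for the rate) and \Cref{cor:bound_with_r_star} (for the error $\epsilon$) to the two-level spectrum prescribed here, so no new machinery is needed; the work lies entirely in substituting the explicit singular values and simplifying the resulting constants asymptotically. The first step is to record the spectrum: by assumption $\sigma_1 = \cdots = \sigma_{r_*} = \lambda$ and $\sigma_{r_*+1} = \cdots = \sigma_{n \wedge d} = \xi$, so in particular $\sigma_{\max} = \lambda$. I would run the algorithm with latent dimension $r = r_*$, which gives $\sigma_r = \sigma_{r_*} = \lambda$ and $\sigma_{r+1} = \sigma_{r_*+1} = \xi$; this is the choice that keeps the signal part well-conditioned and is consistent with the range $r \geq r_*$ used in \Cref{cor:bound_with_r_star}.

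For the rate, I would plug these values into the bound $\kappa_{\mathrm{p}}^2$ of \Cref{thm:bound_on_kappa} at a fixed confidence level $\mathrm{p}$ (say $\mathrm{p} = 1/6$), absorbing $\mathrm{p}^{-2}$ and $\log(2\mathrm{p}^{-1})$ into the $O(\cdot)$. The first term $9 r^2 \sigma_{\max}^{2(2\alpha+1)} / \sigma_r^{2(2\alpha+1)}$ collapses to $9 r_*^2$ because $\sigma_{\max} = \sigma_{r_*} = \lambda$, while the second term becomes $4 r_* (d + \log(2\mathrm{p}^{-1})) (\xi/\lambda)^{2\alpha}$. Using $\xi < \lambda$ and $\alpha \geq 1$ I would invoke $(\xi/\lambda)^{2\alpha} \leq \xi/\lambda$, yielding $\kappa^2 = O(r_*^2 + r_* d\, \xi/\lambda)$; under the assumed regime $\xi \ll \lambda$ the second term is lower-order and the rate is $O(r_*^2)$, as claimed.

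For the error, I would substitute the same spectrum into the expression for $\epsilon$ in \Cref{cor:bound_with_r_star}. For every index $i > r_*$ one has $\sigma_i = \xi$, so each summand is identical; there are $n \wedge d - r_* \leq d$ of them, which supplies the factor $O(d)$. Within a summand, $\sigma_i^2 = \xi^2$, $\sigma_{r_*} = \lambda$, $r_*(r_*+1) = O(r_*^2)$, and $\sigma_{\max}^2 - \sigma_i^2 = \lambda^2 - \xi^2 \leq \lambda^2$ so that $(\sigma_{\max}^2 - \sigma_i^2)/\sigma_{r_*}^2 = O(1)$, while $\sigma_i^{4\alpha}/\sigma_{r_*}^{4\alpha} = (\xi/\lambda)^{4\alpha}$. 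Collecting the factor $\xi^2$ coming from $\sum_{i>r_*}\sigma_i^2$ together with $(\xi/\lambda)^{4\alpha}$ gives $\xi^{4\alpha+2}/\lambda^{4\alpha}$, hence $\epsilon = O(d\, r_*^2\, \xi^{4\alpha+2}/\lambda^{4\alpha})$, matching the statement.

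I expect essentially no hard step here: the result is a ``corollary of corollaries'' and the only places demanding care are (i) selecting $r = r_*$ so that $\sigma_r = \lambda$ rather than $\xi$, since a larger $r$ would make $\sigma_r = \xi$ and blow up the leading term, and (ii) justifying the monotonicity bound $(\xi/\lambda)^{2\alpha} \leq \xi/\lambda$, which needs $\alpha \geq 1$ and $\xi/\lambda \leq 1$. The probabilistic wrapper (sampling $m$ matrices $\Phi$ so that at least one initialization satisfies the bound with the prescribed probability) is inherited verbatim from \Cref{thm:bound_on_kappa} and \Cref{cor:bound_with_r_star} and requires no modification.
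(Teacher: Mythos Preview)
Your proposal is correct and matches the paper's (implicit) approach: the paper states \Cref{cor:particular_case} without proof, treating it as an immediate specialization of \Cref{thm:bound_on_kappa} and \Cref{cor:bound_with_r_star} to the two-level spectrum $\sigma_1=\cdots=\sigma_{r_*}=\lambda$, $\sigma_{r_*+1}=\cdots=\sigma_{n\wedge d}=\xi$ with $r=r_*$, which is exactly the substitution you carry out. Your two flagged care points---choosing $r=r_*$ so that $\sigma_r=\lambda$, and using $(\xi/\lambda)^{2\alpha}\le \xi/\lambda$ for $\alpha\ge 1$ (consistent with the hypothesis $\alpha\in\N^*$ in \Cref{cor:bound_with_r_star})---are precisely the ones that matter.
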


In the next section, we illustrate the insights highlighted by our theorems. on both synthetic and real data. 
In particular, on \Cref{fig:with_noise,fig:synth}, we illustrate the setting of \Cref{cor:particular_case} with $\lambda = 1$, $\xi = 10^{-6}$, $d=200$, and $r_* = 5$ resulting to $\kappa^2 \approx 25$ and $\epsilon_{\alpha = 0} \approx 5^{-7}$ for $\alpha = 0$; in contrast, for $\alpha=1$, we have $\epsilon_{\alpha = 1} \approx 5^{-37}$ after only two communications!

\section{Experiments}
\label{sec:experiments}

Our code is provided on Github. Experiments have been run on a 13th Gen Intel Core i7 processor with 14 cores. Our benchmark is SVD in the centralized setting (SVD of concatenated matrices)  using \texttt{scipy.linalg.svd}, this corresponds to the green line in all experiments.

\begin{table}
\centering
\begin{tabular}{lccc}
Settings & mnist & celeba & w8a \\
\hline 
dimension $d$ & $784$ & $96$ & $300$\\
latent dimension $r$ & $20$ & $20$ & $20$ \\
training dataset size & $6000$  & $557$ & $49,749$\\
number of clients $N$ & $10$ & $25$ & $25$ \\
\bottomrule
\end{tabular}
\caption{Settings of the experiments.}
\label{tab:settings}
\end{table}

\textbf{Synthetic dataset generation.} We consider synthetic datasets with $N = 25$ clients. For each client $i$ in $\OneToN$, we have $n_i = 200$ and $d=200$. We build a global matrix  $\SS = \XX + \EE$ and then split it across clients. We set $\rank{\XX} = 5$ with $\XX = \UU_\XX \mathbf{\SSigma}_\XX \VV_\XX$, where $\UU_\XX$ (resp. $ \VV_\XX$) are in $\mathcal{O}_{n}(\R)$ (resp. $\mathcal{O}_d(\R)$). $\mathbf{\SSigma}_\XX$ is a diagonal matrix in $\R^{n \times d}$ with the $5$ first values equal to $1$, and the other to $0$. $\EE$ is the noise matrix which elements are independently sampled from a zero-centered Gaussian distribution. 

\begin{figure}
    \centering
    \includegraphics[width=0.37\textwidth]{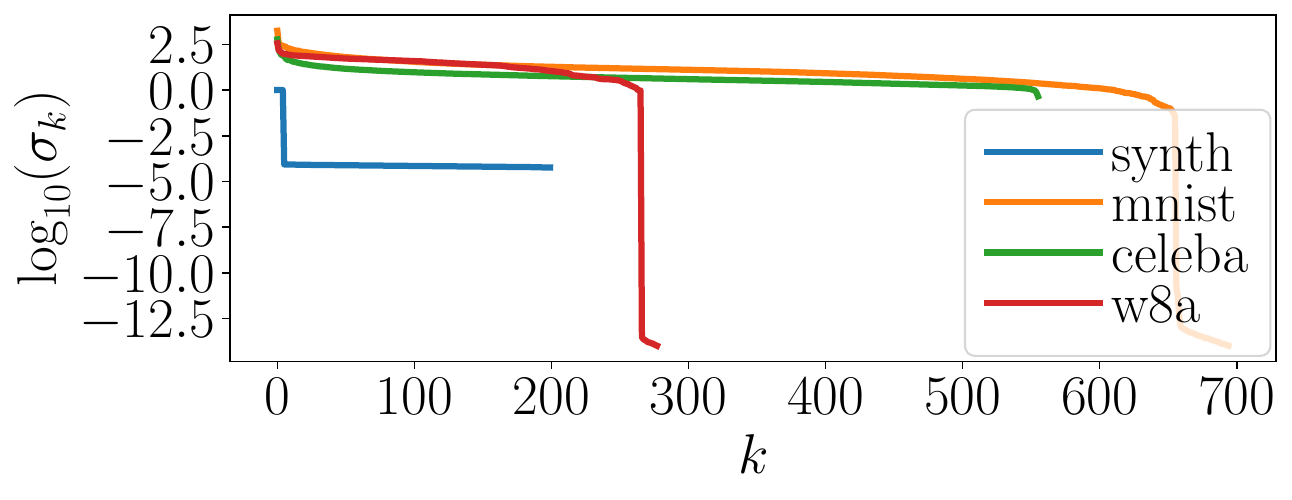}
    \caption{Singular values of the four used datasets.}
    \label{fig:svd}
\end{figure}

\textbf{Real datasets.} We consider three real datasets: mnist \citep{lecun2010mnist}, celeba-200k \citep{liu2015deep} and w8a \citep{chang2011libsvm}. We do not use the whole datasets for mnist and celeba-200k. For mnist (resp. celeba-200k), clients receive images from a single digit (resp from a single celebrity). For w8a, the dataset is split randomly across clients. This results in two image datasets with low (mnist) or high (celeba-200k) complexity with heterogeneous clients, and a tabular dataset with~homogeneous~ones.

We plot the SVD decomposition of each dataset on \Cref{fig:svd}. 
On \Cref{fig:wrt_cond_number}, we run experiments on the synthetic dataset with $r = \rank{\XX} = 5$ for $50$ different samples of $\Phi$. We plot $\sigma^2_{r_*}(\VV) / \sigma^2_{\max}(\VV)$ on the X-axis, and the logarithm of the loss $F$ after $1000$ local iterations on the Y-axis. The goal is therefore to illustrate the impact of the sampled $\Phi$ on the convergence rate.  

On \Cref{fig:wrt_number_of_its}, we run experiments on the four different datasets; we plot the iteration index on the X-axis, and the logarithm of the loss $F$ w.r.t. iterations on the Y-axis. We run a single gradient descent after having sampled $m=20$ random matrices $\Phi$ to take the one resulting in the best condition number $\kappa(\VV)$. We run experiments w./w.o. a momentum $\beta_k = k/(k+3)$, with $k$ the iteration index. The goal is here to illustrate on real-life datasets how the algorithm behaves in~practice. In Table~\ref{tab:exp_related_work}, we contrast our approach with existing algorithms using gradient descent \citep{ward2023convergence,ye2021global} by giving the number of communication to reach a given error $\epsilon + \epsilon_{\min}$, showing its superiority in terms of communication.

\textbf{Observations.}
\begin{itemize}[leftmargin=*]
    \item \Cref{fig:wrt_cond_number} shows the validity of \Cref{thm:cvg_rate,thm:bound_on_kappa} in the scenario without momentum as we obtain a linear convergence rate determined by $\sigma^2_{r_*}(\VV) / \sigma^2_{\max}(\VV)$.
    \item In the low-rank regime (\Cref{fig:without_noise}), we recover $\epsilon = 0$ as stated by \Cref{prop:low_rank_matrix} (up to machine precision). 
    \item In the regime of a full-rank scenario with a small $\sigma_{\max} / \sigma_{r_*}$ (\Cref{fig:synth,fig:with_noise}, scenario highlighted by \Cref{cor:particular_case}), gradient descent is fast (\Cref{thm:bound_on_kappa}) and recovers the exact solutions $(\hat{\UU}^i, \VV)_\iN$ of \eqref{eq:dist_MF}. Further, in this regime having $\alpha \geq 1$ reduces $\epsilon$ (\Cref{thm:upper_bound_with_proba,cor:bound_with_r_star}) which is observed in practice. For such a setting, gradient descent with $\alpha \geq 1$ is the best strategy.
    \item In the regime of a full-rank scenario with a large $\sigma_{\max} / \sigma_r$ (\Cref{fig:celeba,fig:mnist,fig:w8a}), using $\alpha \geq 1$ might lead to a very slow convergence rate. For illustration, on w8a (resp. mnist and celeba-200k) $\kappa(\VV)$ is equal to $4$ (resp to $10$ and $20$) for $\alpha = 0$, and equal to $120$ (resp. to $5000$ and $20000$) for $\alpha = 1$. In practice, the slow convergence rate is observed. However, as taking $\alpha \geq 1$ reduces $\epsilon$ ( \Cref{thm:upper_bound_with_proba,cor:bound_with_r_star}, and observed on \Cref{fig:celeba,fig:mnist,fig:w8a}), it might be preferable in this regime to compute the exact solution of \eqref{eq:dist_MF} with a pseudo-inverse rather than running a gradient descent. If it is not possible to compute the pseudo-inverse or if regularization is used, mandating the use of gradient descent, then it is preferable to take~$\alpha = 0$.
\end{itemize}

\begin{figure}
    \centering     
    \begin{subfigure}{0.495\linewidth}
        \includegraphics[width=1\linewidth]{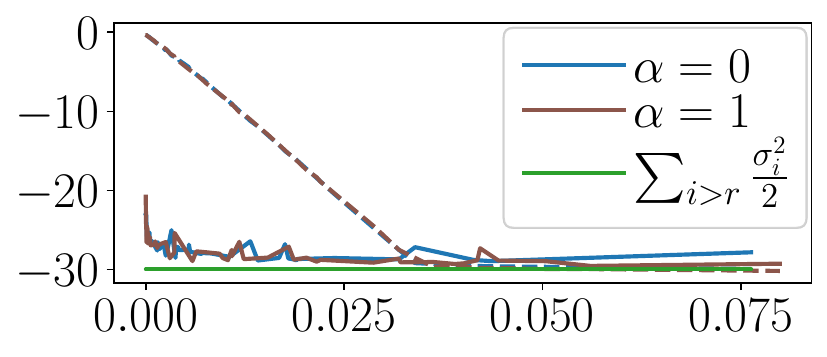}
        \caption{$\EE^i_{kl} = 0$}
        \label{fig:without_noise}
    \end{subfigure}
    \begin{subfigure}{0.495\linewidth}
       \includegraphics[width=1\linewidth]{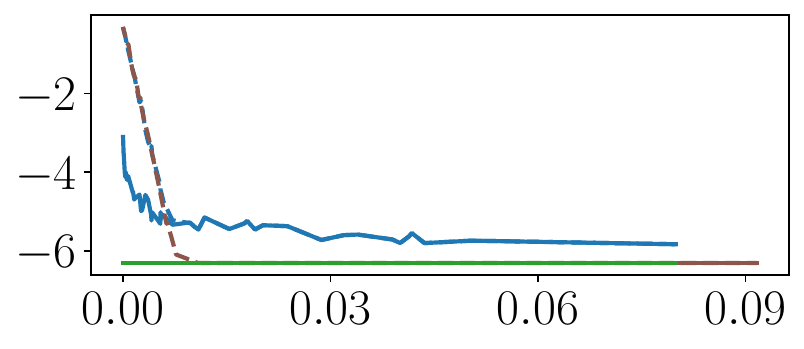}
        \caption{$\EE^i_{kl} \sim \mathcal{N}\big(0, (10^{-6})^2 \big)$}
        \label{fig:with_noise}
    \end{subfigure}
    \caption{ Matrix factorization of a low-rank matrix (left) and full-rank (right). We sample $m = 50$ different $\Phi$. X-axis: $\kappa^{-2}(\VV)$. Y-axis: logarithm error $\log_{10}(\sqrdnrm{\SS - \UU \VV^\top}_\frob)$. Plain line: exact solution. Dashed line: gradient descent after $K=1000$ iterations for each sampled $\Phi$.}
    \label{fig:wrt_cond_number}
\end{figure}

\begin{table}
\centering
\begin{tabular}{lcccc}
Algorithms & synth & w8a & mnist & celeba \\
\hline 
$\alpha=0$ & $1$ & $1$ & $1$ & $1$\\
WK2023 & $26$ & $6$& $35$& $58$ \\
YD2021 & \tiny{$\geq10^{13}$} & \tiny{$\geq10^{20}$} & \tiny{$\geq10^{20}$} & \tiny{$\geq10^{21}$} \\
\bottomrule
Reached error & $-5.5$ & $5.5$& $4.5$ & $5$ \\ 
\end{tabular}
\caption{Number of communications to reach error $\epsilon + \epsilon_{\min}$.}
\label{tab:exp_related_work}
\end{table}

\begin{figure}
    \centering     
    \begin{subfigure}{0.24\linewidth}
        \includegraphics[width=1\linewidth]{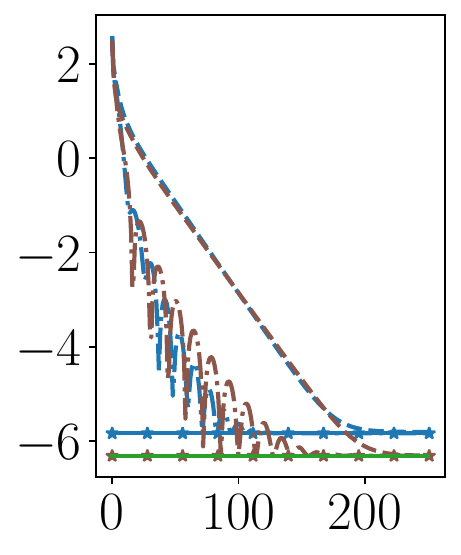}
    \caption{synthetic}
    \label{fig:synth}
    \end{subfigure}
    \begin{subfigure}{0.24\linewidth}
       \includegraphics[width=1\linewidth]{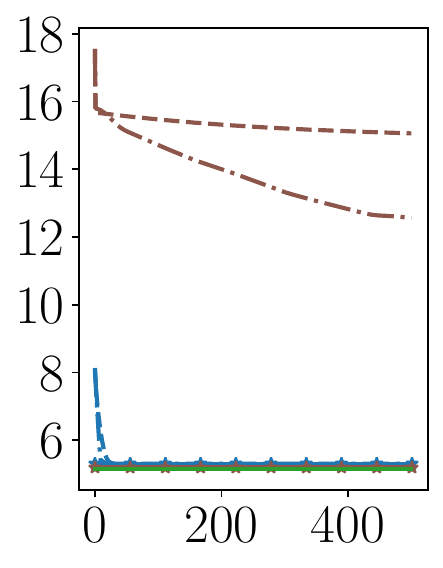}
        \caption{w8a}
        \label{fig:w8a}
    \end{subfigure}
    \begin{subfigure}{0.24\linewidth}
       \includegraphics[width=1\linewidth]{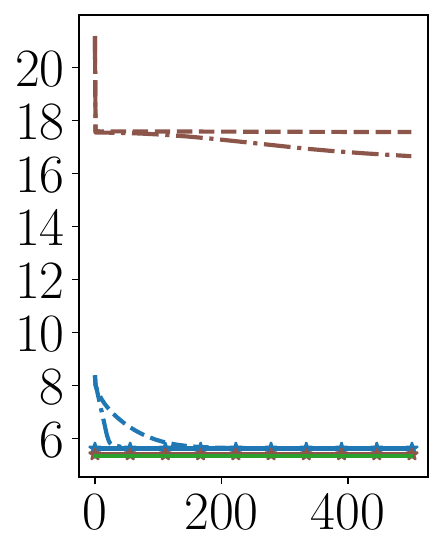}
        \caption{mnist}
        \label{fig:mnist}
    \end{subfigure}
    \begin{subfigure}{0.24\linewidth}
       \includegraphics[width=1\linewidth]{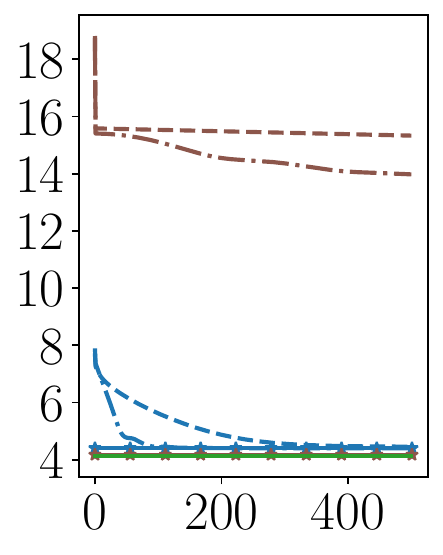}
        \caption{celeba}
        \label{fig:celeba}
    \end{subfigure}
    \caption{Convergence plot on four datasets. X-axis: number of iterations. Y-axis: logarithm error $\log_{10}(\sqrdnrm{\SS - \UU \VV^\top}_\frob)$. Plain line: exact solution. Dashed line: gradient descent. Dashed-dotted line: gradient descent with momentum $\beta_k = k/(k+3)$, with $k$ the iteration index.}
    \label{fig:wrt_number_of_its}
\end{figure}

\section{Conclusion}
\label{sec:conclusion}

In this article, we propose an in-depth analysis of low-rank matrix factorisation algorithm within a federated setting. We propose a variant of the power-method that combines a global power-initialization and a local gradient descent, thus, resulting to a smooth and strongly-convex problem. This setup allows for a finite number of communications, potentially even just a single one. We emphasize and experimentally illustrate the regime of high interest raised by our theory (\Cref{cor:particular_case}). Finally, drawing from \Cref{thm:bound_on_kappa,thm:cvg_rate,thm:upper_bound_with_proba,cor:bound_with_r_star}, we highlight the following key insights~from~our~analysis.

\begin{takeaway}
    Increasing the number of communication $\alpha$ leads to reduce the error $\epsilon$ by a factor $\sigma_{r_*+1}^{4\alpha}/ \sigma_{r_*}^{4\alpha}$, therefore, getting closer to the minimal Frobenius-norm error $\epsilon$.
\end{takeaway}

\begin{takeaway}
    Using a gradient descent instead of an SVD to approximate the exact solution of the strongly-convex problem allows us to bridge two parallel lines of research. Further, we obtain a simple and elegant proof of convergence and all the theory from optimization can be plugged in.
\end{takeaway}

\begin{takeaway}
    By sampling several Gaussian matrices $\Phi$, we improve the convergence rate of the gradient descent. Further, based on random Gaussian matrix theory, it results in a almost surely convergence if we sample $\Phi$ until $\VV$ is well conditioned.
\end{takeaway}

\begin{drawback}
    If gradient descent (\Cref{algo:gd_U}) is used to compute $(\hat{\UU}^i)_{i \OneToN}$, and if we are in the regime where $\sigma_{\max} / \sigma_r \gg 1$, then increasing $\alpha$ results in increasing the condition number and therefore the number of local iterations. Furthermore, in the scenario $\alpha = 0$, the upper bound on $\epsilon$ given by \citet{halko2011finding} is better than ours.
\end{drawback}

Three open directions to this work can be considered. First, it would be interesting to consider the case of decentralized clients where it is not possible to compute a global V across the whole network. Second, instead of computing an exact $\VV$, one could compute an approximation of $\VV$ to reduce further the communication cost, this would lead to stochastic-like gradient descent. Third, the extension of our approach to matrix completion is non-trivial (as we require $\VV$ to be in the span of $\SS$) and could have a lot of interesting applications.

\section*{Acknowledgments}
This work was supported by the French government managed by the Agence Nationale de la Recherche (ANR) through France 2030 program with the reference ANR-23-PEIA-005 (REDEEM project). It was also funded in part by the Groupe La Poste, sponsor of the Inria Foundation, in the framework of the FedMalin Inria Challenge. Laurent Massoulié was supported by the French government under management of Agence Nationale de la Recherche as part of the “Investissements d’avenir” program, reference ANR19- P3IA-0001 (PRAIRIE 3IA Institute).

\bibliography{main.bib}

\onecolumn
\newpage 
\twocolumn

 \begin{center}
	     {\Large{\bf Supplementary material}}
 \end{center}

\appendix

In this appendix, we provide additional information to supplement our work. In \Cref{app:sec:classical_ineq}, we reminds some classical results on matrices. In \Cref{app:sec:proof_gd}, we give the demonstration of \Cref{thm:bound_on_kappa}, in \Cref{app:sec:proof_exact_sol}, we provide the proofs of \Cref{thm:upper_bound_with_proba,cor:bound_with_r_star}, and in \Cref{app:sec:deeper_related_work}, we provide a deeper description of the related works.
	
\setcounter{equation}{0}
\setcounter{figure}{0}
\setcounter{table}{0}
\setcounter{theorem}{0}
\setcounter{lemma}{0}
\setcounter{remark}{0}
\setcounter{proposition}{0}
\setcounter{property}{0}
\setcounter{definition}{0}

\renewcommand{\theequation}{S\arabic{equation}}
\renewcommand{\thefigure}{S\arabic{figure}}
\renewcommand{\thetheorem}{S\arabic{theorem}}
\renewcommand{\thelemma}{S\arabic{lemma}}
\renewcommand{\theproposition}{S\arabic{proposition}}
\renewcommand{\thecorollary}{S\arabic{corollary}}
\renewcommand{\thedefinition}{S\arabic{definition}}
\renewcommand{\theproperty}{S\arabic{property}}
\renewcommand{\theremark}{S\arabic{remark}}
\renewcommand{\thetable}{S\arabic{table}}


\section{Classical inequalities}
\label{app:sec:classical_ineq}

In this section, we recall some well-known results and inequalities.

\subsection{Results on matrices}

The first inequalities give a lower/upper bound of the norm of a matrix-vector product based on the lowest/largest singular value of the matrix.

\begin{proposition}
\label{prop:bound_matrix_vector_norm_2}
    Let $\AA \in \R^{n\times r}$ and $x \in \R^r$, then $\sigma_{\min}(\AA) \|x\|_2 \leq \|\AA x\|_2 \leq \sigma_{\max}(\AA) \|x\|_2$.
\end{proposition}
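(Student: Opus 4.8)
The plan is to reduce this vector-norm inequality to the standard Rayleigh-quotient bounds for the symmetric positive semi-definite Gram matrix $\AA^\top \AA \in \R^{r \times r}$. The starting point is the identity $\|\AA x\|_2^2 = \PdtScl{\AA x}{\AA x} = x^\top \AA^\top \AA x$, which recasts the problem as sandwiching the quadratic form $x^\top \AA^\top \AA x$ between two multiples of $\|x\|_2^2 = x^\top x$.

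Next I would diagonalize. Since $\AA^\top \AA$ is symmetric positive semi-definite, the spectral theorem furnishes an orthonormal eigenbasis $q_1, \dots, q_r$ with eigenvalues $\lambda_1 \geq \dots \geq \lambda_r \geq 0$; by the very definition of singular values these are $\lambda_i = \sigma_i^2(\AA)$, so $\lambda_1 = \sigma_{\max}^2(\AA)$ and $\lambda_r = \sigma_{\min}^2(\AA)$. Writing $x = \sum_{i=1}^r c_i q_i$ with $c_i = q_i^\top x$, orthonormality gives $x^\top x = \sum_{i=1}^r c_i^2$ and $x^\top \AA^\top \AA x = \sum_{i=1}^r \lambda_i c_i^2$. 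Bounding each eigenvalue by the two extremes yields $\sigma_{\min}^2(\AA) \sum_i c_i^2 \leq \sum_i \lambda_i c_i^2 \leq \sigma_{\max}^2(\AA) \sum_i c_i^2$, that is $\sigma_{\min}^2(\AA)\|x\|_2^2 \leq \|\AA x\|_2^2 \leq \sigma_{\max}^2(\AA)\|x\|_2^2$, and taking non-negative square roots gives the claim. Equivalently, one may use the thin SVD $\AA = \UU \SSigma \VV^\top$ and the fact that the orthogonal factors preserve the $2$-norm, so that $\|\AA x\|_2 = \|\SSigma \VV^\top x\|_2$ with $\|\VV^\top x\|_2 = \|x\|_2$; this is the same computation written in coordinates.

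The only point requiring care — rather than any genuine difficulty — is the meaning of $\sigma_{\min}(\AA)$ when $\AA$ lacks full column rank. The argument above implicitly reads $\sigma_{\min}^2(\AA) = \lambda_{\min}(\AA^\top \AA) = \lambda_r(\AA^\top \AA)$, the smallest eigenvalue of the $r \times r$ Gram matrix, which may equal $0$. With this reading the lower bound holds unconditionally and is attained at $x = q_r$, whereas if $\sigma_{\min}$ were instead the smallest \emph{strictly positive} singular value the lower bound would fail on any null vector of $\AA$. In the applications of \Cref{prop:bound_matrix_vector_norm_2} (and of its Frobenius-norm analogue) to the matrix $\VV$ produced by \Cref{algo:distributed_power_iteration}, $\VV$ has full column rank with high probability by \Cref{thm:bound_on_kappa}, so $\sigma_{\min}(\VV) > 0$ and the two readings coincide.
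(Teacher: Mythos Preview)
Your argument is correct. The paper's proof takes a shorter route: it simply invokes the variational characterisation $\sigma_{\min}(\AA) = \inf_{x'\neq 0} \|\AA x'\|_2 / \|x'\|_2$ and $\sigma_{\max}(\AA) = \sup_{x'\neq 0} \|\AA x'\|_2 / \|x'\|_2$ as the \emph{definition}, so that the double inequality is immediate by specialising the infimum and supremum to the given $x$. You instead establish those extremal values from scratch via the spectral decomposition of $\AA^\top\AA$ (equivalently the SVD of $\AA$), which is a genuinely different, slightly longer but self-contained route. The paper's approach buys brevity at the cost of assuming the reader already accepts the min/max-ratio description of the extreme singular values; your approach derives that description and, as a bonus, makes explicit the rank-deficiency caveat that the paper's one-liner glosses over.
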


\begin{proof}
    By definition, $\sigma_{\min}(\AA) = \inf_{x' \in \R^r} \ffrac{\| \AA x'\|_2}{\|x'\|_2} \leq \ffrac{\| \AA x\|_2}{\|x\|_2}\leq \sup_{x' \in \R^r} \ffrac{\| \AA x'\|_2}{\|x'\|_2} = \sigma_{\max}(\AA)$.
\end{proof}

The second classical proposition states a lower/upper bound of the Frobenius-norm of matrix-matrix product based on their lowest/largest singular values. 

\begin{proposition}
\label{prop:bound_normfrob_product_matrix}
    Let $\AA \in \R^{n\times r}$ and $\BB \in \R^{r \times d}$, then:
    \begin{align*}
        &\max\bigpar{\sigma_{\min}(\BB) \|\AA\|_\frob, \sigma_{\min}(\AA) \|\BB\|_\frob} \leq \|\AA \BB \|_\frob\,,
    \end{align*}
and $\|\AA \BB \|_\frob \leq \min\bigpar{\sigma_{\max}(\BB) \|\AA\|_\frob, \sigma_{\max}(\AA) \|\BB\|_\frob}$.
\end{proposition}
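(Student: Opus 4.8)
The plan is to reduce this matrix--matrix Frobenius inequality to the matrix--vector bound already proved in \Cref{prop:bound_matrix_vector_norm_2}, by slicing the product into columns. Writing $\BB = [b_1 \mid \dots \mid b_d]$ with columns $b_j \in \R^r$, the product splits column by column as $\AA\BB = [\AA b_1 \mid \dots \mid \AA b_d]$, so that
\begin{equation*}
\| \AA\BB \|_\frob^2 = \sum_{j=1}^d \| \AA b_j \|_2^2 .
\end{equation*}
Applying \Cref{prop:bound_matrix_vector_norm_2} to each vector $b_j \in \R^r$ gives $\sigma_{\min}(\AA) \| b_j \|_2 \leq \| \AA b_j \|_2 \leq \sigma_{\max}(\AA) \| b_j \|_2$. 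Squaring, summing over $j$, using $\sum_j \| b_j \|_2^2 = \| \BB \|_\frob^2$, and taking square roots, I obtain
\begin{equation*}
\sigma_{\min}(\AA) \| \BB \|_\frob \leq \| \AA\BB \|_\frob \leq \sigma_{\max}(\AA) \| \BB \|_\frob ,
\end{equation*}
which is precisely the pair of bounds involving the singular values of $\AA$.

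For the symmetric pair involving the singular values of $\BB$, I would exploit the invariance of the Frobenius norm under transposition, namely $\| \AA\BB \|_\frob = \| \BB^\top \AA^\top \|_\frob$ and $\| \AA^\top \|_\frob = \| \AA \|_\frob$. Running the very same column-slicing argument on the product $\BB^\top \AA^\top$, where now $\BB^\top \in \R^{d \times r}$ is the left factor and the columns of $\AA^\top$ (the rows of $\AA$) play the role of the vectors in $\R^r$, yields
\begin{equation*}
\sigma_{\min}(\BB^\top) \| \AA \|_\frob \leq \| \AA\BB \|_\frob \leq \sigma_{\max}(\BB^\top) \| \AA \|_\frob .
\end{equation*}
Since a matrix and its transpose share the same nonzero singular values, $\sigma_{\min}(\BB^\top) = \sigma_{\min}(\BB)$ and $\sigma_{\max}(\BB^\top) = \sigma_{\max}(\BB)$, turning this into the desired $\BB$-bounds.

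It remains only to assemble the four inequalities: a quantity that lower-bounds $\| \AA\BB \|_\frob$ in both forms $\sigma_{\min}(\AA)\|\BB\|_\frob$ and $\sigma_{\min}(\BB)\|\AA\|_\frob$ also bounds their maximum from below, and dually the minimum of the two upper bounds is still an upper bound, which gives exactly the stated result. The only point that warrants care -- and the main (mild) obstacle -- is the transpose step: one must fix a single convention for the ``smallest singular value'' of a rectangular factor (equivalently $\sqrt{\lambda_{\min}}$ of the smaller Gram matrix) so that it is genuinely transpose-invariant and consistent with the reading used in \Cref{prop:bound_matrix_vector_norm_2}; in every application in this paper the relevant factors have full column rank, so the two readings coincide and the argument is entirely elementary.
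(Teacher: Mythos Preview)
Your proof is correct and follows essentially the same route as the paper: slice $\AA\BB$ into columns, apply \Cref{prop:bound_matrix_vector_norm_2} termwise, and sum. The only cosmetic difference is in how the roles of $\AA$ and $\BB$ are swapped: the paper invokes the cyclic invariance of the trace, whereas you invoke the transpose invariance $\|\AA\BB\|_\frob = \|\BB^\top\AA^\top\|_\frob$; these are of course equivalent, and your closing caveat about the convention for $\sigma_{\min}$ of a rectangular matrix is a fair point of hygiene that the paper leaves implicit.
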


\begin{proof}
Recall that for any matrix $\mathbf{X} = (\mathbf{X}_1, \dots, \mathbf{X}_d) \in \R^{n \times d}$, we have 
$\sqrdnrm{\mathbf{X}}_\frob = \Tr{\mathbf{X}^\top \mathbf{X}} = \sum_{i=1}^d \|\mathbf{X}_i \|_2^2\,.$
Then, noting $(\AA \BB)_i$ (resp $\BB_i$) the $i$-th column of $\AA\BB$ (resp. $\BB$), we write:
\begin{align*}
    \sqrdnrm{\AA \BB}_F &= \sum_{i=1}^d \|(\AA \BB)_i \|_2^2 = \sum_{i=1}^d \|\AA \BB_i \|_2^2 \\
    &\stackrel{\mathrm{prop.~ \ref{prop:bound_matrix_vector_norm_2}}}{\geq} \sum_{i=1}^d \sigma_{\min}(\AA) \|\BB_i\|_2^2 = \sigma_{\min}(\AA) \sqrdnrm{\BB}_F \,.
\end{align*}
And by the cyclic invariance of the trace, we also have $\sqrdnrm{\AA \BB}_F \geq \sigma_{\min}(\BB) \sqrdnrm{\AA}_F$. Identically, we obtain the upper bound involving the biggest singular value.
\end{proof}

The last proposition of this Subsection gives an equivalent characterization of two projectors using their image or their norm.

\begin{proposition}
\label{app:prop:proj_ineq}
    Let $\PP, \PP'$ in $\R^{d \times d}$ two projectors, then the following proposition are equivalent:
    \begin{enumerate}
        \item $\mathrm{Im}(\PP) \subset \mathrm{Im}(\PP')$,
        \item $\forall x \in \R^{d \times d},~\sqrdnrm{\PP x} \leq \sqrdnrm{\PP'x}$.
    \end{enumerate}
\end{proposition}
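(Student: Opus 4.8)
The plan is to treat $\PP,\PP'$ as \emph{orthogonal} projectors, i.e.\ symmetric idempotents ($\PP^\top=\PP$, $\PP^2=\PP$, and likewise for $\PP'$), which is exactly how they arise in the paper via $\PP=\VV(\VV^\top\VV)^{-1}\VV^\top$. I will work throughout with the (Euclidean/Frobenius) inner product $\pdtscl{\AA}{\BB}=\Tr{\AA^\top\BB}$, so that the argument covers the stated case $x\in\R^{d\times d}$ column-by-column without change. I would first record the two elementary facts I lean on: for such a projector $\sqrdnrm{\PP x}=\pdtscl{\PP x}{\PP x}=\pdtscl{\PP^2 x}{x}=\pdtscl{\PP x}{x}$, and $\PP$ is a contraction, $\sqrdnrm{\PP x}\leq\sqrdnrm{x}$, with equality if and only if $\PP x=x$ (this last point coming from the Pythagorean identity $\sqrdnrm{x}=\sqrdnrm{\PP x}+\sqrdnrm{(\Id-\PP)x}$, valid since $\PP$ and $\Id-\PP$ are complementary orthogonal projectors). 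Then I prove the two implications separately.

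For $(1)\Rightarrow(2)$, assuming $\mathrm{Im}(\PP)\subset\mathrm{Im}(\PP')$, the range of $\PP$ sits inside $\mathrm{Im}(\PP')$, on which $\PP'$ acts as the identity, so $\PP'\PP=\PP$; transposing and using symmetry gives $\PP\PP'=\PP$ as well. I then set $\MM:=\PP'-\PP$ and compute $\MM^2=\PP'-\PP'\PP-\PP\PP'+\PP=\PP'-\PP=\MM$, so $\MM$ is again a symmetric idempotent, hence an orthogonal projector, and in particular positive semidefinite. Using the first recorded fact, $\sqrdnrm{\PP'x}-\sqrdnrm{\PP x}=\pdtscl{\PP'x}{x}-\pdtscl{\PP x}{x}=\pdtscl{\MM x}{x}\geq 0$, which is precisely $(2)$.

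For $(2)\Rightarrow(1)$, I would take any $x\in\mathrm{Im}(\PP)$, so $\PP x=x$ and $\sqrdnrm{\PP x}=\sqrdnrm{x}$. Hypothesis $(2)$ then gives $\sqrdnrm{x}\leq\sqrdnrm{\PP'x}$, while the contraction property gives $\sqrdnrm{\PP'x}\leq\sqrdnrm{x}$; hence $\sqrdnrm{\PP'x}=\sqrdnrm{x}$, and the equality case of the contraction forces $\PP'x=x$, i.e.\ $x\in\mathrm{Im}(\PP')$. Since $x$ was arbitrary in $\mathrm{Im}(\PP)$, this yields $\mathrm{Im}(\PP)\subset\mathrm{Im}(\PP')$, closing the loop.

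The main subtlety is not computational but conceptual: the equivalence hinges on the projectors being \emph{orthogonal} rather than merely idempotent, since for oblique projectors the image inclusion and the norm inequality decouple. Once orthogonality is fixed, the crux of the forward direction is extracting the identity $\PP'\PP=\PP$ (and its transpose) from the image inclusion, as this is exactly what makes $\PP'-\PP$ a projector and hence positive semidefinite; the reverse direction rests entirely on the rigidity of the contraction inequality, namely that an orthogonal projector preserves the norm of a vector only when it fixes it. Everything else is bookkeeping with the Pythagorean identity.
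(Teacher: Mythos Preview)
Your proof is correct. The paper itself does not supply a proof of this proposition: it is listed among the ``classical inequalities'' in the appendix and stated without argument, so there is no paper proof to compare against. Your two implications are clean and standard; the only point worth flagging is the one you already flag yourself, namely that the equivalence requires $\PP,\PP'$ to be \emph{orthogonal} projectors (symmetric idempotents), not merely idempotents. The paper's statement says only ``projectors'', but as you observe, in its applications the projectors are of the form $\VV(\VV^\top\VV)^{-1}\VV^\top$ and hence orthogonal, so your added hypothesis is the right reading.
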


\subsection{Concentration inequalities}

The below Chernoff inequality \citep{chernoff1952measure} is an exponentially decreasing upper bound on the tail of a random variable based on its moment generating function.

\begin{proposition}[Chernoff inequality]
\label{app:tech:prop:chernoff}
Let $X$ be a random variable, from Markov's inequality, for every $t>0$, we have $\Proba{X \geq \delta} \leq \ffrac{\fullexpec{e^{t X}}}{e^{t \delta}}$, and for every $t<0$, we similarly have $
\Proba{X \leq \delta} \leq \ffrac{\fullexpec{e^{t X}}}{e^{t \delta}}$.
\end{proposition}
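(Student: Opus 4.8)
The plan is to obtain both inequalities as immediate consequences of Markov's inequality applied to the auxiliary random variable $e^{tX}$, using only the monotonicity of the exponential map. The single external fact I would invoke is Markov's inequality itself: for any nonnegative random variable $Y$ and any $a > 0$, one has $\Proba{Y \geq a} \leq \FullExpec{Y}/a$. This holds because $a \leq Y$ on the event $\{Y \geq a\}$, so that $a\,\Proba{Y \geq a} \leq \FullExpec{Y}$ after taking expectations, and dividing by $a > 0$ gives the claim.

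For the first bound, I would fix $t > 0$ and set $Y := e^{tX}$, which is almost surely positive and hence nonnegative, so Markov applies. Since $x \mapsto e^{tx}$ is strictly increasing for $t > 0$, the event $\{X \geq \delta\}$ coincides with $\{e^{tX} \geq e^{t\delta}\}$. Applying Markov's inequality with $a = e^{t\delta} > 0$ then yields $\Proba{X \geq \delta} = \Proba{e^{tX} \geq e^{t\delta}} \leq \FullExpec{e^{tX}}/e^{t\delta}$, which is exactly the stated upper-tail bound.

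For the second bound, I would fix $t < 0$ and again set $Y := e^{tX} > 0$. Now $x \mapsto e^{tx}$ is strictly \emph{decreasing}, so multiplying $X \leq \delta$ by the negative number $t$ reverses the inequality to $tX \geq t\delta$, and exponentiating (a monotone increasing operation) shows that $\{X \leq \delta\}$ coincides with $\{e^{tX} \geq e^{t\delta}\}$. The identical application of Markov's inequality with $a = e^{t\delta} > 0$ produces $\Proba{X \leq \delta} \leq \FullExpec{e^{tX}}/e^{t\delta}$.

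There is no genuine obstacle: the whole argument is the reduction to Markov via the change of variable $Y = e^{tX}$, and the only point requiring care is the sign bookkeeping in the case $t < 0$, where the monotone decrease of the exponential flips the direction of the event and thereby correctly converts an upper-tail statement about $e^{tX}$ into a lower-tail statement about $X$. Positivity of $e^{tX}$ ensures Markov is applicable irrespective of the sign of $X$, and $e^{t\delta} > 0$ makes the division legitimate throughout.
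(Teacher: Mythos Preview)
Your proof is correct and is precisely the standard derivation the paper alludes to in the statement itself (``from Markov's inequality''); the paper does not spell out a separate proof, so there is nothing further to compare.
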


Next inequality is taken from \citet{vershynin_2012} and characterizes the extreme singular values of random Gaussian matrices.

\begin{proposition}[Wishart distributions, see Theorem 5.32 and Proposition 5.35
  from \citet{vershynin_2012}] 
\label{app:prop:whishart_concentration_ineq}
  Let~$r \leq d$ and $\XX$ be a $d \times r$ matrix whose entries follow independent standard normal distributions. For any number~$t \geq 0$, we have:
$$
\Proba{\sigma_{\min}(\XX) \geq \sqrt{d}-\sqrt{r}-t } \geq 1 - \exp(-t^2 / 2) \,,
$$
and
$$
\Proba{\sigma_{\max}(\XX) \leq \sqrt{d} + \sqrt{r} + t } \geq 1 - \exp(-t^2 / 2) \,.
$$
Therefore, for every $t \geq 0$, with probability at least $1 - 2 \exp(-t^2 / 2)$, one has:
\begin{align*}
    \sqrt{d}-\sqrt{r}-t \leq \sigma_{\min}(\XX) \leq \sigma_{\max}(\XX) \leq \sqrt{d} + \sqrt{r} + t \,.
\end{align*}

\end{proposition}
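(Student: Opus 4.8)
The plan is to prove the two one-sided tail bounds separately and then combine them by a union bound to obtain the two-sided statement. Each one-sided bound rests on two independent ingredients: that $\sigma_{\max}(\XX)$ and $\sigma_{\min}(\XX)$ are $1$-Lipschitz functions of the entries of $\XX$, so that Gaussian concentration pins them around their means; and sharp bounds $\E[\sigma_{\max}(\XX)] \leq \sqrt{d}+\sqrt{r}$ and $\E[\sigma_{\min}(\XX)] \geq \sqrt{d}-\sqrt{r}$ on those means.

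First I would establish the Lipschitz property. Viewing $\XX \in \R^{d\times r}$ as a vector of its $dr$ entries equipped with the Frobenius norm, both $\XX \mapsto \sigma_{\max}(\XX)$ and $\XX \mapsto \sigma_{\min}(\XX)$ are $1$-Lipschitz; this is immediate from Weyl's perturbation bound $|\sigma_k(\XX)-\sigma_k(\XX')| \leq \|\XX-\XX'\|_2 \leq \|\XX-\XX'\|_\frob$, or from the variational formula $\sigma_{\max}(\XX)=\sup_{\|u\|_2=\|v\|_2=1}\pdtscl{u}{\XX v}$. Since the entries of $\XX$ form a standard Gaussian vector in $\R^{dr}$, the Gaussian concentration inequality for $1$-Lipschitz functions then gives, for every $t \geq 0$, both $\Proba{\sigma_{\max}(\XX) \geq \E[\sigma_{\max}(\XX)]+t} \leq \exp(-t^2/2)$ and $\Proba{\sigma_{\min}(\XX) \leq \E[\sigma_{\min}(\XX)]-t} \leq \exp(-t^2/2)$.

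Next I would bound the two expectations by Gaussian comparison. Consider the bilinear Gaussian process $Z_{u,v} := \pdtscl{u}{\XX v}$ indexed by unit vectors $u \in \R^d$, $v \in \R^r$, together with the comparison process $W_{u,v} := \pdtscl{g}{u}+\pdtscl{h}{v}$ where $g \sim \mathcal{N}(0,\Id_d)$ and $h \sim \mathcal{N}(0,\Id_r)$ are independent. A short computation of the increments shows $\E[(Z_{u,v}-Z_{u',v'})^2] \leq \E[(W_{u,v}-W_{u',v'})^2]$ for all indices, with equality whenever $v=v'$. Feeding this into the Sudakov--Fernique inequality applied to $\sigma_{\max}(\XX)=\sup_{u,v}Z_{u,v}$ gives $\E[\sigma_{\max}(\XX)] \leq \E[\sup_{u,v}W_{u,v}] = \E\|g\|_2+\E\|h\|_2 \leq \sqrt{d}+\sqrt{r}$, using $\E\|g\|_2 \leq \sqrt{\E\|g\|_2^2}=\sqrt{d}$. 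For the matching lower bound I would exploit the min--max representation $\sigma_{\min}(\XX)=\inf_{v}\sup_{u}Z_{u,v}$ together with $\inf_v\sup_u W_{u,v}=\|g\|_2-\|h\|_2$, and invoke Gordon's Gaussian min--max comparison inequality, which upgrades the Sudakov--Fernique argument to $\inf\sup$ quantities under exactly the increment configuration found above, yielding $\E[\sigma_{\min}(\XX)] \geq \E\|g\|_2-\E\|h\|_2$.

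Substituting these expectation bounds into the two concentration estimates produces the two displayed one-sided inequalities, and a union bound over the two failure events (each of probability at most $\exp(-t^2/2)$) gives the claim with probability at least $1-2\exp(-t^2/2)$. The main obstacle is the lower bound on $\E[\sigma_{\min}(\XX)]$: unlike the upper bound, it is a min--max and therefore lies outside the reach of the elementary Sudakov--Fernique inequality, requiring the full strength of Gordon's theorem and careful verification of its covariance hypotheses in the correct directions. A second, easily overlooked point is the passage from $\E\|g\|_2-\E\|h\|_2$ to the clean $\sqrt{d}-\sqrt{r}$: since $\E\|g\|_2<\sqrt{d}$ this is not automatic and relies on the fact that $k\mapsto\sqrt{k}-\E\|g_k\|_2$ is non-increasing, so that (as $r\leq d$) the deficit of the $r$-dimensional term dominates that of the $d$-dimensional one.
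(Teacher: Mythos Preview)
The paper does not supply its own proof of this proposition: it is stated in the appendix as a quoted result from \citet{vershynin_2012} (Theorem~5.32 and Proposition~5.35) and is used as a black box in the proofs of Theorems~2 and~3. There is therefore no ``paper's proof'' to compare against.

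That said, your outline is correct and is essentially the argument Vershynin gives: Lipschitzness of the extreme singular values combined with Gaussian concentration reduces the problem to bounding $\E[\sigma_{\max}(\XX)]$ and $\E[\sigma_{\min}(\XX)]$, and those bounds are obtained by the Slepian/Sudakov--Fernique comparison for the supremum and by Gordon's min--max inequality for the infimum--supremum, using the comparison process $W_{u,v}=\pdtscl{g}{u}+\pdtscl{h}{v}$. Your increment computation and the identification of the equality case $v=v'$ are the right verifications for Gordon's hypotheses. You are also right to flag the passage from $\E\|g\|_2-\E\|h\|_2$ to $\sqrt{d}-\sqrt{r}$ as a genuine (if small) step; the monotonicity of $k\mapsto\sqrt{k}-\E\|g_k\|_2$ is exactly what is needed, and is the point that is most often glossed over in presentations of this result.
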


The final concentration inequality is taken from \citet{chen2005condition} and gives an upper bound for the tails of the condition number distributions of random rectangular Gaussian matrices.

\begin{proposition}[Tail of Wishart distributions, see Lemma 4.1
  from \citet{chen2005condition}] 
\label{app:prop:whishart_concentration_tail_ineq}
  Let~$r \leq d$ and $\XX$ be a $d \times r$ matrix whose entries follow independent standard normal distributions. For any number~$t > 0$, we have:
\begin{align*}
&\Proba{\frac{\sigma_{\max}(\XX)}{\sigma_{\min}(\XX)} \geq t, \sigma_{\min}(\XX) \leq  \frac{\sqrt{d}}{t} }  < \Proba{\sigma_{\min}(\XX) <  \frac{\sqrt{d}}{t} } \\
&\qquad< \frac{1}{\Gamma(d-r+2)} \bigpar{\frac{d}{t}}^{d- r + 1} \,,
\end{align*}

where $\Gamma$ is the Gamma function. In particular, if $\XX$ is a $r \times r$ matrix, we obtain:
\begin{align*}
\Proba{\frac{\sigma_{\max}(\XX)}{\sigma_{\min}(\XX)} \geq t, \sigma_{\min}(\XX) \leq  \frac{\sqrt{r}}{t} }  &< \Proba{\sigma_{\min}(\XX) <  \frac{\sqrt{r}}{t} } \\
&< \frac{r}{t} \,,
\end{align*}
and furthermore, we have $\Proba{\kappa_{\max}(\hat{\Phi}_{\leq r}) >  r^2 t^2} < 3 / t$.

\end{proposition}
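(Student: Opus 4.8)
The plan is to handle the three assertions in turn, leaning on the cited estimate of \citet{chen2005condition} for the marginal tail bound and then deriving the condition-number inequality myself via a union-bound split. For the general $d \times r$ chain, the first inequality is pure set inclusion: the event $\{\sigma_{\max}(\XX)/\sigma_{\min}(\XX) \geq t,~\sigma_{\min}(\XX) \leq \sqrt{d}/t\}$ sits inside $\{\sigma_{\min}(\XX) \leq \sqrt{d}/t\}$, and the inclusion is strict in measure because the extra condition $\sigma_{\max}/\sigma_{\min}\geq t$ excises a set of positive probability. The substantive step is bounding $\Proba{\sigma_{\min}(\XX) < \sqrt{d}/t}$. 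Writing $W = \XX^\top \XX$, we have $\sigma_{\min}(\XX)^2 = \lambda_{\min}(W)$, so it suffices to control $\Proba{\lambda_{\min}(W) < d/t^2}$ for the real Wishart matrix $W$. Here I would invoke the explicit density of the smallest eigenvalue of a real Wishart matrix, whose behaviour near the origin is $\propto \lambda^{(d-r-1)/2}$; integrating it over $[0, d/t^2]$ yields a leading term proportional to $(d/t^2)^{(d-r+1)/2}=(d/t)^{d-r+1}d^{-(d-r+1)/2}$, and collecting the normalising constants of the Wishart law produces precisely the factor $1/\Gamma(d-r+2)$. This is Lemma~4.1 of \citet{chen2005condition}, which I would cite rather than reprove.

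Setting $d=r$ gives $d-r+1=1$ and $\Gamma(d-r+2)=\Gamma(2)=1$, so the general bound specialises to $\Proba{\sigma_{\min}(\XX) < \sqrt{r}/t} < r/t$, the joint-event inequality again being immediate. To pass to the final claim, I would first note that $\hat{\Phi} = \UU_*^\top \Phi$ has i.i.d.\ standard Gaussian entries by rotation invariance (as $\UU_*$ is orthogonal), so its top block $\hat{\Phi}_{\leq r}$ is an $r\times r$ standard Gaussian matrix and the square-case bound applies to it.

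The condition-number estimate is the step genuinely belonging to this paper, and where I would spend the effort. I would bound $\Proba{\kappa_{\max}(\hat{\Phi}_{\leq r}) > r^2 t^2}$ by splitting on the size of $\sigma_{\max}$: with the threshold $M = 2 r^{3/2} t$,
\begin{align*}
    \Proba{\kappa_{\max}(\hat{\Phi}_{\leq r}) > r^2 t^2} \leq \Proba{\sigma_{\min}(\hat{\Phi}_{\leq r}) \leq M/(r^2 t^2)} + \Proba{\sigma_{\max}(\hat{\Phi}_{\leq r}) > M}\,,
\end{align*}
since $\kappa_{\max}>r^2t^2$ together with $\sigma_{\max}\leq M$ forces $\sigma_{\min}\leq M/(r^2t^2)$. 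The first term equals $\Proba{\sigma_{\min}\leq 2\sqrt{r}/(rt)}$, which the square-case bound controls by $2/t$. For the second term I would apply the Gaussian concentration of $\sigma_{\max}$ from \Cref{app:prop:whishart_concentration_ineq}: writing $M = 2\sqrt{r}+\tau$ with $\tau = 2\sqrt{r}(rt-1)$, the tail is at most $\exp(-\tau^2/2)$, which in the only non-vacuous regime $t>3$ (where $3/t<1$, forcing $r\geq1$ and $rt\geq2$) is comfortably below $1/t$. Summing the two contributions gives the announced $3/t$.

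The analytic heart of the statement is the near-origin behaviour of the smallest-eigenvalue density of the Wishart law underlying the general bound; this I would quote from \citet{chen2005condition} rather than re-derive the Jacobian and normalising constants, so it is not really \emph{my} obstacle. The delicate bookkeeping on my side is the calibration of $M$ so that the $\sigma_{\min}$ contribution is exactly $2/t$ while the $\sigma_{\max}$ tail stays below $1/t$, which is what pins the constant $3$; any looser choice inflates the constant, and getting it to land on $3$ is the one place requiring care.
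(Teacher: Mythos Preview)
The paper does not supply a proof here: the proposition sits in the ``classical inequalities'' appendix, attributed to \citet{chen2005condition}, and even the final ``furthermore'' claim---the only part using paper-specific notation, later invoked in the proof of \Cref{thm:bound_on_kappa}---is asserted without justification. So there is no paper-side argument to compare against.

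Your proof is correct and fills exactly the gap the paper leaves open. The set-inclusion step and the $d=r$ specialisation are immediate, and you rightly defer the substantive tail bound on $\sigma_{\min}$ to Chen--Dongarra. Your derivation of $\Proba{\kappa_{\max}(\hat{\Phi}_{\leq r}) > r^2 t^2} < 3/t$ via the split on $\sigma_{\max}$ at threshold $M=2r^{3/2}t$ is sound: the $\sigma_{\min}$ contribution lands at $2/t$ by Chen's square-case bound with $s=rt/2$, and the $\sigma_{\max}$ tail $\exp(-\tau^2/2)$ with $\tau=2\sqrt{r}(rt-1)$ is comfortably below $1/t$ in the non-vacuous range $t>3$, since $2(t-1)^2\geq \ln t$ there. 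The only point you leave slightly loose is the strictness of the first inequality, which requires $\Proba{\kappa<t,\ \sigma_{\min}\leq\sqrt{d}/t}>0$; this follows from the Gaussian density having full support, but in any case the strictness is irrelevant for the downstream use in \Cref{thm:bound_on_kappa}.
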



\section{Proof of Subsection 3.2}
\label{app:sec:proof_gd}

In this Subsection, we give the demonstration of \Cref{thm:bound_on_kappa}.

\begin{theorem}
    Under the distributed power initialization (\Cref{algo:distributed_power_iteration}), considering \Cref{prop:smooth,prop:stgly_cvx}, for any $\mathrm{p}$ in $[0,1]$, with probability at least $1-3\mathrm{p}$, we have $\kappa(\VV)^2 < \kappa_{\mathrm{p}}^2$, with:
    \begin{align*}
        \kappa_{\mathrm{p}}^2 := \frac{1}{\mathrm{p}^2} \bigpar{ 9r^2 \frac{ \sigma^{2 (2 \alpha + 1)}_{\max}}{ \sigma^{2 (2 \alpha + 1)}_{r} } +  4 r \bigpar{d +\log(2\mathrm{p}^{-1})} \ffrac{ \sigma_{r+1}^{2 \alpha}}{\sigma_{r}^{2 \alpha}} } \,.
    \end{align*}

    Furthermore, with probability $\mathrm{P}$, if we sample $m = \lfloor - \log_2(1 - \mathrm{P}) \rfloor$ independent matrices $(\Phi_j)_{j=1}^m$ to form $\VV_j = \SS^\alpha \Phi_j$ and run \Cref{algo:gd_U}, at least one initialization results to a convergence rate upper bounded by $1 - \kappa_{\mathrm{p}}^{-2}$. 
\end{theorem}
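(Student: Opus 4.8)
The plan is to reduce everything to the extreme singular values of a scaled Gaussian matrix, and then invoke the Wishart concentration inequalities recalled in \Cref{app:prop:whishart_concentration_ineq,app:prop:whishart_concentration_tail_ineq}. First I would plug the SVD $\SS=\UU_*\SSigma\VV_*^\top$ into the output of \Cref{algo:distributed_power_iteration}, writing $\VV=(\SS^\top\SS)^\alpha\SS^\top\Phi=\VV_*D\hat{\Phi}$ with $D=\Diag{\sigma_1^{2\alpha+1},\dots,\sigma_{n\wedge d}^{2\alpha+1}}$ and $\hat{\Phi}:=\UU_*^\top\Phi$. Two facts make this the right reduction: since $\VV_*$ has orthonormal columns, $\sigma_j(\VV)=\sigma_j(D\hat{\Phi})$ for every $j$, so $\kappa(\VV)=\kappa(D\hat{\Phi})$; and by rotational invariance of the standard Gaussian, $\hat{\Phi}$ has the same law as $\Phi$ (this is precisely why the probability is stated over the rotated matrix $\hat{\Phi}$). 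As only the top $n\wedge d$ rows of $\hat{\Phi}$ meet the nonzero entries of $D$, I may treat $\hat{\Phi}$ as an $(n\wedge d)\times r$ Gaussian matrix.

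Next I would split the spectrum at index $r$: let $\hat{\Phi}_1\in\R^{r\times r}$ be the top $r$ rows of $\hat{\Phi}$ and $\hat{\Phi}_2$ the remaining $n\wedge d-r$ rows, with $D_1=\Diag{\sigma_1^{2\alpha+1},\dots,\sigma_r^{2\alpha+1}}$ and $D_2=\Diag{\sigma_{r+1}^{2\alpha+1},\dots}$. For the lower bound, $\|D\hat{\Phi}x\|_2^2=\|D_1\hat{\Phi}_1x\|_2^2+\|D_2\hat{\Phi}_2x\|_2^2\geq\|D_1\hat{\Phi}_1x\|_2^2$ for all $x$, so dropping the bottom block only decreases the smallest singular value and, by \Cref{prop:bound_matrix_vector_norm_2}, $\sigma_{\min}(\VV)\geq\sigma_{\min}(D_1\hat{\Phi}_1)\geq\sigma_r^{2\alpha+1}\sigma_{\min}(\hat{\Phi}_1)$. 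For the upper bound, applying $\lambda_{\max}(A+B)\leq\lambda_{\max}(A)+\lambda_{\max}(B)$ to $\VV^\top\VV=\hat{\Phi}_1^\top D_1^2\hat{\Phi}_1+\hat{\Phi}_2^\top D_2^2\hat{\Phi}_2$ gives $\sigma_{\max}^2(\VV)\leq\sigma_{\max}^{2(2\alpha+1)}\sigma_{\max}^2(\hat{\Phi}_1)+\sigma_{r+1}^{2(2\alpha+1)}\sigma_{\max}^2(\hat{\Phi}_2)$. Dividing the two yields
\[
\kappa^2(\VV)\leq\frac{\sigma_{\max}^{2(2\alpha+1)}}{\sigma_r^{2(2\alpha+1)}}\,\kappa^2(\hat{\Phi}_1)+\frac{\sigma_{r+1}^{2(2\alpha+1)}}{\sigma_r^{2(2\alpha+1)}}\,\frac{\sigma_{\max}^2(\hat{\Phi}_2)}{\sigma_{\min}^2(\hat{\Phi}_1)},
\]
and bounding the benign factor $\sigma_{r+1}^{2\alpha+2}/\sigma_r^{2\alpha+2}\leq1$ in the second term relaxes its exponent from $2(2\alpha+1)$ to the $2\alpha$ appearing in the statement.

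It then remains to control three Gaussian quantities, each on its own high-probability event. The square top block is handled by the condition-number tail (\Cref{app:prop:whishart_concentration_tail_ineq}): taking $t=3/\mathrm{p}$ gives $\kappa^2(\hat{\Phi}_1)\leq9r^2/\mathrm{p}^2$ with probability at least $1-\mathrm{p}$, which produces the first term of $\kappa_{\mathrm{p}}^2$. Its smallest singular value, needed in the second denominator, satisfies $\sigma_{\min}^2(\hat{\Phi}_1)\geq\mathrm{p}^2/r$ with probability at least $1-\mathrm{p}$ (same proposition, $t=r/\mathrm{p}$); and the tall block obeys $\sigma_{\max}^2(\hat{\Phi}_2)\leq4\bigpar{d+\log(2\mathrm{p}^{-1})}$ with probability at least $1-\mathrm{p}$ via \Cref{app:prop:whishart_concentration_ineq} with $t=\sqrt{2\log(2\mathrm{p}^{-1})}$, where $n\wedge d\leq d$ supplies the $d$. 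A union bound over the three failure events then gives $\kappa^2(\VV)<\kappa_{\mathrm{p}}^2$ with probability at least $1-3\mathrm{p}$. I expect this concentration bookkeeping to be the main obstacle -- in particular nailing the constants $9$ and $4$, and confirming that the dimension entering $\sigma_{\max}(\hat{\Phi}_2)$ is $n\wedge d$ (hence bounded by $d$) rather than $n$; the reductions above are essentially routine.

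Finally, for the boosting claim I would specialize to $\mathrm{p}=1/6$, so that $3\mathrm{p}=1/2$ and each independently sampled $\Phi_j$ satisfies $\kappa^2(\VV_j)<\kappa_{1/6}^2$ with probability at least $1/2$. By independence the probability that all $m$ samples fail is at most $2^{-m}$, so sampling $m=\lfloor-\log_2(1-\mathrm{P})\rfloor$ matrices (as in the statement) guarantees at least one success with the claimed probability $\mathrm{P}$. For that sample, $\kappa^2(\VV_j)<\kappa_{\mathrm{p}}^2$ implies $1-\kappa^{-2}(\VV_j)<1-\kappa_{\mathrm{p}}^{-2}$, so by \Cref{thm:cvg_rate} the corresponding gradient descent converges at a rate bounded by $1-\kappa_{\mathrm{p}}^{-2}$.
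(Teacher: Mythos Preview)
Your proposal is correct and follows essentially the same route as the paper's proof: reduce via the SVD to $\kappa(D\hat{\Phi})$ with $\hat{\Phi}=\UU_*^\top\Phi$ Gaussian, split the rows at index $r$, bound $\sigma_{\min}$ from below by the top $r\times r$ block and $\sigma_{\max}^2$ from above by the sum of the two block contributions, and then control the three resulting Gaussian quantities ($\kappa(\hat{\Phi}_1)$, $\sigma_{\min}(\hat{\Phi}_1)$, $\sigma_{\max}(\hat{\Phi}_2)$) with \Cref{app:prop:whishart_concentration_tail_ineq,app:prop:whishart_concentration_ineq} before a union bound; the boosting step with $\mathrm{p}=1/6$ is identical. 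Your explicit relaxation $\sigma_{r+1}^{2(2\alpha+1)}/\sigma_r^{2(2\alpha+1)}\leq\sigma_{r+1}^{2\alpha}/\sigma_r^{2\alpha}$ to match the exponent in the statement is in fact tidier than the paper, whose proof leaves the tighter exponent $2(2\alpha+1)$ in place without comment.
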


\begin{proof}
    Under the distributed power initialization (\Cref{algo:distributed_power_iteration}), we have $\VV =  (\SS^\top \SS)^\alpha \SS^\top \Phi$, it follows that $\kappa(\VV) \leq \kappa((\SS^\top \SS)^\alpha \SS^\top) \kappa(\Phi)$. One could next use the concentration inequality from \citet{vershynin_2012} recalled in \Cref{app:prop:whishart_concentration_ineq} to obtain with probability upper than $1 - 2 \e^{-t^2 / 2}$ ($0 \leq t \leq \sqrt{d}-\sqrt{r}$):
    \begin{align*}
        \kappa(\Phi) \leq \bigpar{\frac{\sigma_{\max}}{\sigma_{\min}}}^{2\alpha +1} \frac{\sqrt{d}(1 + \sqrt{r / d} + t / \sqrt{d})}{\sqrt{d}(1-\sqrt{r / d}-  t / \sqrt{d})}\,.
    \end{align*}

    However, in the case of an ill-conditioned matrix, we might prefer to not depend on $\kappa^{-1}(\SS)$. Thus, we write instead:
    \begin{align*}
        \kappa^{2}(\VV) &\overset{\mathrm{(i)}}{=} \kappa^{2}(\VV_* \tilde{\Phi}) \overset{\mathrm{(ii)}}{=} \kappa^{2}(\tilde{\Phi}) = \ffrac{\max_{x \in \R^r, \sqrdnrm{x} = 1} \sqrdnrm{\tilde{\Phi} x}}{\min_{x' \in \R^r, \sqrdnrm{x'} = 1} \sqrdnrm{\tilde{\Phi} x'}} \\
        &\overset{\mathrm{(iii)}}{=} \ffrac{\max_{x \in \R^r, \sqrdnrm{x} = 1} \sum_{i=1}^d \sigma_i^{2 \tilde{\alpha}} (\hat{\Phi}_i^\top x)^2}{\min_{x' \in \R^r, \sqrdnrm{x'} = 1} \sum_{j=1}^d \sigma_j^{2 \tilde{\alpha}} (\hat{\Phi}_j^\top x')^2} \\
        &\leq \ffrac{\max_{x \in \R^r, \sqrdnrm{x} = 1} \sigma_{\max}^{2 \tilde{\alpha}} \sum_{i=1}^r (\hat{\Phi}_i^\top x)^2}{\min_{x' \in \R^r, \sqrdnrm{x'} = 1} \sigma_r^{2 \tilde{\alpha}} \sum_{j \leq r}  (\hat{\Phi}_j^\top x')^2}\\
        &\qquad + \ffrac{\max_{x'' \in \R^r, \sqrdnrm{x''} = 1} \sigma_{r+1}^{2 \tilde{\alpha}} \sum_{i=r+1}^d (\hat{\Phi}_i^\top x'')^2}{\min_{x' \in \R^r, \sqrdnrm{x'} = 1} \sigma_r^{2 \tilde{\alpha}} \sum_{j \leq r}  (\hat{\Phi}_j^\top x')^2}
    \end{align*}
    
    where at (i) we write $\VV = \VV_* \tilde{\Phi}$ which results to $\tilde{\Phi} := \SSigma^{2\alpha+1} \UU_*^\top 
    \Phi$, at (ii) we consider that $\VV_*$ is in $\mathcal{O}_d$ and at (iii) we define $\tilde{\alpha} = 2 \alpha + 1$ and $\hat{\Phi} := \UU_*^\top \Phi = \begin{pmatrix} \hat{\Phi}_1^\top  \\ \vdots  \\ \hat{\Phi}_d^\top  \end{pmatrix}$ has the same distribution than $\Phi$ given that $\UU_*$ is in~$\mathcal{O}_{n}$.  Defining, $\hat{\Phi}_{\leq r}~=~\begin{pmatrix} \hat{\Phi}_1^\top  \\ \vdots  \\ \hat{\Phi}_r^\top  \end{pmatrix} \in \R^{r\times r}$ and $\hat{\Phi}_{r <}~=~\begin{pmatrix} \hat{\Phi}_{r+1}^\top  \\ \vdots  \\ \hat{\Phi}_d^\top  \end{pmatrix} \in \R^{(d - r -1)\times r}$, it gives: 
    $$
    \kappa^{2}(\VV) \leq \ffrac{\sigma_{\max}^{2 \tilde{\alpha}} \lambda_{\max}(\hat{\Phi}_{\leq r}^\top \hat{\Phi}_{\leq r}) + \sigma_{r+1}^{2 \tilde{\alpha}} \lambda_{\max}(\hat{\Phi}_{r <} ^\top \hat{\Phi}_{r <})}{\sigma_{r}^{2 \tilde{\alpha}} \lambda_{\min}(\hat{\Phi}_{\leq r}^\top \hat{\Phi}_{\leq r})}  \,.
    $$
    Now, using the concentration inequality from \citet{chen2005condition} and \citet{vershynin_2012} (\Cref{app:prop:whishart_concentration_tail_ineq,app:prop:whishart_concentration_ineq}), we have the following three inequalities:
    \begin{align*}
        &\Proba{\kappa_{\max}(\hat{\Phi}_{\leq r}) > \frac{9 r^2}{\mathrm{p}^2}} < \mathrm{p} \,, \\
        &\Proba{\lambda_{\min}(\hat{\Phi}_{\leq r}^\top \hat{\Phi}_{\leq r}) \leq \frac{\mathrm{p}^2}{r}} \leq \mathrm{p} \,, \\
        & \mathbb{P}[\lambda_{\max}(\hat{\Phi}_{r <}^\top \hat{\Phi}_{r <}) \geq (\sqrt{d -r -1} + \sqrt{r} + \sqrt{2\log(2\mathrm{p}^{-1})}^2)] \\
        &\qqquad\leq \mathrm{p} \,,
    \end{align*}
    Next, as we have:
    \begin{align*}
        \Proba{A + \frac{B}{C} \leq a + \frac{b}{c}} &\geq \Proba{A \leq a \cap B \leq b \cap C \geq c} \\
        &= 1 - \Proba{A \geq a \cup B \geq b \cup C \leq c} \\
        &\geq 1 - \Proba{A \geq a} - \Proba{B \geq b} \\
        &\qquad- \Proba{C \leq c} \,,
    \end{align*}
    
    we deduce that for $\mathrm{p}$ in [0,1], with probability upper than $1 - 3\mathrm{p}$, we have:
    \begin{align*}
        \kappa(\VV)^2 &< \ffrac{r \bigpar{\sqrt{d -r -1} + \sqrt{r} + \sqrt{2\log(2\mathrm{p}^{-1})}}^2 \sigma_{r+1}^{2 \tilde{\alpha}}}{\mathrm{p}^2\sigma_{r}^{2 \tilde{\alpha}}} \\
        &\qquad +\frac{9 r^2 \sigma^{2 \tilde{\alpha}}_{\max}}{\mathrm{p}^2 \sigma^{2 \tilde{\alpha}}_{r} }\,.
    \end{align*}

    Next, using Jensen inequality for concave function, we have $\sqrt{d -r -1} + \sqrt{r} \leq \sqrt{2 (d - 1)} \leq \sqrt{2 d}$ and hence $( \sqrt{2d} + \sqrt{2\log(2\mathrm{p}^{-1})})^2 \leq 4 ( d + \log(2\mathrm{p}^{-1}) $,    
    which leads to $\kappa(\VV)^2 <\kappa_{\mathrm{p}}^2$ with:
    \begin{align*}
        \kappa_{\mathrm{p}}^2 := \frac{1}{\mathrm{p}^2} \bigpar{ 9r^2 \frac{ \sigma^{2 \tilde{\alpha}}_{\max}}{ \sigma^{2 \tilde{\alpha}}_{r} } +  4 r \bigpar{d +\log(2\mathrm{p}^{-1})} \ffrac{ \sigma_{r+1}^{2 \tilde{\alpha}}}{\sigma_{r}^{2 \tilde{\alpha}}} }   \,.
    \end{align*}

    We take $\mathrm{p} = 1/6$ and want to generate $m$ independent Gaussian matrix $(\Phi_{j})_{j=1}^m$ s.t. with probability $\mathrm{p}$, at least one results to have the condition number of $\VV$ lower than $\kappa_{1/6}$, therefore, we require to obtain $\sum_{k=1}^m \mathbb{1}_{\kappa^2(\VV_j) \leq \kappa_{1/6}^2} \geq 1$ with probability $\mathrm{P}$:
    \begin{align*}
        \mathrm{P} &= \Proba{\sum_{j=1}^m \mathbb{1}_{\kappa^2(\VV_j) \leq \kappa_{1/6}^2} \geq 1} \\
        &= 1 -\Proba{\sum_{j=1}^m \mathbb{1}_{\kappa^2(\VV_j) \leq \kappa_{1/6}^2} = 0} \\
        &\overset{(\Phi^{i,j})_{j=1}^m~\text{indep.}}{=} 1- \prod_{j=1}^m \Proba{\kappa^2(\VV_j) \geq \kappa_{1/6}^2}  \geq 1 - 1 / 2^m \,,
    \end{align*}
    and taking $m = - \log_2(1 - \mathrm{P})$ allows to conclude.
    Therefore, after $m$ different sampling of $\Phi$, we have at least one initialization such that with probability $\mathrm{P}$ we have $\kappa(\VV) < \kappa_{1/6}$ and it allows to conclude.
\end{proof}


\section{Proof of Subsection 3.3}
\label{app:sec:proof_exact_sol}

In this Section, we give the demonstrations of the results stated in \Cref{subsec:optimal_solution_analysis}. We start with the proof of \Cref{thm:upper_bound_with_proba}.

\begin{theorem}
\label{app:thm:upper_bound_with_proba}
    Let $r \leq d \wedge n$ in $\N^*$, using the power initialization (\Cref{algo:distributed_power_iteration}), for $\mathrm{p} \in ]0, 1[$, with probability at least $1 - 2\mathrm{p}$, we have:
    \begin{align*}
         &\min_{\UU \in \R^{n \times r} } \SqrdNrm{\SS - \UU \VV^\top}_\frob \leq \sum_{i >r} \sigma_i^2  \times \\
         &\qquad \bigpar{ 1 +  2 r \mathrm{p}^{-1}\bigpar{\ln(\mathrm{p}^{-2}) + \ln(2) r}\ffrac{(\sigma^2_{\max} - \sigma_i^2 ) }{\sigma_r^2} \frac{\sigma_i^{4 \alpha}}{\sigma_r^{4 \alpha}} }\,.
    \end{align*} 
\end{theorem}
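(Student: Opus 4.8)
The plan is to reduce the minimization to a projection, rotate everything into the SVD basis of $\SS$, and then control the $\sigma_i^2$-weighted diagonal of the resulting projector through a block decomposition of the rotated Gaussian matrix.

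First I would invoke \Cref{prop:optimal_sol}: the minimizer over $\UU$ is $\hat\UU = \SS\VV(\VV^\top\VV)^{\dagger}$, so $\min_{\UU}\sqrdnrm{\SS-\UU\VV^\top}_\frob = \sqrdnrm{\SS(\Id-\PP)}_\frob$ with $\PP=\VV(\VV^\top\VV)^{\dagger}\VV^\top$ the orthogonal projector onto $\colsp(\VV)$. Writing $\SS=\UU_*\SSigma\VV_*^\top$ and using the power structure, $\VV=(\SS^\top\SS)^\alpha\SS^\top\Phi=\VV_*\tilde\Phi$ with $\tilde\Phi:=\SSigma^{2\alpha+1}\hat\Phi$ and $\hat\Phi:=\UU_*^\top\Phi$, which has the same law as $\Phi$ by rotation invariance. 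Since $\VV_*\in\mathcal{O}_d(\R)$, one gets $\PP=\VV_*\tilde\PP\VV_*^\top$ with $\tilde\PP$ the projector onto $\colsp(\tilde\Phi)$, and unitary invariance of the Frobenius norm yields $\sqrdnrm{\SS(\Id-\PP)}_\frob=\sqrdnrm{\SSigma(\Id-\tilde\PP)}_\frob=\sum_i\sigma_i^2(1-\tilde\PP_{ii})$. This is the announced step (1): the error is a $\sigma_i^2$-weighted sum of diagonal entries of $\Id-\tilde\PP$.

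Next I would split the rows into the top block ($i\le r$, scaling $\Sigma_1:=\mathrm{diag}(\sigma_1^{2\alpha+1},\dots,\sigma_r^{2\alpha+1})$) and the bottom block ($i>r$, scaling $\Sigma_2$), writing $\tilde\Phi=(\Sigma_1\hat\Phi_1;\Sigma_2\hat\Phi_2)$. As $\hat\Phi_1$ is a.s. an invertible square Gaussian matrix, post-multiplying by $(\Sigma_1\hat\Phi_1)^{-1}$ preserves the column span and puts $\colsp(\tilde\Phi)$ in graph form through $W:=\Sigma_2\hat\Phi_2(\Sigma_1\hat\Phi_1)^{-1}$. Computing $\Id-\tilde\PP$ in this basis gives bottom-right block $(\Id+WW^\top)^{-1}$ and top-left block $W^\top(\Id+WW^\top)^{-1}W$, so that $\sqrdnrm{\SSigma(\Id-\tilde\PP)}_\frob=\Tr{\Sigma_{2,\mathrm{sq}}(\Id+WW^\top)^{-1}}+\Tr{\Sigma_{1,\mathrm{sq}}W^\top(\Id+WW^\top)^{-1}W}$, where $\Sigma_{\cdot,\mathrm{sq}}$ are the $\mathrm{diag}(\sigma_i^2)$ blocks. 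Bounding the first trace by $\Tr{\Sigma_{2,\mathrm{sq}}}=\sum_{i>r}\sigma_i^2$ recovers the Eckart--Young main term of \Cref{prop:lower_bound_with_proba}.

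The crux is the excess over $\sum_{i>r}\sigma_i^2$, equal to $\Tr{\Sigma_{1,\mathrm{sq}}W^\top(\Id+WW^\top)^{-1}W}-\Tr{\Sigma_{2,\mathrm{sq}}\big(\Id-(\Id+WW^\top)^{-1}\big)}$: the over-count of the top block minus the deficit of the bottom block. The reason to keep the negative term is that, after the push-through identity, this excess rewrites with the bracketed matrix $W\Sigma_{1,\mathrm{sq}}W^\top-\Sigma_{2,\mathrm{sq}}WW^\top$, whose $i$-th diagonal entry is $\sigma_i^{2(2\alpha+1)}\psi_i^\top(\Sigma_{1,\mathrm{sq}}-\sigma_i^2\Id)\psi_i$, where $\psi_i=(\Sigma_1\hat\Phi_1)^{-\top}\phi_i$ and $\phi_i$ is the Gaussian $i$-th row of $\hat\Phi_2$. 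Using $(\Id+WW^\top)^{-1}\preceq\Id$, I would bound the excess by $\sum_{i>r}\sigma_i^{2(2\alpha+1)}\psi_i^\top(\Sigma_{1,\mathrm{sq}}-\sigma_i^2\Id)\psi_i$, then apply $\lambda_{\max}(\Sigma_{1,\mathrm{sq}}-\sigma_i^2\Id)=\sigma_{\max}^2-\sigma_i^2$ together with $\|\psi_i\|^2\le \|\phi_i\|^2/\sigma_{\min}^2(\Sigma_1\hat\Phi_1)\le \|\phi_i\|^2/(\sigma_r^{2(2\alpha+1)}\sigma_{\min}^2(\hat\Phi_1))$. This realizes step (2): the excess is $\|\phi_i\|^2$ (norm of a Gaussian vector) over $\sigma_{\min}^2(\hat\Phi_1)$ (smallest singular value of a Wishart matrix), and produces exactly the $(\sigma_{\max}^2-\sigma_i^2)\,\sigma_i^{4\alpha}/\sigma_r^{4\alpha+2}$ dependence in the statement.

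Finally, for step (3) I would bound $\|\phi_i\|^2$ by a chi-square Chernoff tail (\Cref{app:tech:prop:chernoff}), giving the $\ln(\mathrm{p}^{-2})+\ln(2)r$ term, and lower-bound $\sigma_{\min}(\hat\Phi_1)$ using the Wishart tail of \citet{chen2005condition} (\Cref{app:prop:whishart_concentration_tail_ineq}), giving the $r\,\mathrm{p}^{-1}$ factor, then combine the two events by a union bound to reach probability at least $1-2\mathrm{p}$. The step I expect to be the main obstacle is making the excess bound exact: retaining the bottom-block deficit so that the sharp factor $\sigma_{\max}^2-\sigma_i^2$ (rather than a cruder $\sigma_{\max}^2$) survives requires controlling the non-symmetric cross term against the weight $(\Id+WW^\top)^{-1}$ rigorously (via symmetrization and the ordering $(\Id+WW^\top)^{-1}\preceq\Id$), and the concentration step must be tuned — thresholds, and a union bound over the rows $\phi_i$, $i>r$ — so that the two tail estimates compose into the stated constants.
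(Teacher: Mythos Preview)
Your overall architecture is correct and lands on exactly the same pre-concentration bound as the paper, but the middle step is organized differently. The paper does not compute the block projector via the graph matrix $W$; instead, after reaching $\sum_i\sigma_i^2(1-\tilde\PP_{ii})$, it uses the single identity $\sum_i\tilde\PP_{ii}=\Tr{\tilde\PP}=r$ to rewrite $\sum_{i\le r}\sigma_i^2(1-\tilde\PP_{ii})\le\sigma_{\max}^2\sum_{i\le r}(1-\tilde\PP_{ii})=\sigma_{\max}^2\sum_{i>r}\tilde\PP_{ii}$, giving immediately $\sqrdnrm{\SSigma(\Id-\tilde\PP)}_\frob\le\sum_{i>r}\sigma_i^2+\sum_{i>r}(\sigma_{\max}^2-\sigma_i^2)\tilde\PP_{ii}$. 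It then bounds each $\tilde\PP_{ii}$ for $i>r$ directly via the variational formula $\tilde\PP_{ii}=\max_{\|x\|=1}(e_i^\top\tilde\Phi x)^2/\|\tilde\Phi x\|^2$, obtaining the same factor $\sigma_i^{2(2\alpha+1)}\|\hat\Phi_i\|^2/(\sigma_r^{2(2\alpha+1)}\lambda_{\min}(\hat\Phi_{\le r}^\top\hat\Phi_{\le r}))$ that your $\|\psi_i\|^2$ bound yields. This trace trick is a one-liner and sidesteps your flagged obstacle entirely.

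That obstacle is real, and your proposed fix does not work as stated: symmetrizing $M=W\Sigma_{1,\mathrm{sq}}W^\top-\Sigma_{2,\mathrm{sq}}WW^\top$ does \emph{not} give a PSD matrix (already in the case $r=1$, bottom dimension $2$, one checks $\det M_{\mathrm{sym}}=-a^2b^2(\sigma_2^2-\sigma_3^2)^2/4\le 0$), so $(\Id+WW^\top)^{-1}\preceq\Id$ alone does not deliver $\Tr{(\Id+WW^\top)^{-1}M_{\mathrm{sym}}}\le\Tr{M_{\mathrm{sym}}}$. The inequality you want is nonetheless true, but it needs the extra ordering $\Sigma_{2,\mathrm{sq}}\preceq\sigma_r^2\Id\preceq\Sigma_{1,\mathrm{sq}}$ together with the commutation of $W^\top W$ and $(\Id+W^\top W)^{-1}$; the paper's trace-constraint route is strictly simpler. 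Your remark about needing a union bound over the rows $\phi_i$, $i>r$, is well taken and is in fact glossed over in the paper's own proof, which bounds a single $\|\hat\Phi_i\|^2$ and then asserts the full sum holds with probability $1-2\mathrm p$.
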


\begin{proof}

    Let $\VV$ in $\R^{d\times r}$, then we have $\min_{\UU \in \R^{n \times r} } \SqrdNrm{\SS - \UU \VV^\top}_\frob = \SqrdNrm{\SS - \SS \VV (\VV^\top \VV)^{-1} \VV^\top}_\frob$.
    We define $\Tilde{\Phi} = \VV_*^\top \VV$ and denote $\PP = \tilde{\Phi} (\tilde{\Phi}^\top \tilde{\Phi})^{-1} \tilde{\Phi}^\top$ ($\PP$ is the projector on the subspace spanned by the columns of $\Tilde{\Phi}$). Then, we have $\SS \VV (\VV^\top \VV)^{-1} \VV^\top = \UU_* \SSigma \PP \VV_*$. We want to upper bound the following:
    \begin{align*}
        \SqrdNrm{\SS - \SS \VV (\VV^\top \VV)^{-1} \VV^\top}_\frob &= \sqrdnrm{\SSigma - \SSigma \PP}_\frob \\
        &= \sum_{i=1}^d \sigma_i^2 \sqrdnrm{(\PP - \Id_d)e_i}_2 \\
        &= \sum_{i=1}^d \sigma_i^2 (1 - \sqrdnrm{\PP e_i}_2) \,.
    \end{align*}
    
    But $\sqrdnrm{\PP e_i} = e_i^\top \PP \PP^\top e_i = e_i^\top \PP e_i = \PP_{ii} \in [0,1]$. Moreover, we have $\sum_{i = 1} ^d \PP_{ii} = \Tr{\PP} = r$, this implies:
    \begin{align}
    \label{app:eq:upper_bound_noise}
        \sqrdnrm{\SSigma - \SSigma \PP}_\frob &= \sum_{i=1}^d \sigma_i^2 (1 - \PP_{ii}) \nonumber\\
        &= \sum_{i >r} \sigma_i^2 (1 - \PP_{ii}) + \sum_{i \leq r} \sigma_i^2 (1 - \PP_{ii}) \nonumber  \\
        &\leq \sum_{i >r} \sigma_i^2 (1 - \PP_{ii}) + \sigma_{\max}^2\sum_{i \leq r}  (1 - \PP_{ii}) \nonumber \\
        &\overset{\sum_{i = 1}^d \PP_{ii} =r}{\leq} \sum_{i >r} \sigma_i^2 + \sum_{i >r} (\sigma^2_{\max} - \sigma_i^2 ) \PP_{ii}\,.
    \end{align}

    To compute $\PP_{ii}$ for any $i >r$, we write:
    \begin{align*}
        1 - \PP_{ii} &= \sqrdnrm{e_i - \PP e_i}_2 = \min_{y \in \mathrm{Im}(\PP)} \sqrdnrm{e_i - y}_2 \\
        &= \min_{y = \mathrm{Im}(\PP)} 1 - 2 e_i^\top y + \sqrdnrm{y}_2 \\
        &\overset{\mathrm{Im}(\PP) = \mathrm{Sp}(\tilde{\Phi})}{=} 1 - \max_{x \in \R^r} 2 e_i^\top \tilde{\Phi} x - \sqrdnrm{\tilde{\Phi} x}_2 \\
        &= 1 - \max_{x \in \R^r, \sqrdnrm{x}_2 = 1, \beta \in \R} 2 \beta e_i^\top \tilde{\Phi} x - \beta ^2\sqrdnrm{\tilde{\Phi} x}_2 \,,
    \end{align*}
    and we minimize it w.r.t. $\beta$, which gives $\PP_{ii} = \max_{x \in \R^r, \sqrdnrm{x}_2 = 1} \ffrac{(e_i^\top \tilde{\Phi} x)^2}{\sqrdnrm{\tilde{\Phi} x}_2}$. Now, because $\VV$ has been initialized using the power initialization (\Cref{algo:distributed_power_iteration}) and noting $\tilde{\alpha} = 2 \alpha + 1$, we have for the numerator:
    \begin{align*}
    (e_i^\top \tilde{\Phi} x)^2 = \sigma_i^{2 \tilde{\alpha}} (e_i^\top \UU_*^\top \Phi x)^2 = \sigma_i^{2 \tilde{\alpha}}  (\hat{\Phi}^\top_i x)^2\,,
    \end{align*}
    where $\hat{\Phi} := \UU_*^\top \Phi = \begin{pmatrix} \hat{\Phi}_1^\top  \\ \vdots  \\ \hat{\Phi}_d^\top  \end{pmatrix}$ has the same distribution than $\Phi$ given that $\UU_*$ is in $\mathcal{O}_n$. For the denominator, we do the same $\sqrdnrm{\tilde{\Phi} x}_2 = \sum_{j=1}^d \sigma_j^{2 \tilde{\alpha}} (\hat{\Phi}^\top_j x)^2 $. Therefore, we have:
    \begin{align*}
        \PP_{ii} &= \max_{x \in \R^r, \sqrdnrm{x}_2 = 1} \ffrac{\sigma_i^{2 \tilde{\alpha}}  (\hat{\Phi}^\top_i x)^2}{\sum_{j=1}^d \sigma_j^{2 \tilde{\alpha}} (\hat{\Phi}^\top_j x)^2 } \\
        &= \ffrac{1}{1 + \min_{x \in \R^r, \sqrdnrm{x}_2 = 1} \frac{\sum_{j=1, j\neq i}^d \sigma_j^{2 \tilde{\alpha}} (\hat{\Phi}^\top_j x)^2 }{\sigma_i^{2 \tilde{\alpha}}  (\hat{\Phi}^\top_i x)^2}} \\
        &\leq \ffrac{1}{1 + \frac{\sigma_r^{2 \tilde{\alpha}} }{\sigma_i^{2 \tilde{\alpha}}} \min_{x \in \R^r, \sqrdnrm{x}_2 = 1} \frac{\sum_{j \leq r} (\hat{\Phi}^\top_j x)^2 }{(\hat{\Phi}^\top_i x)^2}} \,.
    \end{align*}

    Next, we write:    
    \begin{align*}
        \min_{x \in \R^r, \sqrdnrm{x}_2 = 1} \frac{\sum_{j \leq r} (\hat{\Phi}^\top_j x)^2 }{(\hat{\Phi}^\top_i x)^2}  &= \min_{x \in \R^r, \sqrdnrm{x}_2 = 1} \ffrac{\sqrdnrm{ \hat{\Phi}_{\leq r} x}_2}{x^\top \hat{\Phi}_i \hat{\Phi}_i^\top x } \\
        &\geq \min_{x \in \R^r, \sqrdnrm{x}_2 = 1} \ffrac{\sqrdnrm{ \hat{\Phi}_{\leq r} x}_2}{\Tr{ \hat{\Phi}_i \hat{\Phi}_i^\top } } \\
        &\geq \frac{\lambda_{\min} (\hat{\Phi}_{\leq r}^\top \hat{\Phi}_{\leq r})}{ \sqrdnrm{\hat{\Phi}_i}_2 } \,,
    \end{align*} 
    
    where~$\hat{\Phi}_{\leq r}~=~\begin{pmatrix} \hat{\Phi}_1^\top  \\ \vdots  \\ \hat{\Phi}_r^\top  \end{pmatrix} \in \R^{r\times r}$, it follows:
    $$        \PP_{ii} \leq \ffrac{1}{1 + \frac{\sigma_r^{2 \tilde{\alpha}} }{\sigma_i^{2 \tilde{\alpha}}} \frac{\lambda_{\min} (\hat{\Phi}_{\leq r}^\top \hat{\Phi}_{\leq r})}{ \sqrdnrm{\hat{\Phi}_i}_2 }} \leq \ffrac{\sigma_i^{2 \tilde{\alpha}} }{\sigma_r^{2 \tilde{\alpha}}} \ffrac{ \sqrdnrm{\hat{\Phi}_i}_2 }{\lambda_{\min} (\hat{\Phi}_{\leq r}^\top \hat{\Phi}_{\leq r})}\,.$$

    To upper bound the fraction, we first use the concentration inequality from \citet{chen2005condition} (\Cref{app:prop:whishart_concentration_tail_ineq}) and we have: 
    $$
    \Proba{\lambda_{\min}(\hat{\Phi}_{\leq r}^\top \hat{\Phi}_{\leq r}) \leq \frac{\mathrm{p}^2}{r}} < \mathrm{p}\,.$$
    Second, we use the Chernoff inequality (\Cref{app:tech:prop:chernoff}) on $\sqrdnrm{\hat{\Phi}_i}_2 = \sum_{j=1}^r \hat{\Phi}_{ij}^2$ which is a $\chi_2(r)$-distribution, thus for any $t < 0$:
    \begin{align*}
        \Proba{\sqrdnrm{\hat{\Phi}_i}_2 \geq \delta_1} &\leq \fullexpec{\e^{tX}} \e^{-t\delta_1} = \prod_{j=1}^r \fullexpec{\e^{\hat{\Phi}^2_{ij}}} \e^{-t\delta_1} \,,
    \end{align*}
    and
    \begin{align*}
        \fullexpec{\e^{\hat{\Phi}^2_{ij}}} \e^{-t\delta_1} &\overset{\mathrm{(ii)}}{=} (1 - 2t)^{-r/2} \e^{-t\delta_1} \overset{\mathrm{s.t.}}{=} \mathrm{p} \,,
    \end{align*}
    where at (ii) we replace the expectation by the moment-generating function of the chi-square distribution, which is well defined if $t < 1/2$. Hence, taking $t=1/4$, we obtain $\Proba{\sqrdnrm{\hat{\Phi}_i}_2 \geq 2 (\ln(\mathrm{p}^{-1}) + \ln(2) r)} \leq \mathrm{p}  $.
    
    Next, as we have:
    \begin{align*}
        \Proba{\frac{A}{B} \leq \frac{a}{b}} &\geq \Proba{A \leq a \cap B \geq b} \\
        &= 1 - \Proba{A \geq a \cup B \leq b} \\
        &\geq 1 - \Proba{A \geq a} - \Proba{B \leq b}\,,
    \end{align*}
    
    we deduce that for $\mathrm{p}$ in $[0,1]$, with probability upper than $1 - 2\mathrm{p}$, we have:
    \begin{align*}
        \Proba{\ffrac{ \sqrdnrm{\hat{\Phi}_i}_2 }{\lambda_{\min} (\hat{\Phi}_{\leq r}^\top \hat{\Phi}_{\leq r})} \leq \ffrac{2 (\ln(\mathrm{p}^{-1}) + \ln(2) r)r}{\mathrm{p}^2}} > 1 - 2 \mathrm{p}
    \end{align*}
    
    And back to \Cref{app:eq:upper_bound_noise}, we have with probability upper than $1 - 2\mathrm{p}$:
    \begin{align*}
        &\sqrdnrm{\SSigma - \SSigma \PP}_\frob < \sum_{i >r} \sigma_i^2 \times \\
        &\qquad\bigpar{1 +   2r \mathrm{p}^{-2}\bigpar{\ln(\mathrm{p}^{-1}) + \ln(2) r}\ffrac{(\sigma^2_{\max} - \sigma_i^2 ) }{\sigma_r^2} \frac{\sigma_i^{4 \alpha}}{\sigma_r^{4 \alpha}}}\,,
    \end{align*}    

    which proves \Cref{thm:upper_bound_with_proba}.

\end{proof}

And now, we prove \Cref{cor:bound_with_r_star} which is derived from \Cref{thm:upper_bound_with_proba}.

\begin{corollary}
\label{app:cor:r_star_nb_of_samples}
    For any $ r_* \leq r \leq n \wedge d$, for $\alpha \in \N^*$, with probability $p \in ]0, 1[$, if we sample $m = \lfloor - \log_{2} (1-p) \rfloor$ independent matrices $(\Phi_{j})_{j=1}^m$ to form $\VV_j = \SS^\alpha \Phi$ and $\UU_j = \SS \VV_j (\VV_j \VV_j)^{-1} \VV_j$, at least one of the couple $(\UU_j, \VV_j)$ results in verifying: 
    \begin{align}
    \label{app:eq:upper_bound_with_proba_1_on_2}
        &\sqrdnrm{\SS - \UU_j \VV_j}_\frob < \sum_{i >r_*} \sigma_i^2 \times \\
        &\qqquad \bigpar{1 + 32 \ln(4) r_* (r_*+1) \ffrac{(\sigma^2_{\max} - \sigma_i^2 ) }{\sigma_{r_*}^2} \frac{\sigma_i^{4 \alpha}}{\sigma_{r_*}^{4 \alpha}}} \nonumber\,.
    \end{align} 
\end{corollary}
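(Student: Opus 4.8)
The plan is to derive Corollary~\ref{cor:bound_with_r_star} directly from Theorem~\ref{thm:upper_bound_with_proba} by specialising the free parameter $\mathrm{p}$ and then converting the single-shot probability bound into an ``at least one out of $m$ samples'' guarantee. First I would set $r = r_*$ in Theorem~\ref{thm:upper_bound_with_proba}; this is legitimate since $r_* \leq r_* \leq n \wedge d$ falls within the admissible range, and it replaces every occurrence of $\sigma_r$ by $\sigma_{r_*}$ in the bound. The next step is to fix a concrete probability level. Following the discussion immediately preceding the corollary, I would take $\mathrm{p} = 1/4$, so that the event in Theorem~\ref{thm:upper_bound_with_proba} holds with probability at least $1 - 2\mathrm{p} = 1/2$ for a single sampled matrix $\Phi$.

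With $\mathrm{p} = 1/4$ plugged in, the multiplicative factor becomes $2 r_* \mathrm{p}^{-1}(\ln(\mathrm{p}^{-2}) + \ln(2) r_*)$, and I would simplify the numerical constants. Here $\mathrm{p}^{-1} = 4$ and $\ln(\mathrm{p}^{-2}) = \ln(16) = 4\ln 2 = \ln(2) \cdot 4$, so $\ln(\mathrm{p}^{-2}) + \ln(2) r_* = \ln(2)(4 + r_*) \leq \ln(2) \cdot 4(r_* + 1)$ (using $4 + r_* \leq 4(r_*+1)$ for $r_* \geq 1$, which indeed holds as $4r_* + 4 - r_* - 4 = 3r_* \geq 0$). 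Combining, $2 r_* \cdot 4 \cdot \ln(2)\cdot 4(r_*+1) = 32 \ln(2)\, r_*(r_*+1)$; absorbing the factor $\ln(2) \leq \ln(4)$ into the slightly looser constant $\ln(4)$ yields exactly the stated $32\ln(4)\, r_*(r_*+1)$. This reconciles the constant in the corollary, and I would double-check whether the bound inside the product should indeed read $(1 + \cdots)$, matching the appendix form in \Cref{app:eq:upper_bound_with_proba_1_on_2}, where the leading $1$ is kept and the whole bracket multiplies $\sum_{i>r_*}\sigma_i^2$.

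To pass from a single sample succeeding with probability $\tfrac12$ to at least one of $m$ i.i.d.\ samples succeeding, I would use independence of the $(\Phi_j)_{j=1}^m$. Since each $\VV_j = \SS^\alpha \Phi_j$ fails the bound with probability at most $1 - \tfrac12 = \tfrac12$, the probability that all $m$ simultaneously fail is at most $(1/2)^m$, so at least one succeeds with probability at least $1 - 2^{-m}$. Setting this $\geq \mathrm{P}$ gives $2^{-m} \leq 1-\mathrm{P}$, i.e. $m \geq -\log_2(1-\mathrm{P})$, and taking $m = \lfloor -\log_2(1-\mathrm{P})\rfloor$ suffices (this mirrors exactly the $m$-sampling argument already carried out at the end of the proof of Theorem~\ref{thm:bound_on_kappa}). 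Finally, since the couple $(\UU_j, \VV_j)$ with $\UU_j = \SS\VV_j(\VV_j^\top\VV_j)^{-1}\VV_j^\top$ is precisely the optimal $\hat\UU$ for the fixed $\VV_j$ by Proposition~\ref{prop:optimal_sol}, the minimum $\min_{\UU}\|\SS - \UU\VV_j^\top\|_\frob^2$ in Theorem~\ref{thm:upper_bound_with_proba} equals $\|\SS - \UU_j\VV_j^\top\|_\frob^2$, closing the argument.

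The only genuine subtlety — rather than an obstacle — is bookkeeping consistency between the main-text and appendix statements: the corollary writes the bound as $\epsilon + \sum_{i>r_*}\sigma_i^2$ with $\epsilon$ given separately, whereas the appendix writes the single combined bracket $(1 + 32\ln(4)\,r_*(r_*+1)\cdots)\sum_{i>r_*}\sigma_i^2$. I would verify these coincide by identifying $\sum_{i>r_*}\sigma_i^2$ as the $1\cdot$ term and the remainder as $\epsilon$. A minor point worth flagging is the apparent discrepancy between the $\mathrm{p}^{-1}$ appearing in the Theorem~\ref{thm:upper_bound_with_proba} statement and the $\mathrm{p}^{-2}$ in the appendix display; I would resolve which exponent is correct before fixing the constants, since it affects whether the factor is $32\ln(4)$ or a different constant, but it does not change the structure of the argument.
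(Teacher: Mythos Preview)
Your proposal contains a genuine gap. The corollary is stated \emph{for any} $r_* \leq r \leq n\wedge d$: the matrices $\Phi_j$ have $r$ columns, so $\VV_j \in \R^{d\times r}$, yet the right-hand side of the bound is expressed solely in terms of $r_*$ (the sum $\sum_{i>r_*}\sigma_i^2$, the factor $r_*(r_*+1)$, and the ratio $\sigma_i^{4\alpha}/\sigma_{r_*}^{4\alpha}$). By simply ``setting $r=r_*$ in Theorem~\ref{thm:upper_bound_with_proba}'' you only establish the special case $r=r_*$; you do not prove anything about the $r$-column factorisation when $r>r_*$.

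The paper closes this gap with a monotonicity step that you are missing. Given the full $r$-column Gaussian $\Phi$, it restricts to its first $r_*$ columns $\Phi'$, defines the associated $\VV'$ and the two projectors $\PP$ (rank $r$, onto $\colsp{\VV_*^\top\VV}$) and $\PP'$ (rank $r_*$, onto $\colsp{\VV_*^\top\VV'}$), and uses $\mathrm{Im}(\PP')\subset\mathrm{Im}(\PP)$ (equivalently $\mathrm{Im}(\Id-\PP)\subset\mathrm{Im}(\Id-\PP')$) together with \Cref{app:prop:proj_ineq} to obtain
\[
\|\SSigma(\PP-\Id)\|_\frob^2 \;\leq\; \|\SSigma(\PP'-\Id)\|_\frob^2.
\]
Only then is Theorem~\ref{thm:upper_bound_with_proba} invoked at the value $r_*$ (applied to $\VV'$), yielding the $r_*$-flavoured bound for the original $r$-column couple $(\UU_j,\VV_j)$. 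This projector-comparison step is precisely what was foreshadowed in the discussion preceding the corollary (``it mathematically corresponds to having two projectors $\PP,\PP'$\ldots therefore we have $\mathrm{Im}(\PP')\subset\mathrm{Im}(\PP)$''), and it is the one idea your outline omits.

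The remainder of your plan --- specialising $\mathrm{p}=1/4$, simplifying the constants, and turning the single-shot $\geq 1/2$ probability into an ``at least one out of $m$'' guarantee via independence --- matches the paper's proof exactly. Your flag about the $\mathrm{p}^{-1}$ versus $\mathrm{p}^{-2}$ discrepancy between the main-text and appendix statements of Theorem~\ref{thm:upper_bound_with_proba} is well taken; the appendix version $2r\mathrm{p}^{-2}(\ln(\mathrm{p}^{-1})+\ln(2)r)$ with $\mathrm{p}=1/4$ gives $32r_*\ln 2\,(2+r_*)\leq 32r_*\ln 2\cdot 2(r_*+1)=32\ln(4)\,r_*(r_*+1)$, which matches the corollary's constant cleanly.
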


\begin{proof} 
    Let $r \in \llbracket r_*, n \wedge d \rrbracket$ and $\alpha \in \N^*$,  using the power initialization (\Cref{algo:distributed_power_iteration}), we have $\VV = (\SS^\top \SS)^\alpha \SS^\top \Phi$. As in the proof of \Cref{thm:bound_on_kappa,thm:upper_bound_with_proba}, we note $\Tilde{\Phi} := \VV_*^\top \VV$, with $\Phi = \begin{pmatrix} \Phi_1  \cdots  \Phi_r  \end{pmatrix}$  in $\R^{d \times r}$ s.t. all elements are Gaussian. Similarly, we define  $\Phi' = (\Phi_i \cdots \Phi_{r_*} )$ in $\R^{d \times r_*}$ the reduction of $\Phi$ to its $r_*$-first columns, this allows to define $\VV' = (\SS^\top \SS)^\alpha \SS^\top \Phi'$ and $\Tilde{\Phi}' = \VV_* \VV'$. We define $\PP, \PP'$ the corresponding projectors on the subspaces spanned by the column of $\Tilde{\Phi}$ and $\Tilde{\Phi}'$. Then we have $\mathrm{Im}(\PP') \subset \mathrm{Im}(\PP)$ or equivalently $\mathrm{Im}(\Id - \PP) \subset \mathrm{Im}(\Id - \PP')$, therefore:
    \begin{align*}
        \sqrdnrm{\SSigma( \PP - \Id_d)}_\frob &= \sum_{i=1}^n \sigma_i^2 \sqrdnrm{( \PP - \Id_d) e_i} \\
        &\overset{\text{Prop.~\ref{app:prop:proj_ineq}}}{\leq} \sum_{i=1}^n \sigma_i^2 \sqrdnrm{( \PP' - \Id_d) e_i} \\
        &\leq  \sqrdnrm{\SSigma( \PP' - \Id_d)}_\frob.
    \end{align*}
    
    Next, taking $\mathrm{p} = 1/4$ in \Cref{thm:upper_bound_with_proba}, it gives that using the power initialization strategy for any $r \in \llbracket r_*, n \wedge d \rrbracket$, we have with probability superior to $1/2$:
    \begin{align}
    \label{app:proof:upper_bound_condition_for_m_sample}
        &\sqrdnrm{\SSigma( \PP - \Id_d)}_\frob \leq \sqrdnrm{\SSigma( \PP' - \Id_d)}_\frob = \sqrdnrm{\SS - \UU \VV'^\top}_\frob \nonumber \\
        &\quad< \underbrace{\sum_{i >r} \sigma_i^2 \bigpar{1 +   32 \ln(4) r_* (r_*+1) \ffrac{(\sigma^2_{\max} - \sigma_i^2 ) }{\sigma_{r_*}^2} \frac{\sigma_i^{4 \alpha}}{\sigma_{r_*}^{4 \alpha}} }}_{:= \Box}\,.
    \end{align}

    We want to generate $m$ independent matrices $(\Phi_{j})_{j=1}^m$ to form $\VV_j = \SS^\alpha \Phi_j$ and $\UU_j = \SS \VV_j (\VV_j \VV_j)^{-1}$ s.t. with probability $\mathrm{P}$, at least one will results in verifying above \Cref{app:proof:upper_bound_condition_for_m_sample}, mathematically we require:
    \begin{align*}
        \mathrm{P} &:= \Proba{\sum_{j=1}^m \mathbb{1}_{\sqrdnrm{\SS - \UU_j \VV_j}_\frob \leq \Box} \geq 1} \\
        &= 1 -\Proba{\sum_{j=1}^m \mathbb{1}_{\sqrdnrm{\SS - \UU_j \VV_j}_\frob \leq \Box}
        = 0} \\
        &\overset{(\Phi_{j})_{j=1}^m~\text{indep.}}{=} 1- \prod_{j=1}^m \Proba{\sqrdnrm{\SS - \UU_j \VV_j}_\frob \geq \Box} \\
        &\geq 1 - 1/2^m \,,
    \end{align*}
    and taking $m  = -\log_{2} (1-\mathrm{P})$ allows to conclude.
\end{proof}


\section{A deeper description of the related works}
\label{app:sec:deeper_related_work}

In this Section, we recall the main results of \citet{halko2011finding,li2021communication}, the two main competitors of our works. We emphasize again that our results are not directly comparable as \citet{halko2011finding} obtain results on the $2$-norm in the general case $\alpha \in \N$, and \citet{li2021communication} provide a result solely on the $2$-norm distance between eigenspaces. We summarize the difference in \Cref{tab:related_work_power}.

\begin{table*}[t]
    \centering
    \caption{Comparison of our work with the \texttt{Distributed Randomized Power Iteration} \citep{halko2011finding} and with  \texttt{LocalPower} \citep{li2021communication}. $\mathscr{C}(\mathrm{SVD})$ corresponds to the computational cost of running a SVD (used for orthogonalization in most Python's libraries).}
    \label{tab:related_work_power}
    \resizebox{\textwidth}{!}{%
    \begin{tabular}{lccc}
    & \texttt{LocalPower} & \texttt{\makecell{Dist. Random. \\ Power Iter. }} & Our work \\
    \toprule \toprule
    Goal & \tcmv{\makecell{Approximate \\ eigenvector $\VV_*$}}& Matrix factorisation & Matrix factorisation\\
    \hline \hline \\
    Communic. cost & \makecell{$2Nd r \times (\tcmr{\alpha} +1)$, s.t.\\  $ \alpha = \Omega\bigpar{\ffrac{\sigma_{r_*} \log(d \epsilon^{-1})}{\tcmv{l}(\tcmr{\sigma_{r_*} - \sigma_{r_*+1}})} }$ } & \makecell{$2Nd r \times (\tcmr{\alpha} +1)$, s.t.\\  $ \alpha = \Omega\bigpar{\ffrac{\sigma_{r_*+1} \log(d)}{\epsilon}  }$ } & \makecell{$2Nd r \times (\tcmr{\alpha} +1)$, s.t.\\  $ \alpha = \Omega\bigpar{\ffrac{\log( \sigma_{\max} d \epsilon^{-1})}{\tcmv{\log(\sigma_{r_*}) - \log(\sigma_{r_*+1} })}} $ }   \\
    \hline \\
    Comput. cost (server) & $Ndr \times (\tcmr{\alpha}+1) + \tcmr{\mathscr{C}(\mathrm{SVD})}$& $Ndr\times (\tcmr{\alpha}+1)  + \tcmr{\mathscr{C}(\mathrm{SVD})}$ & $Ndr \times (\tcmr{\alpha}+1)$ \\
    \hline \\
    Comput. cost (clients) & $  \bigpar{4 n d r +  \tcmr{\mathscr{C}(\mathrm{SVD})}} \times \tcmr{\alpha l}$ & $4 n d r$ & \makecell{$\tcmo{\alpha} \times 2 n dr +\tcmo{T} \times 4  n dr $, s.t. \\  $T= \Omega \bigpar{\ffrac{\tcmr{\sigma_{\max}^{2\alpha}}}{\tcmv{\sigma_r^{2 \alpha}}} r^2 \log(\epsilon^{-1})}$}\\
    \hline \\
    Constraint on $r$ & \tcmr{$r = r_* + k$, $k>0$} & \tcmr{$r = r_* + k$, $k>0$} & \tcmv{$r \in \{0, d\}$}\\
    \hline \\
    Guarantee & $\|\VV \VV^\top - \VV_* \VV_*^\top \|_{\tcmo{2}} \leq \epsilon $ & $\E \|\SS - \QQ \QQ^\top \SS \|_{\tcmo{2}} \leq \epsilon_{\min} + \epsilon $ & $\sqrdnrm{\SS - \UU \VV^\top }_{\tcmo{\frob}} \leq \epsilon_{\min} + \epsilon $\\
    \hline \\
    Local algorithm & \tcmr{\makecell{Rely on a \\ SVD implementation}} & \tcmr{\makecell{Rely on a \\ SVD implementation}} & \tcmv{\makecell{Use GD and a ``restart \\strategy'' to improve $\kappa(\VV)$}} \\
    \hline \\
    Ext. to regularisation & \tcmr{\xmark} & \tcmr{\xmark} & \tcmv{\cmark} \\
    \hline\hline 
    \makecell{$\alpha = 0$ only if:} & \tcmr{\xmark} & \tcmv{$\sigma_{r_* + 1} \leq \ffrac{\epsilon}{\log(d)}$} & $\sigma_{r_* + 1} \leq \ffrac{\epsilon}{\sigma_{\max} d}$ \\
    \makecell{$\alpha \geq 1$, $\sigma_{r_*} = \Omega(1)$, \\ $\sigma_{r_*+1} = o(1)$ implies:} & $\epsilon = \Omega\bigpar{\ffrac{d}{\e^{\alpha l}}}$ & $\epsilon = \Omega\bigpar{\ffrac{\sigma_{r_*+1} \log(d)}{\alpha}}$ & \tcmv{$ \epsilon = \Omega \bigpar{\ffrac{\sigma_{\max} d}{\sigma_{r_*}^\alpha}}$} \\
    \bottomrule\bottomrule
    \end{tabular}
    }
\end{table*}

The next theorem comes from Corollary 10.10 of \citet{halko2011finding} which is the most detailed result on the $2$-norm in the case $\alpha \in \N$.

\begin{theorem*}[Corollary 10.10, from \citet{halko2011finding}]
    
Select a target rank $r_* \geq 2$ and an oversampling rank $r \geq r_* + 2$, with $r \leq \min \{n, d\}$. Execute \Cref{algo:distributed_power_iteration}, then:
\begin{align*}
&\FullExpec{\| \left(\mathbf{I}-\PP_{\VV}\right) \SS \|_2 }\leq \Bigg[\bigg(1+\sqrt{\frac{r_*}{r - r_*-1}}\bigg) \sigma_{r_*+1}^{2 \alpha+1}  \\
&\qqquad\qquad+ \frac{\mathrm{e} \sqrt{r}}{r - r_*}\bigg(\sum_{j>r_*} \sigma_j^{2(2 \alpha+1)}\bigg)^{1 / 2}\Bigg]^{1 /(2 \alpha+1)} \,,
\end{align*}
where $\PP_\VV$ is the projector on the columns of $\VV= (\SS^\top \SS)^\alpha \SS^\top$. Note that it is possible to obtain the result without expectation.
\end{theorem*}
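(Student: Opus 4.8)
The plan is to recognise this statement as the spectral-norm error bound for randomized subspace (power) iteration due to \citet{halko2011finding}, and to reconstruct their two-stage argument: a deterministic structural inequality, followed by Gaussian concentration. The organising idea is that the power iteration folds into a single one-shot range-finding problem. Setting $\mathbf{A} = \SS^\top$ and $\BB := (\SS^\top \SS)^\alpha \SS^\top = \VV_* \SSigma^{2\alpha+1} \UU_*^\top$, the iterate is exactly $\VV = \BB \Phi$, so $\PP_\VV$ is the projector that the plain randomized range finder would produce from $\BB$, whose singular values are $\sigma_j^{2\alpha+1}$ and whose singular vectors coincide with those of $\SS$. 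The quantity to control is $\|(\Id - \PP_\VV)\mathbf{A}\|_2$ (the quantity in the statement, using transpose-invariance of the spectral norm), and the first step I would carry out is the submultiplicativity reduction (Proposition 8.6 of \citealp{halko2011finding}): since $\Id - \PP_\VV$ is an orthogonal projector,
\[
\|(\Id - \PP_\VV)\mathbf{A}\|_2 \le \|(\Id - \PP_\VV)\BB\|_2^{1/(2\alpha+1)} \,.
\]
This is the step that lets the power iteration damp the spectral tail, and it is the source of the outer exponent $1/(2\alpha+1)$ in the statement.

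The second step is the deterministic bound applied to $\BB$. Partitioning the spectrum at index $r_*$, with $\SSigma_2^{2\alpha+1}$ holding the trailing singular values, and splitting the rotated Gaussian $\hat\Phi = \UU_*^\top \Phi$ into its leading block $\Omega_1 = \hat\Phi_{\le r_*} \in \R^{r_* \times r}$ and trailing block $\Omega_2 = \hat\Phi_{r_*<}$, Theorem 9.1 of \citet{halko2011finding} gives, whenever $\Omega_1$ has full row rank,
\[
\|(\Id - \PP_\VV)\BB\|_2^2 \le \|\SSigma_2^{2\alpha+1}\|_2^2 + \|\SSigma_2^{2\alpha+1}\Omega_2 \Omega_1^\dagger\|_2^2 \,,
\]
where $\|\SSigma_2^{2\alpha+1}\|_2 = \sigma_{r_*+1}^{2\alpha+1}$. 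The oversampling hypothesis $r \ge r_* + 2$ is precisely what guarantees that $\Omega_1$ is almost surely of full row rank and that its pseudoinverse has finite moments.

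The third step is the probabilistic estimate. Because $\UU_*$ is orthogonal, $\hat\Phi$ is again standard Gaussian, so its disjoint row blocks $\Omega_1$ and $\Omega_2$ are independent. I would condition on $\Omega_1$ and apply the Gaussian-product expectation bound (Proposition 10.1 of \citealp{halko2011finding}), $\FullExpec{\|\mathbf{M}\Omega_2 \mathbf{T}\|_2} \le \|\mathbf{M}\|_2 \|\mathbf{T}\|_\frob + \|\mathbf{M}\|_\frob \|\mathbf{T}\|_2$, with $\mathbf{M} = \SSigma_2^{2\alpha+1}$ and $\mathbf{T} = \Omega_1^\dagger$, and then take expectation over $\Omega_1$ using $\FullExpec{\|\Omega_1^\dagger\|_\frob^2} = r_*/(r - r_* - 1)$ and $\FullExpec{\|\Omega_1^\dagger\|_2} \le \e \sqrt{r}/(r - r_*)$. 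Using $\sqrt{a^2 + b^2} \le a + b$ and collecting terms reproduces exactly the bracketed factors $\bigpar{1 + \sqrt{r_*/(r - r_* - 1)}}\sigma_{r_*+1}^{2\alpha+1}$ and $\frac{\e \sqrt{r}}{r - r_*}\bigpar{\sum_{j>r_*}\sigma_j^{2(2\alpha+1)}}^{1/2}$ as an upper bound on $\FullExpec{\|(\Id - \PP_\VV)\BB\|_2}$. Finally I would move the expectation through the root in the reduction of the first step by Jensen's inequality, using concavity of $t \mapsto t^{1/(2\alpha+1)}$, to obtain $\FullExpec{\|(\Id - \PP_\VV)\mathbf{A}\|_2} \le \bigpar{\FullExpec{\|(\Id - \PP_\VV)\BB\|_2}}^{1/(2\alpha+1)}$, which is the claimed bound.

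The main obstacle is this probabilistic step: controlling $\FullExpec{\|\SSigma_2^{2\alpha+1}\Omega_2 \Omega_1^\dagger\|_2}$ requires decoupling the numerator from the pseudoinverse of $\Omega_1$ through conditioning and independence, and then bounding the moments of $\|\Omega_1^\dagger\|$ for a rectangular Gaussian. This is where the inverse-Wishart tail and the precise role of the oversampling margin $r - r_*$ enter, and where the $\e \sqrt{r}/(r-r_*)$ factor originates. The deterministic and reduction steps are comparatively mechanical once the identification $\VV = \BB\Phi$ is made.
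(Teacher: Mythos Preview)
The paper does not prove this statement; it is quoted as Corollary 10.10 of \citet{halko2011finding} in the related-work appendix purely for comparison, with no accompanying proof. Your reconstruction is correct and is exactly the original HMT argument: the reduction via their Proposition~8.6, the deterministic structural bound (their Theorem~9.1) applied to the powered matrix $\BB$, the Gaussian moment estimates (their Propositions~10.1--10.2) for $\SSigma_2^{2\alpha+1}\Omega_2\Omega_1^\dagger$, and the closing Jensen step are precisely how Corollary~10.10 is established there.
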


We can make the following remarks.
\begin{itemize}[leftmargin=*]
    \item \textbf{Range of $r$.} Their theorem holds only for for $r \leq r_* + 2$.
    \item \textbf{Mutliplicative error term.} The error term is multiplied by $r_*$. In comparison, we obtain an enworsen rate depending on $r_*^2 \sigma^2_{\max} \sigma_i^{4 \alpha} / (\sigma_{r_*}^2 \sigma_{r_*}^{4 \alpha})$.
    \item \textbf{Dependence on $\alpha$.} However, their rate of convergence is worse w.r.t. $\alpha$. Indeed, the bound is asymptotically equivalent to $\FullExpec{\sqrdnrm{\left(\mathbf{I}-\PP_{\VV}\right) \SS }_2} \leq \sigma_{r_* + 1} (1 + \sqrt{d})^{1/(2\alpha +1)}$, therefore, doing a limited development for a constant $d$ and $\alpha$ tending to infinity, they obtain $\alpha = \Omega(\frac{\sigma_{r_* + 1} \log(d)}{\epsilon})$.
    On our side, from \Cref{cor:bound_with_r_star}, we obtain $ \alpha = \Omega\bigpar{\frac{\log( \sigma_{\max} d \epsilon^{-1})}{\tcmv{\log(\sigma_{r_*}) - \log(\sigma_{r_*+1} })}} $. In the regime $\sigma_{r_* + 1} \ll 1$, we both have $\alpha =0$, otherwise, the dependence on the precision $\epsilon$ is worse than our result.
    \item \textbf{Frobenius-norm.} \citet{halko2011finding} do not provide a bound on the Frobenius-norm in the case $\alpha \neq 0$, which we do.
\end{itemize}

Building upon the power method, \citet{li2021communication} have designed \texttt{LocalPower} which computes the top-$k$ singular vectors of $\SS$ using periodic weighted averaging. Below, we recall their main theorem.

\begin{theorem*}[Theorem 1 from \citet{li2021communication}]
After  $ \alpha = \Omega(\frac{\sigma_{r_*}}{l(\sigma_{r_*} - \sigma_{r_*+1}}) \log(d \epsilon^{-1}))$ step of communication, with probability $1 - \tau^{-\Omega(r-r_* +1)} - \e^{\Omega(d)}$, we obtain$\sqrdnrm{\VV \VV^\top - \VV_* \VV_*^\top }_2 \leq \epsilon $.
\end{theorem*}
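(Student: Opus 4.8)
The plan is to recognise \texttt{LocalPower} as an instance of the \emph{noisy block power method} applied to the Gram matrix $\MM := \SS^\top \SS = \sum_\iN (\SS^i)^\top \SS^i$, whose top-$r_*$ invariant subspace is exactly $\colsp{\VV_*}$ and whose spectral gap is $\sigma_{r_*}^2 - \sigma_{r_*+1}^2$. The natural progress measure is the largest principal angle $\theta_t$ between the current iterate and $\colsp{\VV_*}$: writing $\hat{\VV}_t := \VV_*^\top \VV_t$ and splitting it into its leading rows $\hat{\VV}_{\leq r_*, t}$ and trailing rows $\hat{\VV}_{r_* <, t}$, one has $\tan\theta_t = \|\hat{\VV}_{r_* <, t}\, \hat{\VV}_{\leq r_*, t}^{\dagger}\|_2$, together with the standard identity $\|\VV_t \VV_t^\top - \VV_* \VV_*^\top\|_2 = \sin\theta_t \leq \tan\theta_t$. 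It therefore suffices to drive $\tan\theta_\alpha$ below $\epsilon$, and the whole argument reduces to a contraction analysis of this single scalar.

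First I would analyse one communication round in the idealised, fully synchronised case. If every client applied the \emph{global} map and the server orthonormalised the aggregate, then $\hat{\VV}_{\leq r_*}$ would be scaled (in the worst case) by $\sigma_{r_*}^{2l}$ and $\hat{\VV}_{r_* <}$ by at most $\sigma_{r_*+1}^{2l}$, so $\tan\theta$ would contract by the factor $\bigpar{\sigma_{r_*+1}/\sigma_{r_*}}^{2l}$ per round of $l$ local steps. The difference between this ideal and the true algorithm is a perturbation $G_t$ capturing the discrepancy between ``iterate locally, then average'' and the global map $\MM^l \VV_t$; it is nonzero precisely because local iteration does not commute with averaging. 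Here I would invoke the Davis--Kahan / Hardt--Price noisy-power-method guarantee: as long as $\|G_t\|_2$ remains below a constant fraction of $(\sigma_{r_*}^2 - \sigma_{r_*+1}^2)\cos\theta_t$, the ideal contraction survives up to an additive floor proportional to $\|G_t\|_2 / (\sigma_{r_*}^2 - \sigma_{r_*+1}^2)$.

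Next I would handle the two probabilistic ingredients, which are exactly the two failure terms in the statement. For the initialisation, a Gaussian $\VV_0$ becomes, after the orthogonal change of basis $\hat{\Phi} := \VV_*^\top \VV_0$ (same distribution as $\VV_0$), a matrix whose leading block $\hat{\Phi}_{\leq r_*}$ is a well-conditioned Gaussian block and whose trailing block $\hat{\Phi}_{r_* <}$ has operator norm $O(\sqrt{d})$; applying \Cref{app:prop:whishart_concentration_ineq} to the trailing block and the condition-number tail of \Cref{app:prop:whishart_concentration_tail_ineq} to the leading block yields $\tan\theta_0 = O(\sqrt{d})$ outside an event of probability $\tau^{-\Omega(r - r_* + 1)} + \e^{-\Omega(d)}$. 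For the perturbation, I would bound $\|G_t\|_2$ in terms of the inter-client heterogeneity and the number of local steps $l$, checking that it stays inside the admissible noise radius required at each round.

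Unrolling the recursion then gives $\tan\theta_\alpha \leq \bigpar{\sigma_{r_*+1}/\sigma_{r_*}}^{2 l \alpha}\tan\theta_0 + (\text{noise floor})$, and imposing the right-hand side $\leq \epsilon$ with $\tan\theta_0 = O(\sqrt{d})$ produces $\alpha = \Omega\bigpar{\ffrac{\log(d\epsilon^{-1})}{l\,\log(\sigma_{r_*}/\sigma_{r_*+1})}}$; the elementary bound $\log(\sigma_{r_*}/\sigma_{r_*+1}) \geq (\sigma_{r_*} - \sigma_{r_*+1})/\sigma_{r_*}$ then rewrites this as the stated $\Omega\bigpar{\ffrac{\sigma_{r_*}}{l(\sigma_{r_*}-\sigma_{r_*+1})}\log(d\epsilon^{-1})}$. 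I expect the main obstacle to be the control of the averaging bias $G_t$: unlike exact global power iteration, periodically averaging \emph{locally} iterated blocks is genuinely biased under data heterogeneity, and proving that this bias remains below the eigengap-dependent noise tolerance of the noisy power method, uniformly over all $\alpha$ rounds, is the delicate and technically demanding core of the argument.
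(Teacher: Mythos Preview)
The paper does not prove this statement. It is presented in \Cref{app:sec:deeper_related_work} as a direct quotation of Theorem~1 from \citet{li2021communication}, recalled solely for the purpose of contrasting \texttt{LocalPower}'s guarantees with the paper's own results (see \Cref{tab:related_work_power} and the remarks following the theorem). There is therefore no proof in the paper to compare your proposal against.

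That said, your sketch is a faithful high-level outline of the argument actually used in \citet{li2021communication}: rotating into the eigenbasis of $\SS^\top\SS$, tracking $\tan\theta_t$ via the split $\hat{\VV}_{\leq r_*,t}$ / $\hat{\VV}_{r_*<,t}$, invoking the Hardt--Price noisy-power-method framework to absorb the bias from periodic averaging, and handling the random initialisation via Gaussian singular-value concentration to produce the $\tau^{-\Omega(r-r_*+1)}$ and $\e^{-\Omega(d)}$ failure probabilities. Your identification of the averaging bias $G_t$ as the crux is also correct. The only caveat is that your sketch remains at the level of a plan; the quantitative control of $\|G_t\|_2$ uniformly over rounds, which you flag as the main obstacle, is indeed where most of the technical work in the original reference lies and is not something that can be waved through.
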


We can make the following remarks.
\begin{itemize}[leftmargin=*]
    \item \textbf{Asymptoticity of  $\alpha$.} Authors give results only with an asymptotic $\alpha = \Omega\bigpar{\frac{\sigma_{r_*} \log(d \epsilon^{-1})}{\tcmv{l}(\tcmr{\sigma_{r_*} - \sigma_{r_*+1}})} }$  that can not be equal to zero in the regime where $\sigma_{r_* + 1} \ll 1$. On our side, we obtain $ \alpha = \Omega\bigpar{\frac{\log( \sigma_{\max} d \epsilon^{-1})}{\tcmv{\log(\sigma_{r_*}) - \log(\sigma_{r_*+1} })}} $ which allows $\alpha = 0$ in this regime. Note that in the regime $\sigma_{r_*} \approx \sigma_{r_* 1 }$, the two bounds are equivalent.
    \item \textbf{Approximating the right-side eigenvectors.} The theorem is only on the $2$-norm error of approximating the real left singular vector. In our paper, we are interested in the simplest goal of factorizing $\SS$ and providing a rate on the Frobenius norm.
\end{itemize}

\end{document}